\definecolor{light-gray}{gray}{0.9}
\definecolor{darkblue}{rgb}{0.0,0.0,0.65}
\definecolor{darkred}{rgb}{0.2,0.0,0.0}
\setlist{nosep,leftmargin=0.2in}
\DeclareMathOperator*{\argmin}{arg\,min}
\newcommand{\inp}[2]{\left\langle #1,#2 \right\rangle}
\newcommand{\R}{\mathbb{R}}
\newcommand*{\E}{\mathbb{E}}
\newcommand{\eps}{\epsilon}
\newcommand{\be}[1]{L_{{ \tiny #1}}}  
 \newcommand{\vv}{{\bm v}}
\newcommand{\x}{{\bm x}}
\newcommand{\per}{{\bm g}}
\newcommand{\ttr}{\overline{\mathsf{tr}}}
\newcommand{\Xstar}{\mathcal{X}^\star}
\newcommand{\partr}{{\bm \nabla}_t}
\newcommand{\proj}{\mathrm{Proj}}
\newcommand{\D}{\mathrm{d}}\newcommand{\loss}{ f}
\newcommand{\mm}{p}
\newcommand{\pp}{{\bm u}}
\newcommand{\consta}{L_0}
\newcommand{\constb}{L_1}
\newcommand{\constc}{L_2}
\newcommand{\xh}{\widehat{\x}}
\newcommand{\oo}[1]{\mathcal{O}\left( #1\right)}
\newcommand{\ooo}[1]{{o}\left( #1\right)}
\newcommand{\tth}[1]{{\Theta}\left( #1\right)}
\newcommand{\om}[1]{ {\Omega}\left(#1\right)}
\DeclareMathOperator{\tr}{tr} 
\newcommand{\norm}[1]{\left\|#1\right\|}
\newtheorem{theorem}{Theorem} 
\newtheorem{definition}{Definition}
\newtheorem{lemma}{Lemma}
\newtheorem{remark}{Remark}
\newtheorem{setting}{Setting}
\newtheorem{example}{Example}
\icmltitlerunning{How to Escape Sharp Minima}
\begin{document}

\twocolumn[
\icmltitle{How to Escape Sharp Minima with Random Perturbations}

% It is OKAY to include author information, even for blind
% submissions: the style file will automatically remove it for you
% unless you've provided the [accepted] option to the icml2024
% package.

% List of affiliations: The first argument should be a (short)
% identifier you will use later to specify author affiliations
% Academic affiliations should list Department, University, City, Region, Country
% Industry affiliations should list Company, City, Region, Country

% You can specify symbols, otherwise they are numbered in order.
% Ideally, you should not use this facility. Affiliations will be numbered
% in order of appearance and this is the preferred way.
%\icmlsetsymbol{equal}{*}

\begin{icmlauthorlist}
\icmlauthor{Kwangjun Ahn}{mit,msr}
\icmlauthor{Ali Jadbabaie}{mit}
\icmlauthor{Suvrit Sra}{mit,tu} 
%\icmlauthor{}{sch}
%\icmlauthor{}{sch}
\end{icmlauthorlist}

\icmlaffiliation{mit}{MIT}
\icmlaffiliation{tu}{TU Munich}
\icmlaffiliation{msr}{Microsoft Research}

\icmlcorrespondingauthor{Kwangjun Ahn}{kjahn@mit.edu} 

% You may provide any keywords that you
% find helpful for describing your paper; these are used to populate
% the "keywords" metadata in the PDF but will not be shown in the document
\icmlkeywords{Machine Learning, ICML}

\vskip 0.3in
]

% this must go after the closing bracket ] following \twocolumn[ ...

% This command actually creates the footnote in the first column
% listing the affiliations and the copyright notice.
% The command takes one argument, which is text to display at the start of the footnote.
% The \icmlEqualContribution command is standard text for equal contribution.
% Remove it (just {}) if you do not need this facility.

\printAffiliationsAndNotice{}  % leave blank if no need to mention equal contribution
%\printAffiliationsAndNotice{\icmlEqualContribution} % otherwise use the standard text.

\begin{abstract}
Modern machine learning applications have witnessed the remarkable success of optimization algorithms that are designed to find flat minima. Motivated by this design choice, we undertake a formal study that  (i) formulates the notion of flat minima, and (ii) studies the complexity of finding them. Specifically, we adopt the trace of the Hessian of the cost function as a measure of flatness, and use it to formally define the notion of approximate flat minima. 
Under this notion, we then analyze algorithms that find approximate flat minima efficiently. 
For general cost functions, we discuss a gradient-based algorithm that finds an approximate flat local minimum efficiently. The main component of the algorithm is to use gradients computed from randomly perturbed iterates to estimate a direction that leads to flatter minima.
For the setting where the cost function is an empirical risk over training data, we present a faster algorithm that is inspired by a recently proposed practical algorithm called sharpness-aware minimization, supporting its success in practice.
\end{abstract}

\section{Introduction}\label{sec:intro}

In modern machine learning applications, the training loss function $f:\R^d \to \R$ to be optimized often has a continuum of local/global minima, and the central question is which minima lead to good prediction performance. 
Among many different properties for minima, ``flatness'' of minima has been a promising candidate extensively studied in the literature~\citep{hochreiter1997flat,keskar2017large,dinh2017sharp, dziugaite2017computing,neyshabur2017exploring,sagun2017empirical,yao2018hessian,chaudhari2019entropy,  he2019asymmetric, mulayoff2020unique,tsuzuku2020normalized,xie2021diffusion}.

Recently, there has been a resurgence of interest in flat minima due to various advances in  both empirical and theoretical domains. Motivated by the extensive research on flat minima, this work undertakes a formal study that
\begin{itemize}
\item[\emph{(i)}] delineates a clear definition for flat minima, and 
\item[\emph{(ii)}] studies the upper complexity bounds of finding them.
\end{itemize}
We begin by emphasizing the significance of flat minima, based on recent advancements in the field.
\subsection{Why Flat Minima?}
\label{sec:flat}

Several recent optimization methods that are explicitly designed to find flat minima  have achieved substantial empirical success \citep{chaudhari2019entropy,izmailov2018averaging, foret2020sharpness, wu2020adversarial, zheng2021regularizing,norton2021diametrical,kaddour2022flat}.
One notable example is sharpness-aware minimization (SAM)~\citep{foret2020sharpness}, which has shown significant improvements in prediction performance of deep neural network models for image classification problems \citep{foret2020sharpness}  and language processing problems~\citep{bahri2022sharpness}. 
Furthermore, research by \citet{liu2023same} indicates that for language model pretraining, the flatness of minima serves as a more reliable predictor of model efficacy than the pretraining loss itself, particularly when the loss approaches its minimum values.

Complementing the empirical evidence, recent theoretical research underscores the importance of flat minima as a desirable attribute for optimization. Key insights include:
\begin{itemize}
\item \emph{Provable generalization of flat minima.} For overparameterized models, research by \citet{ding2022flat} demonstrates that flat minima correspond to the true solutions in low-rank matrix recovery tasks, such as matrix/bilinear sensing, robust PCA, matrix completion, and regression with a single hidden layer neural network, leading to better  generalization. 
This is further extended by \citet{gatmiry2023what} to deep linear networks learned from linear measurements. 
In other words, in a range of nonconvex problems with multiple minima,  flat minima yield superior predictive performance.  

\item \emph{Benifits of flat minima in pretraining.} Along with the empirical validations, \citet{liu2023same} prove that in simplified masked language models, flat minima correlate with the most generalizable solutions.

\item  \emph{Inductive bias of algorithms towards flat minima.} It has been proved that various practical optimization algorithms inherently favor flat minima. 
This includes stochastic gradient descent (SGD)~\citep{blanc2020implicit,wang2022large,damian2021label, li2022what, liu2023same}, gradient descent (GD) with large learning rates~\cite{arora2022understanding,damian2022self,ahn2023learning}, sharpness-aware minimization (SAM)~\citep{bartlett2022dynamics,wen2022does,compagnoni2023sde,dai2023the}, and a communication-efficient variant of SGD~\citep{gu2023why}. 
The practical success of these algorithms indicates that flatter minima might be linked to better generalization properties.
\end{itemize}

Motivated by such recent advances, the main goal of this work is to initiate a formal study of the behavior of algorithms for finding flat minima, especially an understanding of their upper complexity bounds.

\subsection{Overview of Our Main Results}

In this work, we formulate a particular notion of flatness for minima and design efficient algorithms for finding them.
We adopt the trace of Hessian of the loss $\tr(\nabla^2 f(\x))$ as a measure of ``flatness,'' where lower values of the trace imply flatter regions within the loss landscape. The reasons governing this choice are many, especially its deep relevance across a rich variety of research, as summarized in \autoref{sec:trace}. With this metric, we characterize a flat minimum as a local minimum where any local enhancement in flatness would result in an increased cost, effectively delineating regions where the model is both stable and efficient in terms of performance. 
More formally, we define the notion of $(\eps,\eps')$-flat minima in \autoref{def:flat_min}. 
See \autoref{sec:define}   for precise details.

Given the notion of flat minima, the main goal of this work is to design algorithms that find an approximate flat minimum efficiently.  
At first glance, the goal of finding a flat minimum might seem computationally expensive because minimizing $\tr(\nabla^2 f)$ would require information about second or higher derivatives.
Notably,  this work demonstrates that one can reach a flat minimum  using only first derivatives (gradients).
\begin{itemize}
\item In \autoref{sec:RS}, we present a gradient-based algorithm called the randomly smoothed perturbation algorithm (\autoref{algo:RS}) which  finds a $(\eps,\sqrt{\eps})$-flat minimum within $\oo{\eps^{-3}}$ iterations for general costs without structure (\autoref{thm:RS}). The main component of the algorithm is to use gradients computed from randomly perturbed iterates to  estimate a direction that leads to flatter minima.

\item  In \autoref{sec:SAM}, we consider the setting where  $f$ is the training loss over a training data set and the initialization is near the set of global minima, motivated by  overparametrized models in practice~\citep{zhang2021understanding}.
In such a setting, we present another gradient-based algorithm called the sharpness-aware perturbation algorithm (\autoref{algo:SAM}), inspired by sharpness-aware minimization (SAM) \citep{foret2020sharpness}.
We show that this algorithm  finds a $(\eps,\sqrt{\eps})$-flat minimum within $\oo{d^{-1}\epsilon^{-2}(1\vee\frac{1}{d^3\eps})}$ iterations (\autoref{thm:SAM}) -- here $d$ denotes the dimension of the domain. This demonstrates that a practical algorithm like SAM can find flat minima much faster than the randomly smoothed perturabtion algorithm in  high dimensional settings. 

\end{itemize}

See \autoref{tab:results} for a high level summary of our results.

\renewcommand{\arraystretch}{1.5}
\begin{table*}[t]
\centering
\begin{tabular}
{|>{\centering\arraybackslash}m{1.4in} |>{\centering\arraybackslash}m{2.2in} |>{\centering\arraybackslash}m{2in} | }	
\hline 
{Setting}  &  {Iterations for $(\eps,\sqrt{\eps})$-flat minima (\autoref{def:flat_min})}&  {Algorithm}  \\
\hhline{|===|}
\multirow{1}{1in}{\centering General loss} 
&   $\bm{\oo{\epsilon^{-3}}}$ {\bf gradient queries} (\autoref{thm:RS}) &   {\bf Randomly Smoothed Perturbation} (\autoref{algo:RS})   \\
\hhline{|===|}
\multirow{1}{1in}{\centering Training loss (\autoref{def:ER})}  
&   \makecell{$\bm{ \oo{d^{-1}\epsilon^{-2}(1\vee\frac{1}{d^3\eps})}}$ \\ {\bf gradient queries} (\autoref{thm:SAM})} & {\bf Sharpness-Aware Perturbation}  (\autoref{algo:SAM})    \\
\hline
\end{tabular}
\caption{A high level summary of the main results with emphasis on the dependence on $d$ and $\epsilon$.   
}
\label{tab:results}
\end{table*}

\section{Formulating Flat Minima}
\label{sec:define}

In this section, we formally define the notion of flat minima.

\subsection{Measure of Flatness} 
\label{sec:trace}

Within  the literature reviewed in \autoref{sec:flat},  a recurring metric for evaluating  ``flatness'' in loss landscapes is the trace of the Hessian matrix of the loss function $\tr(\nabla^2 f(\x))$. 
This metric intuitively reflects the curvature of the loss landscape around minima, where the Hessian matrix $\nabla^2 f(\x)$ is expected to be positive semi-definite. Consequently, lower values of the trace indicate regions where the loss landscape is flatter.
For simplicity, we will refer to this metric as \emph{the trace of Hessian}. 
Key insights from recent research include:

\begin{figure} 
\centering
\centering
\includegraphics[width=0.5\textwidth]{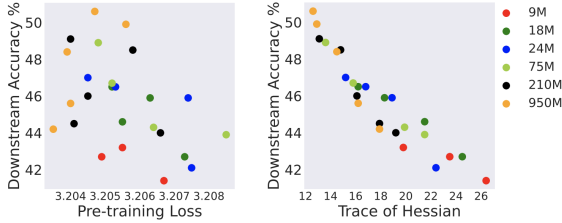}
\caption{\footnotesize{Figure from \citet{liu2023same}. They pretrain language models for probabilistic context-free grammar with different optimization methods, and compare their downstream accuracy. As shown in the plot, the trace of Hessian is a better indicator of the performance than the pretraining loss itself.}}
\label{fig:liu}
\end{figure}

\begin{figure} 
\centering
\includegraphics[width=0.3\textwidth]{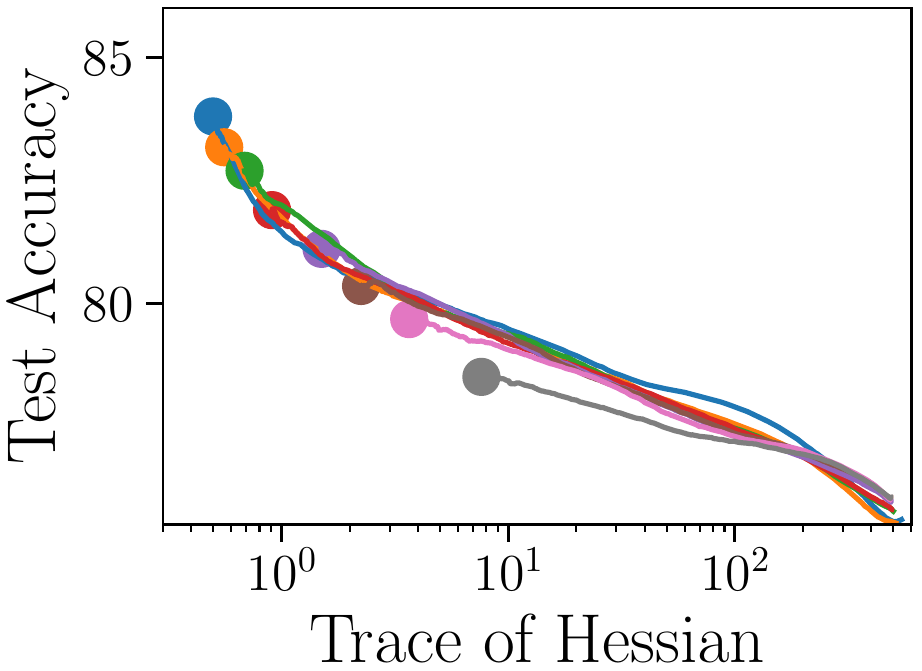}
\caption{\footnotesize{Figure from \citet{damian2021label}. For training ResNet-18 on CIFAR 10, they measure the trace of Hessian across the iterates of SGD with label noise and have observed an inspiring relation between $\tr(\nabla^2 f(\x_t))$ and prediction performance.}}
\label{fig:damian}
\end{figure}

\begin{itemize}
\item \emph{Overparameterized low-rank matrix recovery.} In this domain, the trace of Hessian has been identified as the \emph{correct notion of  flatness}.
\citet{ding2022flat} show that the most desirable minima, which correspond to ground truth solutions, are those with the lowest trace of Hessian values.
This principle is also applicable to the analysis of deep linear networks, as highlighted by \citet{gatmiry2023what}, where the same measure plays a pivotal role.
\item \emph{Language model pretraining.}
The importance of the trace of Hessian extends to language model pretraining, as demonstrated by \citet{liu2023same}.
More specifically, \citet{liu2023same} conduct an insightful experiment (see \autoref{fig:liu}) that demonstrates the effectiveness of the trace of Hessian as a good measure of model performance.
This observation is backed by their theoretical results for simple language models.

\item \emph{Model output stability.} Furthermore,  the work~\citep{ma2021linear} links the trace of Hessian to the stability of model outputs in deep neural networks relative to input data variations. This relationship underscores the significance of the trace of Hessian in improving model generalization and enhancing adversarial robustness.

\item \emph{Practical optimization algorithms.} Lastly, various practical optimization algorithms are shown to be inherently biased toward achieving lower values of the trace of Hessian. This includes SGD with label noise, as discussed in works by \citet{blanc2020implicit,damian2021label,li2022what}, and without label noise for the language modeling pretraining \citep{liu2023same}. 
In particular, \citet{damian2021label} conduct an inspiring experiment showing a strong correlation between the trace of Hessian and the prediction performance of models (see \autoref{fig:damian}).
Additionally, stochastic SAM is proven to prefer lower trace of Hessian values~\citep{wen2022does}.
\end{itemize}

\begin{remark}[{\bf Other notions of flatness?}] 
\label{rmk:other}
Perhaps, another popular notion of flat minima in the literature is the maximum eigenvalue of Hessian $\lambda_{\max}(\nabla^2 f(\x))$. However, recent empirical works~\citep{kaur2023maximum,andriushchenko2023modern}  have shown that the maximum eigenvalue of Hessian has limited correlation with the goodness of models (\emph{e.g.}, generalization). 
On the other hand, as we detailed above, the trace of Hessian has been consistently brought up as a promising candidate, both {\bf theoretically and empirically}. Hence, we adopt the trace of Hessian as the measure of flatness throughout.

\end{remark}

\subsection{Formal Definition of Flat Minima}
Motivated by the previous works discussed above, we  adopt the trace of Hessian as the measure of flatness.
Specifically, we consider the (normalized) trace of Hessian $\tr(\nabla^2 f(\x))/\tr(I_d) =\tr(\nabla^2 f(\x))/d$. Here we use the normalization to match the scale of flatness with the loss. For simplicity, we henceforth use the following notation: 
\begin{align}
\boxed{\ttr(x) \coloneqq  \frac{\tr(\nabla^2 f(\x))}{\tr(I_d)} = \frac{ \tr(\nabla^2 f(\x))}{d}\,.}
\end{align} 
The reason we consider the normalized trace is to match its scale with that of loss $f(\x)$: the trace is in general the sum of $d$ second derivatives, so it's scale is $d$ times of that of $f(\x)$. Also, the normalization can be potentially beneficial in practice where models have different sizes. Larger models would typically have a higher trace of Hessian due to having more parameters, and the normalization could put them on the same scale.

Given this choice, our notion of flat minima at a high level is \emph{a local minimum (of $f$) for which one cannot locally decrease $\ttr$  without increasing the cost $f$.} In particular, this concept becomes nontrivial when the set of local minima is connected (or locally forms a manifold), which is indeed the case for the over-parametrized neural networks, as  shown empirically in \citep{draxler2018essentially,garipov2018loss} and theoretically in \citep{cooper2021global}.

One straightforward way to define a (locally) flat minimum is the following: a local minimum which is also a local minimum of $\ttr$.
However, this definition is not well-defined as the set of local minima of $f$ can be disjoint from that of $\ttr$ as shown in the following example.
\begin{example}
Consider a  two-dimensional  function $f: (x_1,x_2) \mapsto (x_1x_2-1)^2$. 
Then it holds that 
\begin{align}
\nabla f(\x)
&=  {\footnotesize 2(x_1x_2-1)  \begin{bmatrix} x_2 \\ x_1 \end{bmatrix}}\quad \text{and}\\
\nabla^2 f(\x)
&=  {\footnotesize \begin{bmatrix}  2x_2^2 & 4x_1x_2-2\\
4x_1x_2- 2 & 2x_1^2\end{bmatrix}}\,.
\end{align}
Hence, the set of minima is $\Xstar= \{ (x_1,x_2)~:~ x_1x_2=1\}$ and $\ttr(\x) = (2x_1^2 +2x_2^2)/2 =x_1^2 + x_2^2$.
The unique minimum of $\ttr$ is $(0,0)$ which does not intersect with  $\Xstar$.
When restricted to $\Xstar$,  $\ttr$ achieve its minimum at $(1,1)$ and $(-1,-1)$, so those two points are flat minima.
\end{example}
Hence,  we consider the local optimality of $\ttr$ restricted to the set of local minima $\Xstar$.  
In practice, finding local minima with respect to  $\ttr$ might be too stringent, so as an initial effort,  we set our goal to find \emph{a local minimum that is also a stationary point of  $\ttr$ restricted to the set of local minima}. 
To formalize this, we introduce the limit map under the gradient flow, following \citep{li2022what,arora2022understanding,wen2022does}.
\begin{definition}[Limit point under gradient flow] \label{def:Phi}
Given a point $\x$, let $\Phi(\x)$ be the limiting point of the gradient flow on $f$ starting at $\x$. 
More formally,  letting $\x(t)$ be the iterate at time $t$ of the gradient flow starting at $\x$, i.e., $\x(0)=\x$ and $\dot \x(t) = -\nabla f(\x(t))$,   $\Phi(\x)$ is defined as $\lim_{t \to \infty} \x(t)$. 
\end{definition}

The intuition behind such a definition is the following. Since we are focusing on the first-order optimization algorithms that has access to gradients of $f$, the natural notion of optimality is the local optimality. In other words, we want to ensure that at a flat minimum, locally deviating away from the minimum will either increase the loss or the trace of Hessian. This condition precisely corresponds to $[\nabla(\overline{tr}\circ \Phi)](x^\star) =0$, since $\Phi$ maps each point $x$ to its "closest" local minimum.

When $\x$ is near a set of local minima, $\Phi(\x)$ is approximately equal to the projection onto the local minima set.
Thus, the trace of Hessian $\ttr$ along the manifold can be captured by the functional $\ttr(\Phi(x))$.
Therefore, we say a local minimum $\x^\star$ is a stationary point of $\ttr$ restricted to $\Xstar$ if
\begin{align}
\nabla \left[\ttr( \Phi(\x^\star)) \right]  = \partial \Phi(\x^\star) \nabla  \ttr (\Phi(\x^\star))  = \mathbf{0}\,.
\end{align}
In particular, if $ \nabla \left[\ttr( \Phi(\x^\star)) \right]\neq \mathbf{0}$, moving along the direction of $ -\nabla \left[\ttr( \Phi(\x^\star)) \right]$ will locally decrease the value $\ttr(\Phi(\x))$ while staying within the set of minima, hence leading to a flatter minimum. 
Moreover,  if $\x^\star$ is  an isolated local minimum, then $\partial \Phi(\x^\star) =\mathbf{0}$, and hence  $ \nabla \left[\ttr( \Phi(\x^\star))) \right]  = \mathbf{0}$.
This leads to the following definition. 

\begin{definition}[{\bf Flat local minima}]
\label{def:flat_exact}
We say a point $\x$ is a flat local minimum if it is a local minimum, \emph{i.e.}, $\x\in\Xstar$, and satisfies 
\begin{align} \label{eq:flatness_condition}
\nabla \left[\ttr( \Phi(\x)) \right]  = \partial \Phi(\x) \nabla  \ttr (\Phi(\x))  = \mathbf{0}\,. \end{align} 
\end{definition}

Again, the intuition behind \autoref{def:flat_exact} is that we want to ensure that at a flat minimum, locally deviating away from the minimum will either increase the loss or the trace of Hessian. This condition precisely corresponds to  \eqref{eq:flatness_condition}, since $\Phi$ maps each point $x$ to its "closest" local minimum.

Having defined the notion of flat local minima, we define an approximate version of them such that we can discuss the iteration complexity of finding them. 

\begin{definition} [{\bf $(\eps,\eps')$-flat local minima}] \label{def:flat_min}
We say a point $\x$ is an $(\eps,\eps')$-flat local minimum if for $\x^\star = \Phi(\x)$, it satisfies
\begin{align}
\norm{\x-\x^\star}\leq \eps \quad \text{and} \quad \norm{ \left[\nabla\left(\ttr \circ \Phi\right) \right] (\x^\star) }  \leq \eps'\,.
\end{align}
In other words, a $(0,0)$-flat local minimum is a flat local minimum.
\end{definition}

\section{Randomly Smoothed Perturbation Escapes Sharp Minima} 
\label{sec:RS}

In this section, we present a gradient-based algorithm for finding an approximate flat minimum. We first discuss the setting for our analysis.

In order for our notion of flat minima  (\autoref{def:flat_min})  to be well defined, we assume that the loss function is four times continuously differentiable near the local minima set $\Xstar$.
More formally, we make the following assumption about the loss function.

\begin{restatable}[Loss near minima]{assumption}{aslocal} \label{as:local}
There exists $\zeta>0$ such that  within $\zeta$-neighborhood of the set of local minima $\Xstar$, the following properties hold:
\begin{enumerate}
\item[(a)] $f$ is four-times continuously differentiable.
\item[(b)] The limit map under gradient flow $\Phi$ (\autoref{def:Phi}) is well-defined and is twice Lipschitz differentiable. Also, $\Phi(\x) \in \Xstar$ and  the gradient flow starting at $\x$ is contained within the $\zeta$-neighborhood of $\Xstar$.
\item[(c)] The  Polyak--\L{}ojasiewicz (PL) inequality  holds locally, i.e., $ f(\x)-f(\Phi(\x)) \leq \frac{1}{2\alpha } \norm{\nabla f(\x)}^2$.
\end{enumerate}
\end{restatable}
It fact, the last two conditions (b), (c) are  consequences of $f$ being four-times continuously differentiable~\citep[Appendix B]{arora2022understanding}.
We include them for concreteness.

We also discuss a preliminary step for our analysis.
Since the question of finding candidates for approximate local minima (or second order stationary points) is well-studied, thanks to the  vast literature on the topic over the last decade~\citep{ge2015escaping,agarwal2017finding,carmon2018accelerated,fang2018spider, jin2021nonconvex}, we do not further explore it, but single out the question of seeking flatness by assuming that the initial iterate $\x_0$ is already close to the set of local minima $\Xstar$.
For instance, assuming that the loss $f$ satisfies strict saddle properties~\citep{ge2015escaping, jin2017escape},
one can find a point $\x_0$ that satisfies $\norm{\nabla f(\x_0)} \le \oo{\eps}$ within $\tilde{\mathcal{O}}(\eps^{-2})$iterations.
Now thanks to \autoref{as:local}, since we assume $f$ to be four-times continuously differentiable, it follows that $\norm{\x_0 -\Phi(\x_0)}\leq \oo{\norm{\nabla f(\x_0)} } \leq \oo{\eps}$.
Hence, we will often start our analysis with the initialization that is sufficiently close to the set of local minima $\Xstar$. 

We also define the following notation, which we will utilize throughout.
\begin{definition}[Projecting-out operator]
For two vectors ${\bm u},{\bm v}$,  $\proj^\perp_ {\bm u} {\bm v}$ is the ``projecting-out'' operator, i.e., 
\begin{align}
\proj^\perp_{\bm u} {\bm v} \coloneqq {\bm v} - \inp{\frac{\bm u}{\norm{{\bm u}}}}{{\bm v}} \frac{{\bm u}}{\norm{{\bm u}}}\,.
\end{align}  
\end{definition}

\subsection{Main Result}

Under this setting, we present  a gradient-based algorithm for finding approximate flat minima and its theoretical guarantees.
Our proposed algorithm is called  the randomly smoothed perturbation algorithm (\autoref{algo:RS}).
The main component of the algorithm is the perturbed gradient step that is employed whenever the gradient norm is smaller than a tolerance $\eps_0$:
\begin{align} \label{eq:perturb step}
\x_{t+1} &\leftarrow \x_t - \eta \left(\nabla f (\x_t) + \vv_t  \right), \quad \text{where}\\
\vv_t &\coloneqq \proj^\perp_{\nabla f(\x_t)} \nabla f(\x_t + \rho  \per_t )
\end{align} 
Here $\per_t \sim\mathrm{Unif}(\mathbb{S}^{d-1})$ is a random unit vector.
At a high level, \eqref{eq:perturb step} adds a perturbation direction $\vv_t$ to the ordinary gradient step, where the perturbation direction $\vv_t$ is computed using gradients at a randomly perturbed iterate $\x_t + \rho  \per_t$ and then projecting out the gradient $\nabla f(\x_t)$.
The gradient of a randomly perturbed iterate $\nabla f(\x_t + \rho  \per_t)$ can be also interpreted as the (stochastic) gradient of  widely known \emph{randomized smoothing}  of $f$ (hence its name ``randomly smoothed perturbation'')---a widely known technique for nonsmooth optimization~\citep{duchi2012randomized}. 
In some sense, this work discovers a new property of randomized smoothing for nonconvex optimization: \emph{randomized smoothing seeks flat minima!}
We now present the theoretical guarantee of \autoref{algo:RS}. 

\begin{algorithm}
\caption{Randomly Smoothed Perturbation  }\label{algo:RS}
\begin{algorithmic}
\renewcommand{\algorithmicrequire}{\textbf{Input: }}
\renewcommand{\algorithmicensure}{\textbf{Output: }}
\REQUIRE {$\x_0$, learning rates $\eta, \eta'$, perturbation radius $\rho$, tolerance $\eps_0$, the number of steps $T$.} 
\FOR{$t = 0, 1, \ldots, T-1$}
\IF{$\norm{\nabla f(\x_t)} \le \epsilon_0$}
\STATE $\x_{t+1} \leftarrow \x_t - \eta \left(\nabla f (\x_t) + \vv_t  \right)$, 
\STATE \quad\quad\quad \quad where $\vv_t \coloneqq \proj^\perp_{\nabla f(\x_t)} \nabla f(\x_t + \rho  \per_t )$ and $\per_t \sim \mathrm{Unif}(\mathbb{S}^{d-1})$
\ELSE
\STATE $\x_{t+1} \leftarrow \x_t - \eta'  \nabla f (\x_t)    $
\ENDIF
\ENDFOR 
\RETURN $\xh$ uniformly at random\footnotemark{} from $\{\x_1, \dots, \x_T \}$
\end{algorithmic}
\end{algorithm}  
\footnotetext{The uniformly chosen iterate is for the sake of analysis, and it's a standard approach often used in convergence to stationary points analysis. See, \emph{e.g.}, \citep{ghadimi2013stochastic,reddi2016stochastic}.}
\begin{theorem} \label{thm:RS}
Let \autoref{as:local} hold and $f$ have $\beta$-Lipschitz gradients.  Let the target accuracy $\eps>0$ be chosen sufficiently small, and  $\delta\in(0,1)$.
Suppose that $\x_0$ is $\zeta$-close to $\Xstar$.
Then,  the randomly smoothed perturbation algorithm  (\autoref{algo:RS}) with parameters  $\eta =\oo{\delta\eps}$, $\eta' =1/\beta$, $\rho=\oo{\delta \sqrt{\eps}}$,  $\eps_0=\oo{\delta^{1.5}\eps}$  returns an $(\eps,\sqrt{\eps})$-flat minimum with probability at least $1-
\oo{\delta}$ after $T=\oo{\eps^{-3}\delta^{-4}}$ iterations.    
\end{theorem}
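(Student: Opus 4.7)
I would view the small-gradient branch of \autoref{algo:RS} as a biased stochastic descent on the potential $\ttr\circ\Phi$ restricted to a neighborhood of $\Xstar$, with the large-gradient branch handled by the standard descent lemma. The starting point is a central identity: a fourth-order Taylor expansion of $\nabla f(\x_t + \rho \per_t)$ combined with $\E[\per_t]=\mathbf{0}$, $\E[\per_t\per_t^\top]=I_d/d$, and the vanishing of all odd moments of $\per_t$ yields
\begin{align}
\E_{\per_t}\!\bigl[\nabla f(\x_t+\rho\per_t)\bigr] = \nabla f(\x_t) + \tfrac{\rho^2}{2}\nabla\ttr(\x_t) + \oo{\rho^4},
\end{align}
whence $\E[\vv_t] = \tfrac{\rho^2}{2}\proj^\perp_{\nabla f(\x_t)}\nabla\ttr(\x_t) + \oo{\rho^4}$. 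In expectation the perturbation step thus moves along $-\nabla\ttr$ projected away from $\nabla f$; once $\x_t$ is close to $\Xstar$, $\nabla f(\x_t)$ lies approximately in the normal space to $\Xstar$ at $\Phi(\x_t)$, so this projection approximates the tangential component of $\nabla\ttr$, which by the chain rule $\nabla[\ttr\circ\Phi] = \partial\Phi^\top(\nabla\ttr\circ\Phi)$ matches $\nabla[\ttr\circ\Phi](\Phi(\x_t))$ up to lower-order terms.

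Next I would track two bookkeeping quantities. By \autoref{as:local}, the PL inequality, and an inductive argument, every small-gradient iterate satisfies $\norm{\x_t-\Phi(\x_t)}=\oo{\eps_0}\le\eps$, which supplies the closeness half of the $(\eps,\sqrt{\eps})$-flat minimum definition. Each large-gradient step ($\norm{\nabla f(\x_t)}>\eps_0$) decreases $f$ by at least $\eps_0^2/(2\beta)$ via the standard descent lemma, so such steps number at most $\oo{\eps_0^{-2}} = \oo{\delta^{-3}\eps^{-2}}$, negligible compared to the claimed total $T$. For each small-gradient step, Taylor expanding $\ttr\circ\Phi$ around $\x_t$ using the twice-Lipschitz differentiability of $\Phi$, noting that $\partial\Phi(\x_t)$ approximately projects onto the tangent space of $\Xstar$ at $\Phi(\x_t)$ while $\nabla f(\x_t)$ is approximately normal, and substituting the bias identity above, I would obtain a descent inequality of the form
\begin{align}
\E\!\bigl[(\ttr\circ\Phi)(\x_{t+1})\bigr] - (\ttr\circ\Phi)(\x_t) \le -c\,\eta\rho^2\,\norm{\nabla[\ttr\circ\Phi](\Phi(\x_t))}^2 + \oo{\eta^2\rho^2 + \eta\rho^4 + \eta\eps_0^2}.
\end{align}

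Telescoping this over the $\Theta(T)$ small-gradient steps and dividing by $T\eta\rho^2$ bounds the time average of $\E\norm{\nabla[\ttr\circ\Phi](\Phi(\x_t))}^2$ by $\oo{(T\eta\rho^2)^{-1} + \eta + \rho^2 + \eps_0^2/\rho^2}$. Plugging in $\eta=\Theta(\delta\eps)$, $\rho=\Theta(\delta\sqrt{\eps})$, $\eps_0=\Theta(\delta^{1.5}\eps)$ makes each of the latter three error terms $\oo{\delta\eps}$, while choosing $T=\Theta(\delta^{-4}\eps^{-3})$ makes the leading $(T\eta\rho^2)^{-1}$ term $\oo{\delta\eps}$ as well. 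A Markov inequality on the uniformly random output $\xh$ then upgrades this average bound to $\norm{\nabla[\ttr\circ\Phi](\Phi(\xh))}^2\le\eps$, equivalently $\norm{\nabla[\ttr\circ\Phi](\Phi(\xh))}\le\sqrt{\eps}$, with probability at least $1-\oo{\delta}$, which combined with the closeness bound is exactly an $(\eps,\sqrt{\eps})$-flat minimum. The main technical obstacle I anticipate is that $\proj^\perp_{\nabla f(\x_t)}$ removes only a single direction of the normal space, whereas cleanly separating tangential and normal components of $\nabla\ttr$ requires projecting onto the full tangent bundle of $\Xstar$; controlling the residual normal components of $\vv_t$, showing they are $\oo{\rho}$-bounded and either absorbed at the $\oo{\eta^2}$ Taylor level or contracted by the next gradient step via local strong convexity of $f$ in the normal direction, is the delicate piece, and is precisely why the theorem requires $\eps_0$ to be considerably smaller than the perturbation radius $\rho$.
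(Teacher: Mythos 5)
Your proposal is essentially correct and reaches the result by the same chain of ideas as the paper: extract $\nabla\ttr$ from the second-order Taylor bias of the perturbed gradient (paper's Lemma~\ref{lem:trace_gradient}), show the iterates stay in an $\oo{\eps_0}$-tube of $\Xstar$ (Lemma~\ref{lem:stay_near}), and establish a per-step descent inequality for $\ttr\circ\Phi$ (Lemma~\ref{lem:trace_descent_formal}). Three remarks. First, the Taylor remainder you claim, $\oo{\rho^4}$ via vanishing odd moments, requires one more order of differentiability than Assumption~\ref{as:local} supplies; the paper carries only $\oo{\rho^3}$ in Lemma~\ref{lem:trace_gradient}, and the parameter choice is calibrated to that, so this is harmless for the rate. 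Second, the obstacle you flag at the end---that $\proj^\perp_{\nabla f}$ removes only a one-dimensional slice of the normal space---is dissolved more cleanly than you anticipate by the identity $\partial\Phi(\x)\nabla f(\x)=\mathbf{0}$ (Lemma~\ref{lem:Phi}): the only object entering the evolution of $\Phi(\x_t)$ is $\partial\Phi(\x_t)\vv_t$, and $\partial\Phi$ already annihilates (approximately) the whole normal space, residual components of $\vv_t$ included; the single-direction projection matters only for the descent/stay-near bookkeeping, not for separating tangential from normal parts of $\nabla\ttr$. Third, the genuine point of divergence is the final probabilistic step. You absorb the $\norm{\partial\Phi\nabla\ttr\circ\Phi}$-linear error terms into the quadratic one by Young's inequality, telescope $\E\norm{\cdot}^2$ directly, and close with Markov on the uniformly chosen output. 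The paper instead proves a two-case bound (descent when $\norm{\cdot}\geq\sqrt{\eps}$, bounded increase otherwise, Lemmas~\ref{lem:trace_descent}--\ref{lem:trace_descent_small}) and bounds the visit frequency $P_t$ of the bad event by summing. These give the same $T=\oo{\eps^{-3}\delta^{-4}}$, but your variant is slightly more streamlined, avoiding the auxiliary small-gradient case lemma and matching the standard SGD-to-stationary-points template more directly. One small simplification you could make: counting large-gradient steps is unnecessary, since once inside the $\oo{\eps_0}$-tube the stay-near lemma forces $\norm{\nabla f(\x_t)}\leq\eps_0$ forever; after the initial $\oo{\log(1/\eps_0)}$ GD steps the trajectory is a contiguous block of perturbation steps, so the telescope needs no care about interleaving.
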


\paragraph{Minimizing flatness only using gradients?}
At first glance, finding a flat minimum seems  computationally expensive since minimizing  $\tr(\nabla^2 f)$ would require information about second or higher derivatives. 
Thus, \autoref{thm:RS} may sound quite surprising to some readers since \autoref{algo:RS} only uses gradients which only pertains to information about first derivatives. 

However, it turns out using the gradients from the perturbed iterates $\x_t +\rho\per_t$ lets us get access to specific third derivatives of $f$ in a parsimonious way. 
More precisely, as we shall see in our proof sketch, the crux of  the perturbation step \eqref{eq:perturb step} is that the gradients of $\ttr$ can be estimated using  gradients from perturbed iterates.
In particular, we show that (see \eqref{eq:trace_grad}) in expectation, it holds that
\begin{align}
\E \vv_t = \frac{1}{2}\rho^2  \proj^\perp_{\nabla f(\x_t)} \nabla \ttr (\x_t)  + \text{lower order terms.}
\end{align}
Using this property, one can prove that each step of the perturbed gradient step decrease the trace of Hessian along the local minima set; see \autoref{lem:trace_descent}.
We remark that this general principle of estimating higher order derivatives from gradients in a parsimonious way is inspired by recent works on understanding dynamics of sharpness-aware minimization~\citep{bartlett2022dynamics,wen2022does} and gradient descent at edge-of-stability~\citep{arora2022understanding,damian2022self}.

\subsection{Proof Sketch of \autoref{thm:RS}}
\label{sec:pf_RS}
In this section, we provide a proof sketch of \autoref{thm:RS},
The full proof can be found in \autoref{pf:thm:RS}.
We first sketch the overall structure of the proof and then detail each part:
\begin{enumerate}
\item We first show that iterates enters  an $\oo{\eps_0}$-neighborhood of the local minima set $\Xstar$ in a few steps, and the subsequent iterates remain there.
\item When the iterates is $\oo{\eps_0}$-near $\Xstar$, we show that the perturbed gradient step in \autoref{algo:RS} decreases the trace of Hessian $\ttr(\Phi)$ in expectation as long as $\norm{ \partial \Phi( \Phi(\x_t)) \nabla \ttr( \Phi(\x_t)) }\geq \sqrt{\eps}$.

\item We then combine the above two properties to show that \autoref{algo:RS} finds a flat minimum.
\end{enumerate} 

\paragraph{Perturbation does not increase the cost too much.}
First, since $\x_0$ is $\zeta$-close to $\Xstar$ where the loss function satisfies the  Polyak--\L{}ojasiewicz (PL) inequality, the standard linear convergence result of gradient descent guarantees that the iterate enters an $\oo{\eps_0}$-neighborhood of $\Xstar$.
We thus assume that $\x_0$ itself satisfies $\norm{\nabla f(\x_0)}\leq \eps_0$ without loss of generality.
We next show that the perturbation $\vv_t$ we add at each step to the gradient only leads to a small increase in the cost. This claim follows from the following variant of well-known descent lemma.

\begin{lemma} \label{lem:descent_gd}
For $\eta \leq  1/\beta$,  consider a one-step of the perturbed gradient step of \autoref{algo:RS}: $\x_{t+1} \leftarrow \x_t - \eta \left(\nabla f (\x_t) + \vv_t  \right)$.
Then we have 
\begin{align}
\loss(\x_{t+1}) \le \loss(\x_t) -\frac{1}{2}\eta   \norm{\nabla \loss(\x_t)}^2 +\frac{\beta\eta^2}{2}\norm{\vv_t}^2.
\end{align}
\end{lemma}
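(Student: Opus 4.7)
The plan is to invoke the standard descent lemma for $\beta$-smooth functions on the update direction $-\eta(\nabla f(\x_t)+\vv_t)$ and then exploit the defining property of $\vv_t$, namely that by construction of the projecting-out operator $\proj^\perp_{\nabla f(\x_t)}$, the perturbation $\vv_t$ is orthogonal to $\nabla f(\x_t)$. This orthogonality is what converts the generic $-\eta\inp{\nabla f(\x_t)}{\nabla f(\x_t)+\vv_t}$ cross term into a clean $-\eta\norm{\nabla f(\x_t)}^2$ and makes the squared update norm split additively into $\norm{\nabla f(\x_t)}^2+\norm{\vv_t}^2$.

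Concretely, first I would apply $\beta$-smoothness of $\loss$ with $\x_{t+1}-\x_t=-\eta(\nabla f(\x_t)+\vv_t)$ to get
\begin{align}
\loss(\x_{t+1}) \le \loss(\x_t) - \eta \inp{\nabla \loss(\x_t)}{\nabla \loss(\x_t)+\vv_t} + \frac{\beta\eta^2}{2}\norm{\nabla \loss(\x_t)+\vv_t}^2.
\end{align}
Then, since $\vv_t = \proj^\perp_{\nabla f(\x_t)} \nabla f(\x_t+\rho\per_t)$ lies in the orthogonal complement of $\nabla f(\x_t)$, we have $\inp{\nabla f(\x_t)}{\vv_t}=0$, and the Pythagorean identity gives $\norm{\nabla f(\x_t)+\vv_t}^2 = \norm{\nabla f(\x_t)}^2 + \norm{\vv_t}^2$. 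Substituting these yields
\begin{align}
\loss(\x_{t+1}) \le \loss(\x_t) - \eta \norm{\nabla \loss(\x_t)}^2 + \frac{\beta\eta^2}{2}\norm{\nabla \loss(\x_t)}^2 + \frac{\beta\eta^2}{2}\norm{\vv_t}^2.
\end{align}

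Finally I would use the step-size condition $\eta\le 1/\beta$, equivalently $\beta\eta\le 1$, to bound $\tfrac{\beta\eta^2}{2}\norm{\nabla \loss(\x_t)}^2 \le \tfrac{\eta}{2}\norm{\nabla \loss(\x_t)}^2$, which combined with the $-\eta\norm{\nabla \loss(\x_t)}^2$ term gives the claimed $-\tfrac{1}{2}\eta\norm{\nabla \loss(\x_t)}^2$ descent while leaving the $\tfrac{\beta\eta^2}{2}\norm{\vv_t}^2$ term as the penalty for the added perturbation. There is no real obstacle here: the entire argument is a textbook descent-lemma calculation, and the only nontrivial (but immediate) ingredient is recognizing that the projecting-out construction kills the cross term, which is exactly why the perturbation direction is defined via $\proj^\perp_{\nabla f(\x_t)}$ rather than by using $\nabla f(\x_t+\rho\per_t)$ directly.
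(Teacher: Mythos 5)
Your proof is correct and follows essentially the same route as the paper: apply the $\beta$-smoothness descent inequality, use the orthogonality $\vv_t\perp\nabla f(\x_t)$ (which is immediate from the definition of $\proj^\perp$) to kill the cross term and to split $\norm{\nabla f(\x_t)+\vv_t}^2$ by Pythagoras, and then absorb the $\frac{\beta\eta^2}{2}\norm{\nabla f(\x_t)}^2$ contribution using $\eta\beta\le1$. Your write-up is if anything slightly cleaner than the paper's: the paper's displayed intermediate bound contains a spurious factor $\frac{1}{2\beta}$ in front of $\norm{\nabla\loss(\x)}^2$ that appears to be a typo (the correct coefficient is $-\frac{1}{2}\eta(2-\eta\beta)$), whereas your version is stated correctly throughout.
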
  
The proof of \autoref{lem:descent_gd} uses the fact that $\vv_t \perp \nabla f(\x_t)$. 
Now, with the $\beta$-Lipschitz gradient condition, one can show that  $\norm{\vv_t} =\oo{\rho}$.
Hence, whenever the gradient becomes large as $\norm{\nabla f(\x_t) }\gtrsim \eta \rho^2$, the perturbed update starts decreasing the loss again and brings the iterates back close to $\Xstar$.
Using this property, one can show that the iterates $\x_t$ remain in an $\eps_0$-neighborhood of $\Xstar$, i.e.,  $\norm{\x_t-\Phi(\x_t)} = \oo{\norm{\nabla f(\x_t)}}= \oo{\eps_0}$. See \autoref{lem:stay_near} for precise details.

\paragraph{Perturbation step decreases $\ttr(\Phi(\x_t))$ in expectation.} Now the main part of the analysis is to show that the perturbation updates lead to decrease in the trace Hessian along $\Xstar$, i.e., decrease in $\ttr(\Phi(\x_t))$, as show in the following result.
\begin{lemma} \label{lem:trace_descent}
Let \autoref{as:local} hold. Let the target accuracy $\eps>0$ be chosen sufficiently small, and  $\delta\in(0,1)$.
Consider the perturbed gradient step of \autoref{algo:RS}, i.e.,
$\x_{t+1} \leftarrow \x_t - \eta \left(\nabla f (\x_t) + \vv_t  \right)$ starting from $\x_t$ such that $\norm{\nabla f(\x_t)} \leq \eps_0$ with parameters $\eta= \oo{\delta\eps}$, $\rho = \oo{\delta \sqrt{\eps}}$ and $\eps_0=\oo{\delta^{1.5}\eps}$. 
Assume that $\ttr( \Phi(\x_t)) $ has sufficiently large gradient 
\begin{align}  
\norm{ \partial \Phi( \Phi(\x_t)) \nabla \ttr( \Phi(\x_t)) }  \geq \sqrt{\eps}\,.
\end{align}
Then  the trace of Hessian $\ttr(\Phi)$ decreases as 
\begin{align} \label{ineq:trace_descent}
\E\ttr(\Phi(\x_{t+1})) - \ttr(\Phi(\x_t))  &\leq  - \Omega(\delta^3\eps^3)\,, 
\end{align}
where the expectation is over the perturbation $\per_t \sim \mathrm{Uniform}(\mathbb{S}^{d-1})$ in \autoref{algo:RS}.
\end{lemma}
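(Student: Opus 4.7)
The key object is the first-order expansion of $\ttr\circ\Phi$ around $\x_t$ under the update $\Delta_t \coloneqq -\eta(\nabla f(\x_t)+\vv_t)$:
\begin{align}
\ttr(\Phi(\x_{t+1})) - \ttr(\Phi(\x_t)) = \langle \nabla[\ttr\circ\Phi](\x_t), \Delta_t \rangle + \oo{\norm{\Delta_t}^2}.
\end{align}
By the chain rule $\nabla[\ttr\circ\Phi](\x_t) = \partial\Phi(\x_t)^\top \nabla\ttr(\Phi(\x_t))$, so the analysis reduces to estimating $\partial\Phi(\x_t)\nabla f(\x_t)$ and $\partial\Phi(\x_t)\E\vv_t$. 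Since $\vv_t \perp \nabla f(\x_t)$ and $\norm{\vv_t}=\oo{\rho}$ (via $\beta$-smoothness), the earlier ``stay near'' argument guarantees $\norm{\x_t-\Phi(\x_t)}=\oo{\eps_0}$ throughout.

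The first key fact is the exact identity $\partial\Phi(\x)\nabla f(\x) = \mathbf{0}$ for every $\x$ in the neighborhood where $\Phi$ is defined. This follows from $\Phi$ being constant along gradient-flow trajectories: differentiating $\Phi(\x(s))\equiv \Phi(\x(0))$ in $s$ at $s=0$ gives $\partial\Phi(\x)\cdot(-\nabla f(\x))=\mathbf{0}$ (see Appendix~B of \citet{arora2022understanding}). Hence the $\nabla f(\x_t)$ contribution to the first-order term vanishes outright, and the descent is driven purely by $\vv_t$.

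The second key fact is that $\E\vv_t$ exposes $\nabla\ttr$. Taylor-expanding $\nabla f(\x_t+\rho\per_t)$ to third order and using $\E\per_t=\mathbf{0}$ and $\E[\per_t\per_t^\top]=I_d/d$, together with the vanishing of odd spherical moments, gives
\begin{align}
\E_{\per_t}\!\bigl[\nabla f(\x_t+\rho\per_t)\bigr] = \nabla f(\x_t) + \tfrac{1}{2}\rho^2\,\nabla\ttr(\x_t) + \oo{\rho^4},
\end{align}
via the elementary identity $\E_{\per}\bigl[\nabla^3 f(\x_t)[\per,\per]\bigr]_i = \tfrac{1}{d}\sum_j \partial_{ijj}f(\x_t) = \partial_i\ttr(\x_t)$. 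Applying $\proj^\perp_{\nabla f(\x_t)}$ removes the $\nabla f(\x_t)$ piece, and then applying $\partial\Phi(\x_t)$ kills the remaining rank-one correction via $\partial\Phi\nabla f=\mathbf{0}$ once more. A Taylor transport replaces $\x_t$ by $\Phi(\x_t)$ at cost $\oo{\eps_0}$, after which I use that $\partial\Phi(\x^\star)$ is the orthogonal projection onto $\ker \nabla^2 f(\x^\star)$ for $\x^\star\in\Xstar$ (hence self-adjoint and idempotent). Writing $g_t\coloneqq\partial\Phi(\Phi(\x_t))\nabla\ttr(\Phi(\x_t))$, the first-order descent becomes
\begin{align}
-\eta\langle\nabla\ttr(\Phi(\x_t)),\partial\Phi(\x_t)\E\vv_t\rangle = -\tfrac{\eta\rho^2}{2}\norm{g_t}^2 + \oo{\eta\rho^2\eps_0} + \oo{\eta\rho^4},
\end{align}
and the hypothesis $\norm{g_t}\geq\sqrt\eps$ turns the principal term into $-\Omega(\eta\rho^2\eps)=-\Omega(\delta^3\eps^3)$.

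\textbf{Error budget and the main obstacle.} The conceptual argument is short; the technical effort lies in verifying that every error term is strictly higher order in $\delta$ than the principal descent. With $\eta=\oo{\delta\eps}$, $\rho=\oo{\delta\sqrt\eps}$, $\eps_0=\oo{\delta^{1.5}\eps}$, the corrections scale as $\eta\rho^2\eps_0=\oo{\delta^{3.5}\eps^3}$ (Taylor transport $\x_t\to\Phi(\x_t)$), $\eta\rho^4=\oo{\delta^5\eps^3}$ (from the $\oo{\rho^4}$ remainder), and the quadratic Taylor remainder of $\ttr\circ\Phi$ is bounded by $\norm{\Delta_t}^2=\oo{(\eta\rho)^2}=\oo{\delta^4\eps^3}$, since $\norm{\vv_t}=\oo{\rho}$ dominates $\norm{\nabla f(\x_t)}\leq\eps_0$. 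Each error is a strictly higher power of $\delta$ than $\delta^3\eps^3$, so for small $\delta$ the bound $-\Omega(\delta^3\eps^3)$ survives. The crucial enabler is the identity $\partial\Phi\nabla f=\mathbf{0}$: without it the naive contribution $\eta\norm{\nabla f(\x_t)}\cdot\norm{g_t}=\oo{\delta^{2.5}\eps^2}$ would swamp the descent, and ensuring it vanishes \emph{exactly}---rather than merely being small---is what makes the use of $\Phi$ (instead of, say, a plain orthogonal projection onto $\Xstar$) essential.
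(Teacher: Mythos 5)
Your proposal follows essentially the same route as the paper's proof: a Taylor expansion of $\ttr\circ\Phi$ driven by the perturbation $\vv_t$, the exact vanishing $\partial\Phi\nabla f=\mathbf{0}$ to eliminate the $\nabla f$ contribution, the spherical-moment computation showing $\E\vv_t=\tfrac{1}{2}\rho^2\proj^\perp_{\nabla f}\nabla\ttr+\text{h.o.t.}$, a Lipschitz transport from $\x_t$ to $\Phi(\x_t)$ at cost $\oo{\eps_0}$, and an error budget under the prescribed parameter scalings. The only stylistic difference is the expansion point: you expand $\ttr\circ\Phi$ at $\x_t$ against $\Delta_t=\x_{t+1}-\x_t$, whereas the paper first pushes forward to the minima manifold via $\Phi\circ\Phi=\Phi$ and expands at $\Phi(\x_t)$ against $\Phi(\x_{t+1})-\Phi(\x_t)$; both give the same leading term and $\oo{\eta^2\rho^2}$ remainder. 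Two small imprecisions worth flagging: (i) your Taylor remainder should be $\oo{\rho^3}$, not $\oo{\rho^4}$ — the assumption is only four-times continuous differentiability, so you get $\rho^3\be{\nabla^4 f}$ from the Lagrange form, and vanishing odd moments remove the $\rho^3$ \emph{term} but not the $\rho^3$-order \emph{remainder}; this is still $\delta$ smaller than the principal term once you track $\norm{g_t}$, so the conclusion is unaffected. (ii) The exponent $\eta\rho^2\eps_0=\oo{\delta^{4.5}\eps^3}$, not $\delta^{3.5}\eps^3$ (this only makes the error smaller). You are also right to invoke, and correctly localize, the self-adjointness of $\partial\Phi$ at points of $\Xstar$; the paper uses this implicitly when identifying $\nabla[\ttr\circ\Phi](\Phi(\x_t))$ with $\partr$.
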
 

% The proof of the above result relies on the following interesting insight:
%   From the perturbed gradient, one can extract out---in expectation---the gradient of trace Hessian, i.e., $\nabla \ttr(\vx_t)$; see \eqref{eq:trace_grad}.
%   \emph{This is rather striking because  the information about the third-order derivative can be extracted from first-order derivatives!} 
% The above result is inspired by similar observations from recent works \citep{arora2022understanding, damian2022self,bartlett2022dynamics,wen2022does}.

\emph{Proof sketch of \autoref{lem:trace_descent}:}
We begin with the Taylor expansion of the perturbed gradient:
\begin{align} \label{eq:taylor}
\nabla f(\x_t + \rho \per_t)  &=  \nabla f(\x_t)  + \rho \nabla^2 f(\x_t) \per_t \\
&+ \frac{1}{2}\rho^2 \nabla^3 f(\x_t) \left[\per_t,\per_t\right] + \oo{\rho^3}\,.
\end{align}
Now let us compute the expectation of the projected out version of the perturbed gradient, i.e., $\E \proj^\perp_{\nabla f(\x_t)} \nabla f(\x_t + \rho \per_t)$.
First, note that in  \eqref{eq:taylor}, the projection operator removes $\nabla f(\x_t)$, and using the fact  $\E[\per_t]=\mathbf{0}$, the second term  $\rho \nabla^2 f(\x_t) \per_t$ also vanishes in expectation.
Turning to the third term, an interesting thing happens. 
Since $\E[\per_t\per_t^\top]= \frac{1}{d}\mathbf{I}_d$, using the fact $\nabla^3 f(\x_t) \left[\per_t,\per_t \right] = \nabla (\nabla^2 f(\x_t) \left[\per_t,\per_t \right]) = \nabla  \tr\left(\nabla^2 f(\x_t) \per_t \per_t^\top\right)$, it follows that 
\begin{align} \label{eq:trace_grad}
\E \vv_t   &= \frac{1}{2}\rho^2  \proj^\perp_{\nabla f(\x_t)} \nabla \ttr (\x_t) + \oo{\rho^3}\,, 
\end{align}  
Now, with the high-order smoothness properties of $f$, we obtain
\begin{align}  
&\ttr(\Phi(\x_{t+1})) - \ttr(\Phi(\x_t))\\
&\quad= \ttr(\Phi(\Phi(\x_{t+1}))) - \ttr(\Phi(\Phi(\x_t))) \\
&\quad \leq \inp{ \partial \Phi( \Phi(\x_t)) \nabla \ttr( \Phi(\x_t))}{\Phi(\x_{t+1}) -\Phi(\x_t)} \\
&\quad\quad+ \oo{\norm{\Phi(\x_{t+1}) -\Phi(\x_t)}^2}\,.
\end{align}
Using \eqref{eq:trace_grad} and carefully bounding terms, one can prove the following upper bound on $\E\ttr(\Phi(\x_{t+1})) - \ttr(\Phi(\x_t))$:   ($\partr \coloneqq \partial \Phi( \Phi(\x_t)) \nabla \ttr( \Phi(\x_t))$)
\begin{align} \label{ineq:trace_descent_RS}
-  \eta\rho^2\norm{\partr}^2   +   O \left(\eta\rho^2\eps_0 \norm{\partr}+ \eta\rho^3 \norm{\partr} +  \eta^2  \rho^2\right)\,. 
\end{align}
The inequality \eqref{ineq:trace_descent} implies that as long as $\norm{\partr}\geq \Omega(\max\{\eps_0, \rho, \sqrt{\eta}\})$,  $\ttr(\Phi(\x_t))$ decreases in expectation by $\eta\rho^2 \norm{\partr}^2$.
Due to our choices of $\rho,\eta,\eps_0$, \autoref{lem:trace_descent} follows. \qed

Using similar argument, one can show that the perturbation step does not increase the trace Hessian value too much even when   $\norm{ \partial \Phi( \Phi(\x_t)) \nabla \ttr( \Phi(\x_t)) }  \leq \sqrt{\eps}$. 
\begin{lemma} \label{lem:trace_descent_small}
Under the same setting as \autoref{lem:trace_descent}, assume now that  $\norm{ \partial \Phi( \Phi(\x_t)) \nabla \ttr( \Phi(\x_t)) }  \leq \sqrt{\eps}$. Then we have $    \E\ttr(\Phi(\x_{t+1})) - \ttr(\Phi(\x_t))  \leq  \oo{ \delta^4\eps^3 }$.
\end{lemma}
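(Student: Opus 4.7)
The plan is to reuse the master inequality from the proof of \autoref{lem:trace_descent} essentially verbatim, because its derivation never invokes the lower bound on $\norm{\partr}$. Specifically, the chain of steps in that proof --- (i) the third-order Taylor expansion \eqref{eq:taylor} of $\nabla f(\x_t+\rho\per_t)$, (ii) taking expectation over $\per_t\sim\mathrm{Unif}(\mathbb{S}^{d-1})$ using $\E[\per_t]=\mathbf{0}$ and $\E[\per_t\per_t^\top]=d^{-1}\mathbf{I}_d$ to produce \eqref{eq:trace_grad}, and (iii) a second-order expansion of $\ttr\circ\Phi\circ\Phi$ along the displacement $\Phi(\x_{t+1})-\Phi(\x_t)$ combined with the Lipschitz control of $\Phi$ from \autoref{as:local} --- yields the bound \eqref{ineq:trace_descent_RS}:
$$\E\ttr(\Phi(\x_{t+1}))-\ttr(\Phi(\x_t))\leq -\eta\rho^2\norm{\partr}^2 +\oo{\eta\rho^2\eps_0\norm{\partr} + \eta\rho^3\norm{\partr} + \eta^2\rho^2}.$$
I would simply cite \eqref{ineq:trace_descent_RS} as the starting point and avoid redoing the Taylor-expansion bookkeeping.

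Given the new hypothesis $\norm{\partr}\leq\sqrt{\eps}$, the strategy now differs: instead of exploiting the leading descent contribution $-\eta\rho^2\norm{\partr}^2$, I would discard it as a nonpositive term (upper-bounded by $0$) and bound each of the three error terms directly by substituting $\eta=\oo{\delta\eps}$, $\rho=\oo{\delta\sqrt{\eps}}$, $\eps_0=\oo{\delta^{1.5}\eps}$, and $\norm{\partr}\leq\sqrt{\eps}$. A quick computation gives $\eta\rho^2\eps_0\norm{\partr}=\oo{\delta^{4.5}\eps^{3.5}}$, $\eta\rho^3\norm{\partr}=\oo{\delta^4\eps^3}$, and $\eta^2\rho^2=\oo{\delta^4\eps^3}$. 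For sufficiently small $\delta,\eps<1$ the first term is subdominant, and summing the three produces the target bound $\oo{\delta^4\eps^3}$.

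I do not expect a serious technical obstacle beyond this routine accounting, since every nontrivial ingredient --- the cancellation of the first two Taylor terms after projecting out $\nabla f(\x_t)$, the identity turning the $\nabla^3 f$ term into $\nabla\ttr$, the bound $\norm{\vv_t}=\oo{\rho}$ from the $\beta$-Lipschitz gradient condition, and the twice-Lipschitz differentiability of $\Phi$ --- is already in place. The only minor care-point is ensuring that the second-order remainder in the expansion of $\ttr\circ\Phi\circ\Phi$ is itself $\oo{\delta^4\eps^3}$, but this follows from $\norm{\Phi(\x_{t+1})-\Phi(\x_t)}=\oo{\eta(\norm{\nabla f(\x_t)}+\norm{\vv_t})}=\oo{\eta(\eps_0+\rho)}$ by Lipschitzness of $\Phi$, whose square reproduces precisely the $\eta^2\rho^2$ contribution already subsumed in \eqref{ineq:trace_descent_RS}.
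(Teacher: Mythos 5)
Your proposal is correct and matches the paper's own argument: the appendix proves both cases of Lemma~\ref{lem:trace_descent} and Lemma~\ref{lem:trace_descent_small} together in Lemma~\ref{lem:trace_descent_formal}, deriving the same master inequality (there the $\eta\rho^2\eps_0\norm{\partr}$ contribution is absorbed into the $\consta\eta\rho^3\norm{\partr}$ term since $\eps_0=\ooo{\rho}$) and then, in the small-gradient branch, discarding the nonpositive $-\eta\rho^2\norm{\partr}^2$ term and substituting $\norm{\partr}\le\sqrt{\eps}$ into the remainders exactly as you do. Your arithmetic checks out: $\eta\rho^3\norm{\partr}$ and $\eta^2\rho^2$ are both $\oo{\delta^4\eps^3}$ and dominate the $\oo{\delta^{4.5}\eps^{3.5}}$ term.
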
 

\paragraph{Putting things together.}   Using the results so far, we establish a high probability result by returning one of the iterates uniformly at random, following \citep{ghadimi2013stochastic,reddi2016stochastic,daneshmand2018escaping}.
For $t=1,2,\dots, T$,  
\begin{align}
\text{let $A_t$ be the event $\norm{ \partial \Phi( \Phi(\x_t)) \nabla \ttr( \Phi(\x_t)) } \geq \sqrt{\eps}$,}
\end{align}
and Let $P_t$ denote the probability of event $A_t$. 
Then, the probability of returning a $(\eps,\sqrt{\eps})$-flat minimum is simply equal to $\frac{1}{T}\sum_{t=1}^T (1- P_t)$.
It turns out  one can upper bound the sum of $P_t$'s using \autoref{lem:trace_descent}; see \autoref{pf:thm:RS} for details.
In particular, choosing  $T =\om{\eps^{-3}\delta^{-4}}$, we get
\begin{align}  
\frac{1}{T}\sum_{t=1}^T P_t \lesssim  \frac{\E\ttr(\Phi(\x_0))}{T\delta^3\eps^3} +   \delta  = \oo{\delta}\,.
\end{align}
This concludes the proof of \autoref{thm:RS}.

\section{Faster Escape with Sharpness-Aware Perturbation}
\label{sec:SAM}

In this section, we present another gradient-based algorithm for finding an approximate flat minima for the case where the loss $f$ is a training loss over a training data set.
More formally, we consider the following setting for training loss, following the one in 
\citep{wen2022does}.

\begin{setting}
[Training loss over data] \label{def:ER}
Let $n$ be the number of training data, and for $i=1,\dots, n$, let $\mm_i(\x)$ be the model prediction output on the $i$-th data, and $y_i$ be the $i$-th label.
For a loss function $\ell$, let $f$ be defined as the following training loss 
\begin{align}
f(\x) = \frac{1}{n}\sum_{i=1}^n f_i(\x) \coloneqq \frac{1}{n}\sum_{i=1}^n \ell(\mm_i(\x), y_i)\,.
\end{align}
Here $\ell$ satifies $\argmin_{z\in\R}\ell(z, y) = y$ $\forall y$, and    $\frac{\partial^2\ell(z,y)}{\partial^2 z}\vert_{z=y} >0$.
Lastly, we consider $\Xstar$ to be the set of global minima, i.e., $\Xstar =\{ \x\in\R^d ~:~ \mm_i(\x) = y_i, \forall i=1,\dots,n  \}$. 
We assume that $\nabla \mm_i(\x)\neq \mathbf{0}$ for $\x\in \Xstar$, $\forall i=1,\dots,n$.  
\end{setting}
We note that  the assumption that $\nabla \mm_i(\x)\neq \mathbf{0}$ for $\x\in \Xstar$ is without loss of generality. 
More precisely, by Sard's Theorem, $\Xstar$ defined above is just equal to the set of global minima, except for a measure-zero set of labels.

\subsection{Main Result}

Under \autoref{def:ER}, we present another gradient-based algorithm for finding approximate flat minima (\autoref{algo:SAM}).
The main component of our proposed algorithm is   the perturbed gradient step
\begin{align} \label{eq:perturb step_sam}
\x_{t+1} &\leftarrow \x_t - \eta \left(\nabla f (\x_t) + \vv_t  \right)\,,\quad \text{where}\\
\vv_t &\coloneqq \proj^\perp_{\nabla f(\x_t)} \nabla f_i\left(\x_t + \rho \sigma_t \frac{\nabla f_i(\x_t)}{  \norm{\nabla f_i(\x_t) }}\right) 
\end{align}
for random samples $i \sim [n]$ and $\sigma_t \sim \{\pm 1\}$.

\begin{remark} \label{rmk:well_define}
Here, note that the direction $\nicefrac{\nabla f_i(\x_t)}{  \norm{\nabla f_i(\x_t) }}$ could be ill-defined when the stochastic gradient exactly vanishes at $\x_t$.
In that case, one can use $\nicefrac{\nabla f_i(\x_t + {\bm\xi})}{  \norm{\nabla f_i(\x_t+ {\bm \xi}) }}$ where $\bm \xi$ is a random vector with a small norm, say $\eps^{3}$. 
Hence, to avoid tedious technicality, we assume for the remaining of the paper that \eqref{eq:perturb step_sam} is well-defined at each step.    
\end{remark}

Notice the distinction between \eqref{eq:perturb step} and \eqref{eq:perturb step_sam}. 
In particular,  for the randomly smoothed perturbation algorithm, $\vv_t$ is computed using the gradient at a randomly perturbed iterate.
On the other hand, in the update \eqref{eq:perturb step_sam}, $\vv_t$ is computed using the \emph{stochastic} gradient at an iterate perturbed along the \emph{stochastic gradient direction}.
The idea of computing the (stochastic) gradient at an iterate perturbed along the (stochastic) gradient direction is inspired by sharpenss-aware minimization (SAM) of \citet{foret2020sharpness}, a practical optimization algorithm showing substantial success in practice. Hence, we call our algorithm the \emph{sharpness-aware perturbation} algorithm.

As we shall see in detail in \autoref{thm:SAM}, the sharpness-aware perturbation step \eqref{eq:perturb step_sam} leads to an improved guarantee for finding a flat minimum. 
The key idea---as we detail in \autoref{sec:pfsketch_sam}---is that this perturbation leads to faster decrease in $\ttr$.
In particular, \autoref{lem:trace_descent_sam} shows that each sharpness-aware perturbation decreases $\ttr$ by $\Omega( {d  \min\{ 1,\eps d^3\}}\cdot \delta^3\eps^2)$, which is 
$d\eps^{-1}  \min\{ 1, \eps d^3\}$ times larger than the decrease of $\Omega(\delta^3 \eps^3)$ due to the randomly smoothed perturbation (shown in \autoref{lem:trace_descent}).
We now present the theoretical guarantee of \autoref{algo:SAM}.

\begin{algorithm}[t]
\caption{Sharpness-Aware Perturbation}\label{algo:SAM}
\begin{algorithmic}
\renewcommand{\algorithmicrequire}{\textbf{Input: }}
\renewcommand{\algorithmicensure}{\textbf{Output: }}
\REQUIRE {$\x_0$, learning rates $\eta,\eta'$, perturbation radius $\rho$, tolerance $\eps_0$, the number of steps $T$.} 
\FOR{$t = 0, 1, \ldots, T-1$}
\IF{$\norm{\nabla f(\x_t)} \le \eps_0$}
\STATE $\x_{t+1} \leftarrow \x_t - \eta \left(\nabla f (\x_t) + \vv_t  \right)$, 
\STATE  where $\vv_t \coloneqq \proj^\perp_{\nabla f(\x_t)} \nabla f_i(\x_t + \rho \sigma_t \frac{\nabla f_i(\x_t)}{  \norm{\nabla f_i(\x_t) }})$ for $i\sim \mathrm{Unif}([n])$ and $\sigma_t\sim \mathrm{Unif}(\{\pm 1\})$.
\ELSE
\STATE $\x_{t+1} \leftarrow \x_t - \eta' \nabla f (\x_t)    $
\ENDIF  
\ENDFOR 
\RETURN  $\widehat{\x}$ uniformly at random from $\{\x_1, \dots, \x_T \}$
% \FOR{$t = 0, 1, \ldots, T'-1$} 
% \STATE $\x'_{t+1}  \leftarrow \x'_t - \eta'  \nabla f (\x'_t)$
% \ENDFOR
% \RETURN $\x'_T$ 
\end{algorithmic}
\end{algorithm}

\begin{theorem} \label{thm:SAM}
Under \autoref{def:ER}, let \autoref{as:local} hold and each $f_i$ is four times coutinuously differentiable within the $\zeta$-neighborhood of $\Xstar$ and have $\beta$-Lipschitz gradients. 
Let the target accuracy $\eps>0$ be chosen sufficiently small, and  $\delta\in(0,1)$.
Suppose that $\x_0$ is $\zeta$-close to $\Xstar$.
Then, for $\nu \coloneqq \min\{d,\eps^{-1/3}\}$, the sharpness-aware perturbation algorithm  (\autoref{algo:SAM}) with parameters 
$\eta =\oo{\nu\delta\eps}$, $\eta' =1/\beta$, $\rho=\oo{\nu\delta \sqrt{\eps}}$, $\eps_0=\oo{  \nu^{1.5}\delta^{1.5}\eps}$ returns an $(\oo{\eps_0},\sqrt{\eps})$-flat minimum with probability at least $1-
\oo{\delta}$ after $T  = \oo{d^{-1}\eps^{-2}  \cdot \max\left\{1,\frac{1}{d^3\eps}\right\} \cdot \delta^{-4} }$ iterations. 
From this $(\oo{\eps_0},\sqrt{\eps})$-flat minimum,  gradient descent with step size $\eta=\oo{\eps}$ reaches a $(\eps,\sqrt{\eps})$-flat minimum within  $\oo{\eps^{-1}\log (1/\eps)}$ iterations.
\end{theorem}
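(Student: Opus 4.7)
The plan is to mirror the three-stage proof of \autoref{thm:RS}: (i) iterates enter and remain in an $\oo{\eps_0}$-neighborhood of $\Xstar$; (ii) each perturbed step decreases $\ttr\circ\Phi$ in expectation whenever $\norm{\partr}\geq\sqrt\eps$; (iii) a uniformly random iterate converts the amortized descent into a high-probability flatness guarantee. Stage (i) transfers almost verbatim from \autoref{lem:stay_near}: the projecting-out operation still enforces $\vv_t\perp\nabla f(\x_t)$, so \autoref{lem:descent_gd} applies, and the per-$f_i$ $\beta$-Lipschitz assumption gives $\norm{\vv_t}=\oo{\rho}$; the only new element is an additional $\oo{\rho^2}$ variance from sampling $i\sim[n]$, which is of the same order as the deterministic contribution and does not change the conclusion. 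The heart of the work is stage (ii), which requires a sharpening of \autoref{lem:trace_descent}.

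Writing $\pp_t^{(i)}\coloneqq\nabla f_i(\x_t)/\norm{\nabla f_i(\x_t)}$ and Taylor-expanding,
\begin{align}
\nabla f_i(\x_t+\rho\sigma_t\pp_t^{(i)}) = \nabla f_i(\x_t) + \rho\sigma_t\nabla^2 f_i(\x_t)\pp_t^{(i)} + \tfrac{1}{2}\rho^2\nabla^3 f_i(\x_t)[\pp_t^{(i)},\pp_t^{(i)}] + \oo{\rho^3};
\end{align}
averaging over $\sigma_t\in\{\pm1\}$ annihilates the first-order term, and projecting out $\nabla f(\x_t)$ and averaging over $i\sim[n]$ yields
\begin{align}
\E\vv_t = \tfrac{1}{2}\rho^2\,\proj^\perp_{\nabla f(\x_t)}\E_i\nabla\tr\!\bigl(\nabla^2 f_i(\x_t)\pp_t^{(i)}(\pp_t^{(i)})^\top\bigr) + \oo{\rho^3}.
\end{align}
The crucial gain over \eqref{eq:trace_grad} is the \emph{absence} of the $1/d$ damping that plagued the RS proof (it came from $\E[\per\per^\top]=I_d/d$). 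Under \autoref{def:ER}, near $\Xstar$ each $\nabla^2 f_i$ is essentially rank-one with its unique large eigendirection along $\nabla\mm_i(\Phi(\x_t))$; since $\nabla f_i(\x_t)=\ell'(\mm_i(\x_t),y_i)\nabla\mm_i(\x_t)$, the direction $\pp_t^{(i)}$ is precisely (up to sign) that eigendirection. Hence $(\pp_t^{(i)})^\top\nabla^2 f_i(\x_t)\pp_t^{(i)} = \tr(\nabla^2 f_i(\x_t)) + \oo{\eps_0}$, and $\E_i\tr(\nabla^2 f_i)=\tr(\nabla^2 f)=d\cdot\ttr$. Consequently $\E\vv_t\approx\tfrac{1}{2}d\rho^2\,\proj^\perp\nabla\ttr(\x_t)$, a factor-$d$ amplification of the effective gradient signal relative to the RS case.

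Plugging this refined bound into the same calculation that produced \eqref{ineq:trace_descent_RS} gives, whenever $\norm{\partr}\geq\sqrt\eps$,
\begin{align}
\E\ttr(\Phi(\x_{t+1}))-\ttr(\Phi(\x_t)) \le -\om{d\eta\rho^2\norm{\partr}^2} + \oo{\eta d\rho^2\eps_0\norm{\partr}+\eta d\rho^3\norm{\partr}+\eta^2 d\rho^2}.
\end{align}
With $\rho=\oo{\nu\delta\sqrt\eps}$, $\eta=\oo{\nu\delta\eps}$, $\eps_0=\oo{\nu^{1.5}\delta^{1.5}\eps}$ and $\nu=\min\{d,\eps^{-1/3}\}$, all three error contributions are strictly subdominant: $\nu$ is tuned to balance the Taylor remainder $\eta d\rho^3\norm{\partr}$ against the leading signal (the case $\nu=\eps^{-1/3}$) or, when that is not binding, against the rank-one approximation error $\eta d\rho^2\eps_0\norm{\partr}$ (the case $\nu=d$). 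The leading descent becomes $\om{d\nu^3\delta^3\eps^3}=\om{d\min\{1,\eps d^3\}\delta^3\eps^2}$, which is the claimed per-step rate. The analogue of \autoref{lem:trace_descent_small} in the complementary regime $\norm{\partr}<\sqrt\eps$ carries over by the same token, and the uniform-iterate trick of \citep{ghadimi2013stochastic} then delivers $T=\oo{d^{-1}\eps^{-2}\max\{1,1/(d^3\eps)\}\delta^{-4}}$. The main obstacle I anticipate is quantifying the rank-one approximation of each $\nabla^2 f_i$ with enough precision to preserve the full factor-$d$ gain, since its error scales with $\norm{\x_t-\Phi(\x_t)}=\oo{\eps_0}$, together with handling the edge case of near-vanishing stochastic gradient (\autoref{rmk:well_define}).

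The returned $\widehat\x$ satisfies $\norm{\widehat\x-\Phi(\widehat\x)}=\oo{\eps_0}$ with $\norm{\nabla(\ttr\circ\Phi)(\Phi(\widehat\x))}\leq\sqrt\eps$. Running plain GD from $\widehat\x$ with step $\oo{\eps}$, the local PL inequality of \autoref{as:local}(c) makes $\norm{\nabla f(\x_t)}$ decay geometrically at rate $1-\om{\alpha\eps}$, and by \autoref{as:local}(a)--(b) we have $\norm{\x_t-\Phi(\x_t)}=\oo{\norm{\nabla f(\x_t)}}$; hence driving the distance from $\oo{\eps_0}$ down to $\eps$ takes $\oo{\eps^{-1}\log(1/\eps)}$ iterations. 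Because $\Phi$ is invariant along gradient-flow trajectories, $\Phi(\x_t)$ remains essentially unchanged during this refinement phase, so the flatness witness $\norm{\nabla(\ttr\circ\Phi)(\Phi(\x_t))}\leq\sqrt\eps$ is preserved throughout, completing the argument.
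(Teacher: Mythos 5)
Your proposal follows essentially the same route as the paper's own proof: the three-stage structure (enter-and-stay near $\Xstar$, per-step expected descent of $\ttr\circ\Phi$, uniform-iterate probabilistic argument) mirrors \autoref{pf:thm:SAM}; the decisive observation is the same one underlying \autoref{lem:trace_gradient_sam}, namely that at a global minimum $\ell'(y_i,y_i)=0$ forces each $\nabla^2 f_i(\x^\star)$ to be rank-one along $\nabla\mm_i(\x^\star)$, so that $\pp^{(i)}_t$ aligns with the eigendirection and the $1/d$ damping of a uniformly random direction is replaced by a full factor-$d$ gain via $\E_i\tr(\nabla^2 f_i\,\pp^{(i)}(\pp^{(i)})^\top)\approx\tr(\nabla^2 f)=d\,\ttr$; and the $\nu=\min\{d,\eps^{-1/3}\}$ tuning and the GD-plus-PL refinement step also match. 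Two small imprecisions relative to the paper: the error terms you write as $\eta d\rho^3\norm{\partr}+\eta^2 d\rho^2$ should not carry the extra $d$ (only the rank-one approximation error $\eta d\rho^2\eps_0\norm{\partr}$ does, cf.\ \autoref{lem:trace_descent_formal_sam}), and the claim $(\pp^{(i)})^\top\nabla^2 f_i\,\pp^{(i)}=\tr(\nabla^2 f_i)+\oo{\eps_0}$ is stated for the scalar quadratic form whereas what the descent bound actually needs is the corresponding identity for its \emph{gradient} in $\x$, with the $\oo{\eps_0}$ error tracked through the differentiation (the paper absorbs this into the $\constb\constc d\rho^2\eps_0$ term) — but you explicitly flag precisely this as the main obstacle, and neither issue changes the argument's validity or the resulting rate.
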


\paragraph{Curious role of stochastic gradients.} Some readers might wonder the role of stochastic gradients in \eqref{eq:perturb step_sam}---for instance, what happens if we replace them by \emph{full-batch} gradients $\nabla f$?
Empirically, it has been observed that for SAM's performance, it is important to use stochastic gradients over full-batch~\citep{foret2020sharpness,kaddour2022flat,kaur2023maximum}.
Our analysis (see the proof sketch of \autoref{lem:trace_descent_sam}) provides a partial explanation for the success of using stochastic gradients, from the perspective of finding flat minima.
In particular, we show that stochastic gradients are important for faster decrease in the trace of the Hessian.

\subsection{Proof Sketch of \autoref{thm:SAM}}
\label{sec:pfsketch_sam}

\begin{figure*}
\centering
\includegraphics[width=0.7
\textwidth]{figs/cifar10.pdf} 
\vspace{-10pt}
\caption{ (Left) The comparison between Randomly Smoothed Perturbation (``RS'') and  Sharpness-Aware Perturbation (``SA''). (Right) Comparison of SA with different batch sizes. Here, we highlight that we do observe that {\bf the trace of Hessian value monotonically decreases} along the algorithm iterates, similarly to \citep{damian2021label} (see also \autoref{fig:damian}). We decide to present the test accuracy instead of the trace of Hessian, as it has more practical values.
} \label{fig:cifar10}

\end{figure*}

In this section, we sketch a proof of \autoref{thm:SAM}; for the full proof please see~\autoref{pf:thm:SAM}.
First, similarly  to the proof of \autoref{thm:RS}, one can show that once $\x_t$ enters an $\oo{\eps_0}$-neighborhood of $\Xstar$, all subsequent iterates $\x_t$ remain in the neighborhood. Now we sketch the proof of decrease in the trace of the Hessian.

\paragraph{Sharpness-aware perturbation decreases $\ttr(\Phi(\x_t))$ faster.} Similarly to \autoref{lem:trace_descent}, the main part  is to show that  the trace of Hessian decreases during each perturbed gradient step.

\begin{lemma} \label{lem:trace_descent_sam}
Let \autoref{as:local} hold. 
Let the target accuracy $\eps>0$ be chosen sufficiently small, and  $\delta\in(0,1)$.
Consider the perturbed gradient step of \autoref{algo:SAM}, i.e.,
$\x_{t+1} \leftarrow \x_t - \eta \left(\nabla f (\x_t) + \vv_t  \right)$ starting from $\x_t$ such that $\norm{\nabla f(\x_t)} \leq \eps_0$ with parameters $\eta = \oo{\nu\delta\eps}$,  $\rho=(\nu \delta\sqrt{\eps})$, $\eps_0=\oo{  \nu^{1.5}\delta^{1.5}\eps}$. 
Assume that $\ttr( \Phi(\x_t))$ has sufficiently large gradient 
\begin{align}  
\norm{ \partial \Phi( \Phi(\x_t)) \nabla \ttr( \Phi(\x_t)) }  \geq \sqrt{\eps}\,.
\end{align}
Then the  trace of Hessian $\ttr(\Phi)$ decreases as 
\begin{align}  
\E\ttr(\Phi(\x_{t+1})) - \ttr(\Phi(\x_t))  &\leq  - \Omega(  \textcolor{red}{d\nu^3}\delta^3\eps^3)\,, 
\end{align}
where the expectation is over the random samples $i\sim \mathrm{Unif}([n])$ and $\sigma_t\sim \mathrm{Unif}(\{\pm 1\})$   in \autoref{algo:SAM}.
\end{lemma}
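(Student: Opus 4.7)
The plan is to mirror the proof of \autoref{lem:trace_descent}, but exploit the training-loss structure of \autoref{def:ER} to show that the sharpness-aware perturbation direction produces, in expectation, a bias along $-\nabla\ttr$ that is a factor of $d$ larger than in the randomly smoothed case. The starting point is the Taylor expansion
\begin{align}
\nabla f_i(\x_t+\rho\sigma_t \hat g_i) &= \nabla f_i(\x_t) + \rho\sigma_t\nabla^2 f_i(\x_t)\hat g_i \\
&\quad + \tfrac{\rho^2}{2}\nabla^3 f_i(\x_t)[\hat g_i,\hat g_i] + \oo{\rho^3},
\end{align}
where $\hat g_i := \nabla f_i(\x_t)/\norm{\nabla f_i(\x_t)}$. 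Since $\sigma_t\sim\mathrm{Unif}(\{\pm 1\})$ kills odd powers of $\rho$, and $\proj^\perp_{\nabla f(\x_t)}\E_i\nabla f_i(\x_t)=\proj^\perp_{\nabla f(\x_t)}\nabla f(\x_t)=\mathbf 0$, taking the full expectation yields
\begin{align}\label{eq:vtbias}
\E\vv_t = \tfrac{\rho^2}{2}\proj^\perp_{\nabla f(\x_t)}\E_i\bigl[\nabla^3 f_i(\x_t)[\hat g_i,\hat g_i]\bigr] + \oo{\rho^4}.
\end{align}

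The central computation, which produces the $d$-factor gain, is to identify $\E_i\nabla^3 f_i[\hat g_i,\hat g_i]$ with $d\nabla\ttr$. At $\x^\star\in\Xstar$, $\nabla f_i(\x^\star)=\ell'(y_i,y_i)\nabla\mm_i(\x^\star)=\mathbf 0$, so the Gauss--Newton decomposition leaves $\nabla^2 f_i(\x^\star)=\ell''(y_i,y_i)\nabla\mm_i(\x^\star)\nabla\mm_i(\x^\star)^\top$, and $\hat g_i$ is, up to sign, the unit vector along $\nabla\mm_i(\x^\star)$. Substituting,
\begin{align}
\nabla^2 f_i(\x^\star)[\hat g_i,\hat g_i]\approx \ell''(y_i,y_i)\norm{\nabla\mm_i(\x^\star)}^2 = \tr(\nabla^2 f_i(\x^\star)),
\end{align}
and averaging over $i$ gives $\E_i[\nabla^2 f_i(\x)[\hat g_i,\hat g_i]]\approx\tr(\nabla^2 f(\x))=d\,\ttr(\x)$ in a neighborhood of $\Xstar$. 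Differentiating (the $\x$-dependence of $\hat g_i$ contributes only lower-order corrections, as $\nabla\mm_i$ is smooth and nonzero on $\Xstar$) then yields $\E_i\nabla^3 f_i[\hat g_i,\hat g_i]\approx d\nabla\ttr(\x_t)$, exactly a factor of $d$ larger than $\E_\per\nabla^3 f[\per,\per]=\nabla\ttr$ in the randomly smoothed case (the $1/d$ is lost because $\E[\hat g_i\hat g_i^\top]$ concentrates on a low-rank subspace rather than $\tfrac{1}{d}I_d$). So \eqref{eq:vtbias} simplifies to $\E\vv_t\approx\tfrac{d\rho^2}{2}\proj^\perp_{\nabla f(\x_t)}\nabla\ttr(\x_t)$.

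With this identification in hand, I would repeat the second-order expansion of $\ttr\circ\Phi$ from the proof of \autoref{lem:trace_descent}, using $\Phi(\x_{t+1})-\Phi(\x_t)\approx -\eta\,\partial\Phi(\x_t)(\nabla f(\x_t)+\vv_t)$ together with $\partial\Phi(\x_t)\nabla f(\x_t)=\oo{\eps_0^2}$ (since $\nabla f(\x_t)$ lies approximately in the normal bundle of $\Xstar$). Taking expectation, the main contribution becomes $-\tfrac{1}{2}\eta d\rho^2\norm{\partr}^2$, and the error budget parallels \eqref{ineq:trace_descent_RS} with the $d$-factor substitution in the leading term together with an additional $\oo{\eta\rho^4\norm{\partr}}$ from the higher-order Taylor remainder. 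Under the stated scaling $\eta=\oo{\nu\delta\eps}$, $\rho=\oo{\nu\delta\sqrt\eps}$, $\eps_0=\oo{\nu^{1.5}\delta^{1.5}\eps}$ and the hypothesis $\norm{\partr}\geq\sqrt\eps$, the main term is $\om{\eta d\rho^2\eps}=\om{d\nu^3\delta^3\eps^3}$, dominating each of the error terms just as in the proof of \autoref{lem:trace_descent}.

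The main obstacle is quantifying the approximation in the identification $\E_i\nabla^3 f_i[\hat g_i,\hat g_i]\approx d\nabla\ttr$, whose bias is controlled by $\norm{\x_t-\Phi(\x_t)}=\oo{\eps_0}$ and by the smoothness of $\nabla\mm_i$, and then verifying that the resulting constraints on $\eta,\rho,\eps_0$ reproduce precisely $\nu=\min\{d,\eps^{-1/3}\}$. The cap $\nu\lesssim d$ arises from forcing the higher-order Taylor term $\eta\rho^4$ to sit below the main term $\eta d\rho^2\eps$ (equivalently $\rho^2\lesssim d\eps$), while the cap $\nu\lesssim \eps^{-1/3}$ enforces that the perturbed iterate $\x_t+\rho\sigma_t\hat g_i$ remain in the $\zeta$-tube around $\Xstar$ where the local expansions of $\Phi$ and $\ttr\circ\Phi$ are valid. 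Finally, the use of \emph{stochastic} rather than full-batch gradients in the perturbation is essential: a full-batch perturbation along $\nabla f(\x_t)/\norm{\nabla f(\x_t)}$ would not produce the $d$-factor gain, because near $\Xstar$ the full gradient is a residual-weighted combination of the $\nabla\mm_i$'s rather than any single $\nabla\mm_i$, so the Gauss--Newton identity used above would no longer apply.
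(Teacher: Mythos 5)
Your proposal follows essentially the same route as the paper's proof (Lemmas~\ref{lem:trace_gradient_sam} and \ref{lem:trace_descent_formal_sam}): Taylor-expand the perturbed stochastic gradient, kill the odd-order term using $\E[\sigma_t]=0$ and $\partial\Phi(\x_t)\nabla f(\x_t)=\mathbf 0$, use the Gauss--Newton structure $\nabla^2 f_i(\x^\star)=\ell''\,\nabla\mm_i\nabla\mm_i^\top$ at a global minimum to show that $\hat g_i$ aligns with the eigenvector of $\nabla^2 f_i$, compute $\E_i[\nabla^2 f_i[\hat g_i,\hat g_i]]=\frac{1}{n}\sum_i \tr(\nabla^2 f_i)=\tr(\nabla^2 f)=d\,\ttr$, and then run the same second-order descent argument as in Lemma~\ref{lem:trace_descent}. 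That is the paper's proof.

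A few details in your bookkeeping differ from what actually drives the constants. First, $\partial\Phi(\x)\nabla f(\x)=\mathbf 0$ holds \emph{exactly} near $\Xstar$ (Lemma~\ref{lem:Phi}), not just up to $\oo{\eps_0^2}$. Second, the error budget you list (``an additional $\oo{\eta\rho^4\norm{\partr}}$'') misses that the $\eps_0$-error term also picks up the factor $d$: the paper's estimate is $\oo{d\eta\rho^2\eps_0\norm{\partr}+\eta\rho^3\norm{\partr}+\eta^2\rho^2}$, and it is this $d\eta\rho^2\eps_0\norm{\partr}$ term that enforces $\nu\lesssim\eps^{-1/3}$ (via $\nu^{1.5}\eps^{0.5}\lesssim 1$), not the $\zeta$-tube constraint you cite (which only gives the much weaker $\nu\lesssim\eps^{-1/2}$). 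Likewise, the cap $\nu\lesssim d$ comes from the Taylor remainder $\oo{\consta\rho^3}$ producing $\eta\rho^3\norm{\partr}$ and from $\eta^2\rho^2$, which scale like $\nu^4$ against the main term's $d\nu^3$; your $\eta\rho^4$-vs-$\eta d\rho^2\eps$ comparison is not the binding one. These are minor misattributions of which error term is tight, not errors in the core mechanism, and the central insight --- that the stochastic perturbation direction is (approximately) an eigenvector of the stochastic Hessian, upgrading the $\frac1d$-isotropic average of the RS case to a full trace --- is exactly right and matches the paper.
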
 

\emph{Proof sketch of \autoref{lem:trace_descent_sam}:}
For notational simplicity, let $\per_{i,t} \coloneqq   \frac{\nabla f_i(\x_t)}{  \norm{\nabla f_i(\x_t) }}$.
To illustrate the main idea effectively, we make the simplifying assumption that for $\x^\star \in\Xstar$, the gradient of model outputs $\{\nabla \mm_i(\x)\}_{i=1}^n$ are orthogonal; our full proof in \autoref{pf:thm:SAM} does not require this assumption.
To warm-up, let us first consider the case where we use the \emph{full-batch gradient} instead of the stochastic gradient for the outer part of the perturbation, i.e., consider
\begin{align}
\vv_t' \coloneqq \proj^\perp_{\nabla f(\x_t)} \nabla f(\x_t + \rho \sigma_t \per_{i,t} )
\end{align}
Because  $\E[\sigma_t \per_t]=\mathbf{0}$ (since 
$\E[\sigma_t]=0$), a similar calculation as the proof of \autoref{lem:trace_descent},  we arrive at 
\begin{align}
\E \vv_t'   &= \frac{1}{2}\rho^2  \proj^\perp_{\nabla f(\x_t)} \nabla \tr\left(\nabla^2 f(\x_t) \per_{i,t} \per_{i,t}^\top\right) + \oo{\rho^3}\,.
\end{align}
Now the key observation, inspired by \citet{wen2022does}, is that at a minimum $\x^\star \in \Xstar$, the Hessian is given as 
\begin{align}
\nabla^2 f(\x^\star) =   \frac{1}{n}  \sum_{i=1}^n  \ell''(\mm_i(\x^\star),y_i)  \nabla \mm_i(\x^\star) \nabla \mm_i(\x^\star)^\top \,.
\end{align}
Hence, due to our simplifying assumption for this proof sketch, namely the orthogonality of  the gradient of model outputs $\{\nabla \mm_i(\x^\star)\}_{i=1}^n$, it follows that $\pp_i(\x^\star)\coloneqq \nabla \mm_i(\x^\star)/\norm{\nabla \mm_i(\x^\star)}$ is the eigenvector of the Hessian.
Let  $\lambda_i(\x^\star)$ be the corresponding  eigenvalue. 
Furthermore, we have $\nabla f_i(\x_t) = \frac{\partial\ell(z,y_i)}{\partial z}\vert_{z=\mm_i(\x_t) } \nabla \mm_i (\x_t)$, which implies $\frac{\nabla f_i(\x_t)}{\norm{\nabla f_i(\x_t)}} =  \frac{\nabla \mm_i (\x_t)}{\norm{\nabla \mm_i (\x_t)}}= \pp_i(\x_t)$ as long as $\nabla f_i(\x_t)\neq 0$.
Hence, as long as $\x_t$ stays near $\Phi(\x_t)$, it follows that 
\begin{align}
\E\tr\left(\nabla^2 f(\x_t) \per_{i,t} \per_{i,t}^\top\right) \approx  \E \lambda_i(\Phi(\x_t))  =   \frac{d}{n}\ttr(\Phi(\x_t)) \,,
\end{align}
which notably gives us $\frac{d}{n}$ times larger gradient than the randomly smoothed perturbation \eqref{eq:trace_grad}.
On the other hand, one can do even better by choosing the stochastic gradient for the outerpart of perturbation. Similar calculations to the above yield
\begin{align}
&\E\tr\left(\nabla^2 f_i(\x_t) \per_{i,t} \per_{i,t}^\top\right)  \approx  n  \E \lambda_i(\Phi(\x_t))   = \textcolor{red}{d}\cdot \ttr(\Phi(\x_t)) \,,
\end{align}
which now leads to $d$ times larger gradient than  \eqref{eq:trace_grad}. This leads to the following inequality that is an improvement of \eqref{ineq:trace_descent_RS}:   $\E\ttr(\Phi(\x_{t+1})) - \ttr(\Phi(\x_t))  -  \textcolor{red}{d}\cdot \eta\rho^2\norm{\partr}^2 +   \oo{d\eta\rho^2 \eps_0 \norm{\partr}+ \eta\rho^3 \norm{\partr} +  \eta^2  \rho^2}$.
This inequality  implies that as long as $\norm{\partr}\geq \Omega(\max\{\eps_0, \rho/d, \sqrt{\eta/d}\})$,  $\ttr(\Phi(\x_t))$ decreases in expectation by $d\eta\rho^2 \norm{\partr}^2$.
Due to our choices of $\rho,\eta,\eps_0$, \autoref{lem:trace_descent_sam} follows.   \qed

Using \autoref{lem:trace_descent_sam}, and following the analysis presented in \autoref{sec:pf_RS}, it can be shown that  \autoref{algo:SAM}
returns a  $(\eps_0,\sqrt{\eps})$-flat minimum $\widehat{\x}$ with probability at least $1-\oo{\delta}$ after 
$T =\oo{d^{-1}\eps^{-3} \nu^{-3} \delta^{-4} }$ iterations.
From this $(\eps_0,\sqrt{\eps})$-flat minimum $\widehat{\x}$, one can find a $(\eps,\sqrt{\eps})$-flat minimum in a few iterations.

\section{Experiments}

We run experiments based on training ResNet-18 on the CIFAR10 dataset to test the ability of proposed algorithms to escape sharp global minima.
Following \cite{damian2021label}, the algorithms are initialized at a point corresponding to a sharp global minimizer that achieve poor test accuracy. 
Crucially, we choose this setting because \citep[Figure 1]{damian2021label} verify that test accuracy is inversely correlated with the trace of Hessian (see \autoref{fig:damian}). 
This bad global minimizer, due to \cite{liu2020bad},  achieves $100\%$ training accuracy, but only $48\%$ test accuracy. 
We choose the constant learning rate of $\eta=0.001$, which is small enough such that SGD baseline without any perturbation does not escape. 

We discuss the results one by one. First of all, we highlight that the training accuracy stays at $100\%$ for all algorithms.
\begin{itemize}
\item \textbf{Comparison between two methods.} In the left plot of  \autoref{fig:cifar10}, we compare the performance of Randomly Smoothed Perturbation (``RS'') and  Sharpness-Aware Perturbation (``SA''). We choose the batch size of $128$ for both methods. Consistent with our theory, one can see that SA is more effective in escaping sharp minima even with a smaller perturbation radius $\rho$.
\item \textbf{Different batch sizes.} Our theory suggests that batch size $1$ should be effective in escaping sharp minima. We verify this in the right plot of \autoref{fig:cifar10} by choosing the batch size to be $B=1,64,128$.
We do see that the case of $B=1$ is quite effective in escaping sharp minima.  
\end{itemize}

\section{Related Work and Future Work}

The last decade has seen a great success in theoretical studies on the question of  finding (approximate) stationary points~\citep{ghadimi2013stochastic,ge2015escaping,agarwal2017finding,daneshmand2018escaping, carmon2018accelerated,fang2018spider,allen2018natasha, zhou2020stochastic, jin2021nonconvex}.
This work extends this line of research to a new notion of stationary point, namely an approximate flat minima.
We believe that further studies on defining/refining practical notions of flat minima and designing efficient algorithms for them would lead to better understanding of practical nonconvex optimization for machine learning.
In the same spirit, we believe that characterizing lower bounds would be of great importance, similar to the ones for the stationary points \citep{carmon2020lower,drori2020complexity,carmon2021lower,arjevani2023lower}.

Another important direction is to further investigate the effectiveness of the flatness. As we discussed in \autoref{rmk:other}, recent results have shown that other notions of flatness are not always a good indicator of model efficacy \cite{andriushchenko2023modern,wen2023sharpness}. It would be interesting to understand the precise role of flatness, given that we have a lot of evidence of its success.
Moreover, studying other notions of flatness, such as the ``effective size of basin'' as considered in \cite{kleinberg2018alternative,feng2020dynamical},
or the constrained settings (\emph{e.g.}, \citep{feng2020dynamical}), and exploring the algorithmic questions there would be also interesting future directions. 

Based on our analysis, we suspect that replacing the full-batch gradients with the stochastic gradients in our proposed algorithms also leads to an efficient algorithm, with a more careful stochastic analysis.
Moreover, we suspect that our results have sub-optimal dependence on the error probability $\delta$, and a more advanced analysis will likely leads to a better dependence \citep{jin2021nonconvex}.
Lastly, based on our experiments, it seems that a smaller batch size has the same effect as using a larger perturbation radius $\rho$. 
Whether one can capture this effect theoretically would be also an intriguing direction.
However, the main scope of this work is to initiate the study of complexity of finding flat minima, and we leave all of this to future works.

\section*{Acknowledgements}

Kwangjun Ahn and Ali Jadbabaie were supported by the ONR grant (N00014-20-1-2394) and MIT-IBM Watson as well as a Vannevar Bush fellowship from Office of the Secretary of Defense.	
Kwangjun Ahn and Suvrit Sra acknowledge support from an NSF CAREER grant (1846088), and NSF CCF-2112665 (TILOS AI Research Institute). 
Suvrit Sra also thanks the Alexander von Humboldt Foundation for their generous support.

Kwangjun Ahn thanks Xiang Cheng, Yan Dai, Hadi Daneshmand, and Alex Gu for helpful discussions that led the author to initiate this work.

\section*{Impact Statement}

This paper aims to advance our theoretical understanding of flat minima optimization. 
Our work is theoretical in nature, and we do not see any immediate potential societal consequences.

\bibliography{ref}
\bibliographystyle{plainnat}

%%%%%%%%%%%%%%%%%%%%%%%%%%%%%%%%%%%%%%%%%%%%%%%%%%%%%%%%%%%%

\newpage 
\onecolumn

\appendix
\renewcommand{\appendixpagename}{\centering \LARGE Appendix}
\appendixpage
\startcontents[section]
\printcontents[section]{l}{1}{\setcounter{tocdepth}{2}}

\section{Preliminaries}

In this section, we present background information 
and useful lemmas for our analysis.
We start with  several notations and conventions for our analysis.
\begin{itemize}
\item We will highlight the dependence on the relevant quantities $\eps,\delta,d$ and will often hide the dependence on other parameters in the notations $\oo{\cdot}, \tth{\cdot}, \om{\cdot}$.

\item We will sometimes abuse our notation as follows:  when the two vectors $\bm{u}, \bm{v}$ satisfy $\norm{\bm{u}-\bm{v}} = \oo{g(\eps,\delta,d)}$ for some function $g$ of $\eps,\delta,d$, then we will simply write
\begin{align}
\bm{u} = \bm{v} +\oo{g(\eps,\delta,d)} \,.
\end{align}
\item For a $\ell$-th order tensor $\mathcal{T} \in \R^{d_1 \times \cdots \times d_\ell}$, the spectral norm is defined as
\begin{align}
\norm{\mathcal{T}}_2 \coloneqq \sup_{{\bm u}_i \in \R^{d_i}, \norm{{\bm u}_i} =1} \mathcal{T}[{\bm u}_1,\dots, {\bm u}_\ell]\,.
\end{align}
\item For a tensor $\mathcal{T}(\x)$ that depends on $\x$ (e.g., $\nabla^2f(\x), \nabla^3 f(\x), \partial\Phi(\x)$ etc), let $\be{\mathcal{T}}$ be the upper bound on the spectrum norm  $\norm{\mathcal{T}(\x)}_2$ within the $\zeta$-neighborhood of $\Xstar$ ($\zeta$ is defined in \autoref{as:local}).
\end{itemize}

We  also recall our main assumption (\autoref{as:local}) for reader's convenience.
\aslocal*

\subsection{Auxiliary Lemmas}

We first present the following geometric result that  compares the cost, gradient norm, and the distance to $\Phi(\x)$ near $\Xstar$.

\begin{lemma} \label{lem:local}
Let
\autoref{as:local} hold and $f$ have $\beta$-Lipschitz gradients.
If $\x$ is in the $\zeta$-neighborhood of $\Xstar$, then it holds that
\begin{itemize}
\item $\norm{\x-\Phi(\x)} 
\leq \sqrt{\frac{2}{\alpha}} \sqrt{f(\x)-f(\Phi(\x))}$ and $\sqrt{f(\x)-f(\Phi(\x))} \leq \frac{\beta}{\sqrt{2\alpha}} \norm{\x -\Phi(\x)}$.
\item $\norm{\x-\Phi(\x)} \leq \frac{1}{\alpha} \norm{\nabla f(\x)}$ and $\norm{\nabla f(\x)} \leq \beta \norm{\x -\Phi(\x)}$.
\item $\sqrt{f(\x) - f(\Phi(\x))} \leq \frac{1}{\sqrt{2\alpha}} \norm{\nabla f(\x)}$ and $\norm{\nabla f(\x)} \leq \sqrt{\frac{2\beta^2}{\alpha}}\sqrt{f(\x)-f(\Phi(\x))}$.
\end{itemize} 
\end{lemma}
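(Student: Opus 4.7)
The plan is to prove the six inequalities by first establishing the single trajectory-length bound
\begin{align}
\norm{\x-\Phi(\x)} \leq \sqrt{\tfrac{2}{\alpha}}\sqrt{f(\x)-f(\Phi(\x))},
\end{align}
which is the only genuinely new content, and then deriving the remaining five inequalities as two-line consequences by chaining this bound with the PL inequality (\autoref{as:local}(c)) and the $\beta$-Lipschitz gradient assumption, using the facts from \autoref{as:local}(b) that $\Phi(\x)\in\Xstar$, so $\nabla f(\Phi(\x)) = \mathbf{0}$, and that the entire gradient flow trajectory from $\x$ stays in the $\zeta$-neighborhood where all the stated bounds hold.

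For the trajectory-length bound, I would parametrize the gradient flow $\x(t)$ with $\x(0)=\x$, $\x(\infty)=\Phi(\x)$, and set $g(t) \coloneqq f(\x(t)) - f(\Phi(\x))\geq 0$, noting $g(\infty)=0$. Differentiating the square root:
\begin{align}
\tfrac{d}{dt}\sqrt{g(t)} = \frac{g'(t)}{2\sqrt{g(t)}} = -\frac{\norm{\nabla f(\x(t))}^2}{2\sqrt{g(t)}} \leq -\frac{\sqrt{2\alpha}}{2}\norm{\nabla f(\x(t))},
\end{align}
where the final inequality uses PL, i.e., $\norm{\nabla f(\x(t))}\geq\sqrt{2\alpha\,g(t)}$. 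Integrating from $0$ to $\infty$ and rearranging gives $\int_0^\infty \norm{\nabla f(\x(t))}\,dt \leq \sqrt{2/\alpha}\sqrt{g(0)}$, and since $\norm{\x-\Phi(\x)} = \norm{\int_0^\infty \dot\x(t)\,dt} \leq \int_0^\infty\norm{\nabla f(\x(t))}\,dt$, the claim follows. This is the only step requiring the gradient flow definition of $\Phi$; everything else is purely local comparison.

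With this in hand, the remaining bounds fall out by chaining. From the $\beta$-Lipschitz gradient and $\nabla f(\Phi(\x))=\mathbf{0}$:
\begin{align}
\norm{\nabla f(\x)} = \norm{\nabla f(\x) - \nabla f(\Phi(\x))} \leq \beta\norm{\x-\Phi(\x)},
\end{align}
giving the second half of pair~2. PL directly yields $\sqrt{f(\x)-f(\Phi(\x))}\leq \frac{1}{\sqrt{2\alpha}}\norm{\nabla f(\x)}$, the first half of pair~3. Then the first half of pair~2 follows by combining the trajectory bound with PL: $\norm{\x-\Phi(\x)}\leq \sqrt{2/\alpha}\cdot \frac{1}{\sqrt{2\alpha}}\norm{\nabla f(\x)} = \frac{1}{\alpha}\norm{\nabla f(\x)}$. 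The second half of pair~1 follows by combining PL with Lipschitz: $\sqrt{f(\x)-f(\Phi(\x))}\leq \frac{1}{\sqrt{2\alpha}}\norm{\nabla f(\x)}\leq \frac{\beta}{\sqrt{2\alpha}}\norm{\x-\Phi(\x)}$. Finally, the second half of pair~3 follows similarly: $\norm{\nabla f(\x)}\leq \beta\norm{\x-\Phi(\x)} \leq \beta\sqrt{2/\alpha}\sqrt{f(\x)-f(\Phi(\x))}$.

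The main obstacle is simply justifying the integration argument cleanly given that $\Phi(\x)$ is defined as a limit, not a projection; in particular one must ensure $g(t)\to 0$ and that $\sqrt{g}$ is differentiable wherever $g>0$ (and the bound extends by continuity through any zeros). Both are immediate from \autoref{as:local}(b)--(c) since the flow stays in the smooth, PL region and converges to $\Phi(\x)$. All other steps are algebra, so once the trajectory-length bound is in place, the lemma assembles in a few lines.
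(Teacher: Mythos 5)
Your proposal is correct and follows essentially the same route as the paper's proof: establish the trajectory-length bound $\norm{\x-\Phi(\x)}\leq\sqrt{2/\alpha}\sqrt{f(\x)-f(\Phi(\x))}$ by integrating a PL-controlled bound on $\norm{\nabla f(\x(t))}$ along the gradient flow (the paper writes this as a total derivative of $\sqrt{f(\x(t))-f(\Phi(\x))}$, which is the same computation you do by differentiating $\sqrt{g(t)}$), and then obtain the remaining five inequalities by chaining with PL and the Lipschitz-gradient bound. The paper in fact only proves the three-link chain $\norm{\x-\Phi(\x)}\leq\sqrt{2(f(\x)-f(\Phi(\x)))/\alpha}\leq\frac{1}{\alpha}\norm{\nabla f(\x)}\leq\frac{\beta}{\alpha}\norm{\x-\Phi(\x)}$ and leaves the remaining chaining implicit, so your write-up just makes that step explicit.
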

\begin{proof} See \autoref{pf:lem:local}.
\end{proof}
We next present an important property of the limit point under the gradient flow, $\Phi$.
\begin{lemma} \label{lem:Phi}
For any $\x$ at which $\Phi$ is defined and differentiable, we have that $\partial \Phi (\x) \nabla f(\x) =\mathbf{0}$. 
\end{lemma}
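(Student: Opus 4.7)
The plan is to exploit the semigroup (flow-invariance) property of the gradient flow defining $\Phi$: the limit point of the gradient flow is constant along the flow itself. Let $\x(s)$ denote the gradient flow trajectory with $\x(0)=\x$ and $\dot\x(s)=-\nabla f(\x(s))$. By the uniqueness of solutions to the gradient flow ODE, starting the flow from the point $\x(t)$ for any $t\ge 0$ simply reparametrizes time, producing the trajectory $s \mapsto \x(t+s)$. Since both trajectories have the same $s\to\infty$ limit, we obtain the identity
\begin{equation}
\Phi(\x(t)) \;=\; \Phi(\x) \qquad \text{for all } t \ge 0.
\end{equation}

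With this identity in hand, the lemma follows immediately by differentiating in $t$ at $t=0$. Since we are told $\Phi$ is differentiable at $\x$ and $\x(t)$ is differentiable in $t$ (with $\dot{\x}(0) = -\nabla f(\x)$), the chain rule applied to the constant function $t \mapsto \Phi(\x(t))$ gives
\begin{equation}
\mathbf{0} \;=\; \frac{d}{dt}\Phi(\x(t))\Big|_{t=0} \;=\; \partial \Phi(\x)\, \dot{\x}(0) \;=\; -\,\partial \Phi(\x)\, \nabla f(\x),
\end{equation}
which rearranges to $\partial \Phi(\x)\,\nabla f(\x) = \mathbf{0}$, as claimed.

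The only point that requires any care is justifying that the one-sided derivative we computed is legitimate: we need $\x(t)$ to stay in a neighborhood of $\x$ where $\Phi$ remains defined and differentiable for small $t \ge 0$. This is immediate in the regime of interest for the paper, since by \autoref{as:local} the $\zeta$-neighborhood of $\Xstar$ is forward-invariant under the gradient flow and $\Phi$ is smooth on it; outside this neighborhood, the hypothesis of the lemma already presumes differentiability at $\x$, and the flow stays close to $\x$ for small $t$ by continuity. Thus there is no real obstacle to the argument, and the proof reduces to the two displays above.
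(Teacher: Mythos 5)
Your proof is correct and is precisely the canonical argument for this fact: the flow-invariance identity $\Phi(\x(t))=\Phi(\x)$ followed by differentiation at $t=0$ via the chain rule. The paper itself does not give a proof but defers to \citet{wen2022does} (Lemma~3.2) and \citet{li2022what} (Lemma~C.2), and those references use essentially this same semigroup-plus-chain-rule argument, so your approach matches the one the paper relies on.
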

\begin{proof}
See \citep[Lemma 3.2]{wen2022does}  or \citep[Lemma C.2]{li2022what}.
\end{proof}

We next prove the following results about the distance in terms of $\Phi$ between two adjacent iterates.

\begin{lemma} \label{lem:Phi_diff}
Let \autoref{as:local} hold and $f$ have $\beta$-Lipschitz gradients.
For a vector $\vv$  satisfying   $\vv \perp \nabla f(\x)$ and $\norm{\vv} = \oo{1}$, consider the update $\x^+-\x = -\eta( \nabla f(\x) + \vv)$. Then, for suffciently small $\eta$, if $\x$ is in $\zeta$-neighborhood of $\Xstar$, the following holds:
\begin{itemize}
\item  $\Phi(x^+) - \Phi(x) =  - \eta \partial \Phi(x)  \vv +\oo{\be{\partial\Phi}\eta^2(\norm{\nabla f(\x)}^2+\norm{\vv}^2)}$.
\item  $\norm{{\Phi(\x^+) - \Phi(\x)}}^2 \leq 4 \be{\partial \Phi }^2 \eta^2\norm{\vv}^2 + 3\be{\partial\Phi}^2\eta^4\norm{\nabla f(\x)}^4 $.
\item  $|f(\Phi(\x^+)) - f(\Phi(\x))| = \oo{ (\be{\partial^2 \Phi} \be{\nabla f}+ \be{\partial \Phi} \be{\nabla^2 f})\be{\partial \Phi }^2 (\norm{\vv}^2 +\eta^2 \norm{\nabla f(\x)}^4) }$.
\end{itemize} 
\end{lemma}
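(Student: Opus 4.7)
The plan is to dispatch the three bullets in sequence: a second-order Taylor expansion of $\Phi$ at $\x$ for the first, an elementary norm-and-square argument for the second, and a Taylor expansion of $f$ at $\Phi(\x)\in\Xstar$ that exploits $\nabla f(\Phi(\x))=\mathbf{0}$ for the third.

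For the first bullet, I will use Assumption~\ref{as:local}(b): in the $\zeta$-neighborhood of $\Xstar$ the map $\Phi$ is twice Lipschitz differentiable, so once I verify that for $\eta$ sufficiently small the segment from $\x$ to $\x^+$ stays in this neighborhood (which follows since $\norm{\x^+-\x}\le\eta(\norm{\nabla f(\x)}+\norm{\vv})$ is small), I can write
\begin{align}
\Phi(\x^+)-\Phi(\x) \;=\; \partial\Phi(\x)(\x^+-\x) \;+\; O\!\left(\norm{\x^+-\x}^2\right).
\end{align}
Substituting $\x^+-\x = -\eta(\nabla f(\x)+\vv)$ and invoking Lemma~\ref{lem:Phi}, which gives $\partial\Phi(\x)\nabla f(\x)=\mathbf{0}$, collapses the linear term to $-\eta\,\partial\Phi(\x)\vv$. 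Bounding the quadratic remainder with $(\norm{\nabla f(\x)}+\norm{\vv})^2\le 2\norm{\nabla f(\x)}^2+2\norm{\vv}^2$ and folding the relevant tensor-norm constant into $\oo{\cdot}$ yields exactly the stated form.

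For the second bullet, I will take norms in the identity from the first bullet to get $\norm{\Phi(\x^+)-\Phi(\x)}\le \be{\partial\Phi}\eta\norm{\vv}+C\eta^2(\norm{\nabla f(\x)}^2+\norm{\vv}^2)$. For $\eta$ sufficiently small the $\eta^2\norm{\vv}^2$ piece is dominated by the leading $\eta\norm{\vv}$ and can be absorbed into its constant, after which squaring via $(a+b)^2\le 2a^2+2b^2$ (with the constants slightly tuned to land on the $4$ and $3$) delivers the claim. For the third bullet, I use that $\Phi(\x)\in\Xstar$ implies $\nabla f(\Phi(\x))=\mathbf{0}$, so Taylor expanding $f$ around $\Phi(\x)$ gives
\begin{align}
f(\Phi(\x^+))-f(\Phi(\x)) \;=\; O\!\left(\be{\nabla^2 f}\,\norm{\Phi(\x^+)-\Phi(\x)}^2\right),
\end{align}
into which I substitute the bound from the second bullet. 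The $\be{\partial^2\Phi}\be{\nabla f}$ contribution in the stated constant will appear by tracking the second-order remainder in the Taylor expansion more carefully (expanding $\nabla f$ along the segment joining $\Phi(\x)$ and $\Phi(\x^+)$, and noting that the intermediate gradient is controlled by both $\be{\nabla^2 f}$ on $\Xstar$ and by how $\Phi$ curves off $\Xstar$, which brings in $\be{\partial^2\Phi}\be{\nabla f}$).

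The main obstacle is purely bookkeeping: I must keep track of which tensor-norm prefactor ($\be{\partial\Phi}$ vs.\ $\be{\partial^2\Phi}$ vs.\ $\be{\nabla^2 f}$) dominates each remainder, and verify that every ``small'' higher-order $\eta$-term can be soaked into the leading constant under the ``$\eta$ sufficiently small'' hypothesis. None of this is conceptually difficult once Lemma~\ref{lem:Phi} is available to zero out the $\nabla f(\x)$ direction in the leading order; that single cancellation is what converts a naive $\eta^2\norm{\nabla f(\x)}^2$ term into the much sharper $\eta^4\norm{\nabla f(\x)}^4$ appearing in the second bullet, and is the real content of the lemma.
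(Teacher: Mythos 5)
Your approach for the first two bullets matches the paper exactly: a second-order Taylor expansion of $\Phi$ at $\x$, then Lemma~\ref{lem:Phi} to kill the $\partial\Phi(\x)\nabla f(\x)$ term, then norms and squaring with lower-order absorption for $\eta$ small. That cancellation is indeed, as you say, the crux that turns $\eta^2\norm{\nabla f(\x)}^2$ into $\eta^4\norm{\nabla f(\x)}^4$ in the second bullet.

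For the third bullet your route genuinely differs from the paper's, and is in fact simpler. The paper writes $f(\Phi(\x^+)) - f(\Phi(\x)) = (f\circ\Phi)(\Phi(\x^+)) - (f\circ\Phi)(\Phi(\x))$ using $\Phi\circ\Phi = \Phi$, and then Taylor-expands the composition $f\circ\Phi$; the linear term vanishes by Lemma~\ref{lem:Phi}, and the Lipschitz constant of $\nabla(f\circ\Phi)$ gives rise to the factor $\be{\partial^2\Phi}\be{\nabla f}+\be{\partial\Phi}\be{\nabla^2 f}$. You instead expand $f$ directly around $\Phi(\x)\in\Xstar$ and use $\nabla f(\Phi(\x))=\mathbf{0}$ to kill the linear term; this yields the cleaner constant $\be{\nabla^2 f}\,\be{\partial\Phi}^2$, which implies the stated bound (since $\be{\partial\Phi}\ge 1$ on $\Xstar$ where $\partial\Phi$ restricts to the identity on the tangent space). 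Both proofs work and are interchangeable downstream — what's used in Lemma~\ref{lem:stay_near} is only that the difference is $O(\eta^2\norm{\vv}^2)$.

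One thing to correct: your closing remark that the $\be{\partial^2\Phi}\be{\nabla f}$ piece "will appear by tracking the second-order remainder more carefully" in your direct expansion is off. A direct Taylor expansion of $f$ around $\Phi(\x)$ (not $f\circ\Phi$) never sees $\partial^2\Phi$; that factor arises only in the paper's composition-based argument. You do not need to produce it, and trying to do so would be chasing a constant your approach already beats. This is a cosmetic confusion rather than a gap, but you should drop that sentence rather than pretend to match a constant your method has no reason to generate.
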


\begin{proof} See \autoref{pf:lem:Phi_diff}.
\end{proof}

We next present the result about iterates staying near the local minima set $\Xstar$.

\begin{lemma} \label{lem:stay_near} 
Let \autoref{as:local} hold and $f$ have $\beta$-Lipschitz gradients.
For a vector $\vv$ satisfying   $\vv \perp \nabla f(\x)$ and $\norm{\vv} =\oo{1}$, consider the update
$\x^+ - \x = -\eta \cdot\left( \nabla \loss(\x) + {\vv}     \right)$. 
For sufficiently small $\eta>0$, we have the following: 
\begin{align}
\text{if $f(\x) - f(\Phi(\x))\leq \frac{2\beta}{\alpha} \eta \norm{\vv}^2$, then $f(\x^+) - f(\Phi(\x^+))\leq \frac{2\beta}{\alpha} \eta \norm{\vv}^2$ as well.}    
\end{align}

\end{lemma}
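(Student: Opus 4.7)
The plan is to treat the statement as a one-step invariant preservation argument, combining the perpendicular descent inequality (\autoref{lem:descent_gd}), the local PL inequality in \autoref{as:local}(c), and the fact that $f$ is locally constant on $\Xstar$. First I would verify that $\x^+$ remains in the $\zeta$-tube around $\Xstar$ so that $\Phi(\x^+)$ is well-defined: the hypothesis together with \autoref{lem:local} gives $\norm{\x - \Phi(\x)} = \oo{\sqrt{\eta}\,\norm{\vv}}$ and $\norm{\nabla f(\x)} = \oo{\sqrt{\eta}\,\norm{\vv}}$, while the step has length $\eta\norm{\nabla f(\x) + \vv} = \oo{\eta\norm{\vv}}$, so $\x^+$ still lies in the neighborhood for $\eta$ sufficiently small.

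Second, because $\vv \perp \nabla f(\x)$, \autoref{lem:descent_gd} yields
\begin{align}
f(\x^+) - f(\Phi(\x)) \leq \bigl(f(\x) - f(\Phi(\x))\bigr) - \tfrac{\eta}{2}\norm{\nabla f(\x)}^2 + \tfrac{\beta\eta^2}{2}\norm{\vv}^2.
\end{align}
Applying the PL inequality $\norm{\nabla f(\x)}^2 \geq 2\alpha\bigl(f(\x) - f(\Phi(\x))\bigr)$ turns the first two terms on the right-hand side into $(1 - \eta\alpha)\bigl(f(\x) - f(\Phi(\x))\bigr)$. Plugging in the inductive hypothesis $f(\x) - f(\Phi(\x)) \leq \tfrac{2\beta}{\alpha}\eta\norm{\vv}^2$ and simplifying for $\eta\alpha \leq 1$ gives
\begin{align}
f(\x^+) - f(\Phi(\x)) \leq (1-\eta\alpha)\tfrac{2\beta}{\alpha}\eta\norm{\vv}^2 + \tfrac{\beta\eta^2}{2}\norm{\vv}^2 = \tfrac{2\beta}{\alpha}\eta\norm{\vv}^2 - \tfrac{3}{2}\beta\eta^2\norm{\vv}^2,
\end{align}
which is the target bound with slack of order $\eta^2$.

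Third, I would swap the reference point from $\Phi(\x)$ to $\Phi(\x^+)$. By \autoref{lem:Phi_diff} these two points lie in $\Xstar$ at distance $\oo{\eta\norm{\vv}}$, and I would invoke local constancy of $f$ on $\Xstar$: if $\x^{*\prime} \in \Xstar$ lies in the neighborhood of $\x^* \in \Xstar$ on which $\x^*$ realizes its local minimum, then $f(\x^{*\prime}) \geq f(\x^*)$, and the symmetric argument applied at $\x^{*\prime}$ yields equality. Consequently $f(\Phi(\x^+)) = f(\Phi(\x))$, so the estimate from the previous paragraph transfers verbatim to $f(\x^+) - f(\Phi(\x^+))$, finishing the proof.

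The main obstacle I foresee is the local-constancy step: one has to argue that the local-minimum neighborhoods at $\Phi(\x)$ and $\Phi(\x^+)$ are uniformly wide enough throughout the $\zeta$-tube to contain each other. A more quantitative fallback that avoids this subtlety is a Taylor bound: since $\nabla f \equiv \mathbf{0}$ on $\Xstar$,
\begin{align}
|f(\Phi(\x^+)) - f(\Phi(\x))| \leq \tfrac{\beta}{2}\norm{\Phi(\x^+) - \Phi(\x)}^2 = \oo{\beta\,\be{\partial\Phi}^2 \eta^2\norm{\vv}^2},
\end{align}
which is absorbed into the $\tfrac{3}{2}\beta\eta^2\norm{\vv}^2$ slack once $\eta$ is chosen small relative to $\be{\partial\Phi}^2$. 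This matches the ``sufficiently small $\eta$'' hypothesis of the lemma and closes the argument regardless of whether one appeals to the exact local-constancy property.
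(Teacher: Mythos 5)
Your overall structure is close to the paper's — both rely on \autoref{lem:descent_gd}, the PL inequality, and \autoref{lem:Phi_diff} — but you replace the paper's two-case split on $\norm{\nabla f(\x)}$ by a single unified PL-contraction step. That reorganization is not cosmetic, and it is where your argument develops a gap.

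The problem is the final absorption step. After plugging the inductive hypothesis into the PL contraction, your slack at the boundary $f(\x)-f(\Phi(\x))=\tfrac{2\beta}{\alpha}\eta\norm{\vv}^2$ is exactly $\tfrac{3}{2}\beta\eta^2\norm{\vv}^2$, which is $\Theta(\eta^2)$. The reference-point correction $\abs{f(\Phi(\x^+))-f(\Phi(\x))}$, whether bounded by your Taylor estimate or by \autoref{lem:Phi_diff}, is \emph{also} $\Theta(\eta^2\norm{\vv}^2)$, with a hidden prefactor involving $\be{\partial\Phi}^2$, $\be{\nabla^2 f}$, etc. Both terms scale identically in $\eta$, so ``choosing $\eta$ small relative to $\be{\partial\Phi}^2$'' accomplishes nothing: decreasing $\eta$ shrinks both terms by the same factor and does not change which one dominates. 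What you actually need is a quantitative bound on the hidden constant (e.g.\ that it is strictly less than $\tfrac{3}{2}\beta$), which your write-up neither states nor verifies. Contrast this with the paper's Case~1: when $\norm{\nabla f(\x)}^2 \le 2\beta\eta\norm{\vv}^2$, the PL inequality forces $f(\x)-f(\Phi(\x)) \le \tfrac{\beta}{\alpha}\eta\norm{\vv}^2$, i.e.\ \emph{half} the target level, so the remaining slack is $\tfrac{\beta}{\alpha}\eta\norm{\vv}^2 = \Theta(\eta)$, which dominates any $\oo{\eta^2\norm{\vv}^2}$ term for small $\eta$ regardless of the hidden constant. The paper's Case~2 then handles the complementary regime by asserting a gradient-descent decrease of order $\eta\norm{\vv}^2$; you should sanity-check that claim against what \autoref{lem:descent_gd} actually yields under the stated case condition before trusting it, but the point stands that the case split gives a qualitatively different kind of slack in the small-gradient regime, and your unified derivation does not.

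On the positive side, your first route --- arguing that $f$ is locally constant on $\Xstar$ so that $f(\Phi(\x^+))=f(\Phi(\x))$ exactly --- is a genuinely different and potentially cleaner idea than anything in the paper's proof, which always estimates the $\Phi$-shift via the second-order bound of \autoref{lem:Phi_diff}. If one can establish a uniform lower bound on the radius of the local-minimum neighborhood along the $\zeta$-tube (plausible given four-times continuous differentiability and the local PL inequality, but not something the paper proves), the $\Phi$-shift error vanishes identically and the whole delicate constant comparison disappears. You correctly flagged this as the main obstacle; resolving it rather than retreating to the Taylor fallback would yield a cleaner proof than the paper's.
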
 
\begin{proof} See \autoref{pf:lem:stay_near}.
\end{proof}

% \begin{lemma} \label{lem:third}
%     Let \autoref{as:local} hold. If $\x$ is in the $\zeta$-neighborhood of $\Xstar$, then for any unit vector $\per$, then for the constant $\consta \coloneqq \be{\partial^2\Phi} \be{\nabla^3 f}+ \be{\partial \Phi}  \be{\nabla^4 f}$. the following inequality holds:
%     \begin{align}
%         \norm{ \partial\Phi(\x)\nabla^3 f(\x) \left[\per,\per\right] - \partial\Phi(\Phi(\x))\nabla^3 f(\Phi(\x)) \left[\per,\per\right] } \leq  
%         \consta\norm{\x-\Phi(\x)}
%     \end{align}
% \end{lemma}
% \begin{proof} See \autoref{pf:lem:third}.\end{proof}

\section{Proof of \autoref{thm:RS}} \label{pf:thm:RS}

In this section, we present the proof of \autoref{thm:RS}.
The overall structure of the proof follows the proof sketch in \autoref{sec:pf_RS}.
We  consider the following choice of parameters for \autoref{algo:RS}: \begin{align} \label{exp:choice}
{   \eta =\tth{\frac{1}{\consta  \be{\partial \Phi }^2 \beta^2}\delta\eps },  ~~\rho=\tth{\frac{1}{\consta} \delta \sqrt{\eps}},~~ \eps_0= \tth{\frac{\beta^{3/2}}{\alpha \consta^{3/2} \be{\partial \Phi } } \delta^{1.5}\eps}}\,. 
\end{align}
where $\consta \coloneqq \be{\partial^2\Phi} \be{\nabla^3 f}+ \be{\partial \Phi}  \be{\nabla^4 f}$. 
Then, note that $\partial \Phi(\x) \nabla \ttr(\x)$ is $\consta$-Lipschitz in the $\zeta$-neighborhood of $\Xstar$.

First, since $\x_0$ is $\zeta$-close to $\Xstar$, the standard linear convergence  result of gradient descent for the cost function satisfying the  Polyak--\L{}ojasiewicz (PL) inequality   \citep{karimi2016linear} together with \autoref{lem:local} imply that with the step size $\eta' =\frac{1}{\beta}$, within $T_0= \oo{\log(1/\eps_0)}$ steps,
one can reach the point $\x_{T_0}$ satisfying
$\norm{\nabla f(\x_{T_0})}\leq \eps_0$.  Thus, we henceforth assume that $\x_0$ itself satisfies $\norm{\nabla f(\x_0)}\leq \eps_0$ without loss of generality.

Next, we show that for $\vv_t$ defined as in \autoref{algo:RS}, i.e., $\vv_t \coloneqq \proj^\perp_{\nabla f(\x_t)} \nabla f(\x_t + \rho  \per_t )$, we have
\begin{align} \label{eq:bound on v}
\text{$\norm{\vv_t} \leq \beta\rho$  at each step $t$.}
\end{align}
This holds because $\vv_t  =\proj^\perp_{\nabla \loss (\x_t)} (\nabla  \loss (\x_t + \rho \per_t) -  \nabla \loss(\x_t))$, 
and the ``projecting-out'' operator $\proj^\perp_{\nabla \loss (\x_t) }$ only decreases the norm of the vector: it follows that  $\norm{\vv_t}  \leq \norm{ \nabla  \loss (\x_t + \rho \per_t) -  \nabla \loss(\x_t)  } \leq \beta\rho$, as desired.

Then, by \autoref{lem:stay_near}, for sufficiently small $\eps$, it holds that  $f(\x_t) - f(\Phi(\x_t))\leq \frac{2\beta^3}{\alpha} \eta \rho^2$  during each step $t$. 
This implies together with \autoref{lem:local} that $\norm{\x_t -\Phi(\x_t)}^2\leq \frac{4\beta^{3}}{\alpha^2} \eta\rho^2$ and $\norm{\nabla f(\x_t)}^2\leq \frac{4\beta^{5}}{\alpha^2} \eta\rho^2$   during each step $t$.
Thus, due to the choice \eqref{exp:choice}, we conclude that
\begin{align} \label{eq:dist to opt}
\text{$\norm{\nabla f(\x_t)} \leq \eps_0~~\text{and}~~ \norm{\x_t -\Phi(\x_t)}\leq\oo{\eps_0}$\quad  hold during each step $t$.}
\end{align} 
We now characterize the direction $\partial \Phi(\x_t)  \vv_t$.
\begin{lemma} \label{lem:trace_gradient}
Let \autoref{as:local} hold and consider the parameter choice \eqref{exp:choice}. Then, for sufficiently small $\eps>0$, under the condition \eqref{eq:dist to opt}, $\vv_t$ defined in \autoref{algo:RS} satisfies 
\begin{align} 
\E\partial \Phi(\x_t)   \vv_t  =  \frac{1}{2}\rho^2 \partial\Phi(\Phi(\x_t)) \nabla \ttr(\Phi(\x_t)) +\oo{\consta \rho^3}\,.
\end{align}
\end{lemma}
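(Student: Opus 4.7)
The plan is to reduce the computation of $\E\,\partial\Phi(\x_t)\vv_t$ to a Taylor expansion of $\nabla f$ around $\x_t$, exploit the invariance $\partial\Phi(\x)\nabla f(\x)=\mathbf{0}$ to kill unwanted terms, and then transfer the resulting expression from $\x_t$ to $\Phi(\x_t)$ using Lipschitz continuity. The first step is to simplify the projection. By definition,
\begin{align}
\vv_t = \nabla f(\x_t+\rho\per_t) - \left\langle \tfrac{\nabla f(\x_t)}{\|\nabla f(\x_t)\|},\, \nabla f(\x_t+\rho\per_t)\right\rangle \tfrac{\nabla f(\x_t)}{\|\nabla f(\x_t)\|}.
\end{align}
Applying $\partial\Phi(\x_t)$ and invoking \autoref{lem:Phi}, which gives $\partial\Phi(\x_t)\nabla f(\x_t)=\mathbf{0}$, the correction term disappears and we are left with $\partial\Phi(\x_t)\vv_t = \partial\Phi(\x_t)\nabla f(\x_t+\rho\per_t)$.

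Next I would Taylor-expand $\nabla f(\x_t+\rho\per_t)$ to third order, using that $f$ is four-times continuously differentiable on the $\zeta$-neighborhood of $\Xstar$:
\begin{align}
\nabla f(\x_t+\rho\per_t) = \nabla f(\x_t) + \rho\,\nabla^2 f(\x_t)\per_t + \tfrac{1}{2}\rho^2\,\nabla^3 f(\x_t)[\per_t,\per_t] + \mathcal{R}_t,
\end{align}
with $\|\mathcal{R}_t\|\le \tfrac{1}{6}\be{\nabla^4 f}\rho^3$. Take expectation over $\per_t\sim\mathrm{Unif}(\mathbb{S}^{d-1})$: the first term is killed by $\partial\Phi(\x_t)$ (\autoref{lem:Phi}); the linear-in-$\per_t$ term vanishes because $\E[\per_t]=\mathbf{0}$; the quadratic-in-$\per_t$ term is computed coordinatewise via $\E[\per_t\per_t^\top]=\tfrac{1}{d}I_d$, so that
\begin{align}
\E\bigl[\nabla^3 f(\x_t)[\per_t,\per_t]\bigr]_i = \tfrac{1}{d}\sum_j \partial_i\partial_j^2 f(\x_t) = \partial_i \ttr(\x_t),
\end{align}
i.e.\ $\E\,\nabla^3 f(\x_t)[\per_t,\per_t]=\nabla\ttr(\x_t)$. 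Pushing $\partial\Phi(\x_t)$ through yields $\E\,\partial\Phi(\x_t)\vv_t = \tfrac{1}{2}\rho^2\,\partial\Phi(\x_t)\nabla\ttr(\x_t) + \mathcal{O}(\be{\partial\Phi}\be{\nabla^4 f}\rho^3)$.

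The last step is to replace the ``base point'' $\x_t$ with $\Phi(\x_t)$ inside $\partial\Phi(\cdot)\nabla\ttr(\cdot)$. Since the map $\x\mapsto \partial\Phi(\x)\nabla\ttr(\x)$ is $\consta$-Lipschitz on the $\zeta$-neighborhood (this is exactly how $\consta$ was defined above \eqref{exp:choice}), and since \eqref{eq:dist to opt} gives $\|\x_t-\Phi(\x_t)\|=\mathcal{O}(\eps_0)$, we obtain
\begin{align}
\tfrac{1}{2}\rho^2\,\partial\Phi(\x_t)\nabla\ttr(\x_t) = \tfrac{1}{2}\rho^2\,\partial\Phi(\Phi(\x_t))\nabla\ttr(\Phi(\x_t)) + \mathcal{O}(\consta\rho^2\eps_0).
\end{align}
The parameter choices \eqref{exp:choice} give $\eps_0/\rho=\Theta(\delta^{1/2}\sqrt{\eps})\ll 1$ for small $\eps$, so $\rho^2\eps_0\lesssim \rho^3$ and the replacement error is absorbed into the stated $\mathcal{O}(\consta\rho^3)$ bound. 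Combining the two error terms completes the proof.

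The only genuinely delicate point is bookkeeping in the last step: one must verify that the Lipschitz constant of $\partial\Phi\,\nabla\ttr$ (which, on expansion, mixes $\be{\partial^2\Phi}\be{\nabla^3 f}$ and $\be{\partial\Phi}\be{\nabla^4 f}$) is exactly the $\consta$ used in \eqref{exp:choice}, and that the Taylor remainder constant coming from $\nabla^4 f$ is dominated by the same $\consta$. Everything else is a routine Taylor expansion plus an application of the two structural facts $\partial\Phi\,\nabla f\equiv 0$ and $\E[\per_t\per_t^\top]=d^{-1}I_d$.
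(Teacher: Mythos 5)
Your proof is correct and follows essentially the same route as the paper's: Taylor-expand $\nabla f(\x_t+\rho\per_t)$ to third order, annihilate the $\nabla f(\x_t)$ contribution via $\partial\Phi(\x_t)\nabla f(\x_t)=\mathbf{0}$, kill the linear term with $\E[\per_t]=\mathbf{0}$, extract $\nabla\ttr$ from the quadratic term with $\E[\per_t\per_t^\top]=d^{-1}I_d$, and transfer from $\x_t$ to $\Phi(\x_t)$ using the $\consta$-Lipschitzness of $\partial\Phi(\cdot)\nabla\ttr(\cdot)$ together with $\norm{\x_t-\Phi(\x_t)}=\oo{\eps_0}=\ooo{\rho}$. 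Your bookkeeping concern at the end is indeed handled by the paper's definition $\consta=\be{\partial^2\Phi}\be{\nabla^3 f}+\be{\partial\Phi}\be{\nabla^4 f}$, which dominates both the Lipschitz constant of $\partial\Phi\,\nabla\ttr$ and the $\be{\partial\Phi}\be{\nabla^4 f}$ Taylor-remainder factor.
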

\begin{proof}
See \autoref{pf:lem:trace_gradient}.
\end{proof} 

Using \autoref{lem:trace_gradient}, we can prove the following formal statement of \autoref{lem:trace_descent}.

\begin{lemma} \label{lem:trace_descent_formal}
Let \autoref{as:local} hold and choose the parameters as per \eqref{exp:choice}.
Let $\eps>0$ be chosen sufficiently small and $\delta\in(0,1)$.
Then, there exists an absolute constant $c>0$ s.t. the following holds:
if $\norm{\partial \Phi(\Phi(\x_t)) \nabla \ttr(\Phi(\x_t))}\geq \sqrt{\eps}$,  then
\begin{align}  
\E\ttr(\Phi(\x_{t+1})) - \ttr(\Phi(\x_t)) &\leq -  \frac{c}{\consta^3  \be{\partial \Phi }^2 \beta^2}\delta^3 \eps^3   \,.
\intertext{
On the other hand, if $\norm{\partial \Phi(\Phi(\x_t)) \nabla \ttr(\Phi(\x_t))}\leq \sqrt{\eps}$, then}   \E\ttr(\Phi(\x_{t+1})) - \ttr(\Phi(\x_t)) &\leq   \frac{c}{\consta^3  \be{\partial \Phi }^2 \beta^2}\delta^4 \eps^3  \,.
\end{align} 
\end{lemma}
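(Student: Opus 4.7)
The plan is to combine a second-order Taylor expansion of $\ttr\circ\Phi$ with \autoref{lem:Phi_diff} (first-order expansion of $\Phi(\x_{t+1})-\Phi(\x_t)$) and \autoref{lem:trace_gradient} (the conditional expectation of $\partial\Phi(\x_t)\vv_t$), and then do a case split on whether $\norm{\partr}\geq \sqrt\eps$. The already-established bounds \eqref{eq:dist to opt} and \eqref{eq:bound on v} ensure $\x_t$ lies in the regime where all the preceding lemmas apply. The bulk of the work is bookkeeping: verifying that every error term is smaller than the leading descent by a factor of $\delta$ under the scaling \eqref{exp:choice}.

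First I would use the idempotency $\Phi\circ\Phi = \Phi$ on the $\zeta$-neighborhood of $\Xstar$ (guaranteed by \autoref{as:local}(b)) to rewrite the left-hand side as $(\ttr\circ\Phi)(\Phi(\x_{t+1})) - (\ttr\circ\Phi)(\Phi(\x_t))$. Since $f\in C^4$ near $\Xstar$, the composition $\ttr\circ\Phi$ is twice Lipschitz-differentiable on that neighborhood, so a second-order Taylor expansion at $\Phi(\x_t)$ gives
\begin{align}
\ttr(\Phi(\x_{t+1})) - \ttr(\Phi(\x_t)) \leq \inp{\nabla(\ttr\circ\Phi)(\Phi(\x_t))}{\Phi(\x_{t+1}) - \Phi(\x_t)} + C_1\norm{\Phi(\x_{t+1}) - \Phi(\x_t)}^2
\end{align}
for a constant $C_1$ depending only on $\be{\partial\Phi},\be{\partial^2\Phi},\be{\nabla^3 f},\be{\nabla^4 f}$. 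Because $\partial\Phi$ at any point of $\Xstar$ is the orthogonal projection onto the tangent space (symmetric and idempotent; see \citep{li2022what,wen2022does,arora2022understanding}), the chain rule identifies $\nabla(\ttr\circ\Phi)(\Phi(\x_t)) = \partr$.

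Next I would substitute \autoref{lem:Phi_diff}: the first bullet yields $\Phi(\x_{t+1}) - \Phi(\x_t) = -\eta\partial\Phi(\x_t)\vv_t + \oo{\eta^2\rho^2}$ (using $\norm{\nabla f(\x_t)}\leq \eps_0 \ll \rho$ from \eqref{eq:dist to opt} and $\norm{\vv_t}\leq \beta\rho$ from \eqref{eq:bound on v}), and the second bullet gives $\norm{\Phi(\x_{t+1}) - \Phi(\x_t)}^2 = \oo{\eta^2\rho^2}$. Taking expectation over $\per_t$ and applying \autoref{lem:trace_gradient} to the linear term,
\begin{align}
\E\inp{\partr}{\partial\Phi(\x_t)\vv_t} = \tfrac{1}{2}\rho^2\norm{\partr}^2 + \oo{\consta\rho^3\norm{\partr}}\,,
\end{align}
so the master descent inequality becomes
\begin{align}
\E\ttr(\Phi(\x_{t+1})) - \ttr(\Phi(\x_t)) \leq -\tfrac{1}{2}\eta\rho^2\norm{\partr}^2 + \oo{\eta\consta\rho^3\norm{\partr}} + \oo{\eta^2\rho^2}\,.
\end{align}

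Finally I would plug in the scaling \eqref{exp:choice}, under which $\eta\rho^2 = \tth{\delta^3\eps^2/(\consta^3\be{\partial\Phi}^2\beta^2)}$. In Case~1 ($\norm{\partr}\geq\sqrt\eps$) both ratios $\consta\rho/\norm{\partr} \lesssim \delta$ and $\eta/\norm{\partr}^2 \lesssim \delta$ hold, so $\eta\consta\rho^3\norm{\partr} \lesssim \delta\cdot\eta\rho^2\norm{\partr}^2$ and $\eta^2\rho^2 \lesssim \delta\cdot\eta\rho^2\norm{\partr}^2$; absorbing them into the leading term yields the claimed descent $-\om{\delta^3\eps^3/(\consta^3\be{\partial\Phi}^2\beta^2)}$. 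In Case~2 ($\norm{\partr}\leq\sqrt\eps$) the leading term is non-positive and a direct substitution shows each error term is at most $\oo{\delta^4\eps^3/(\consta^3\be{\partial\Phi}^2\beta^2)}$, matching the stated upper bound. The main obstacle is precisely this $\delta$-gap verification, and it is what forces the $\delta\eps$ scaling of $\eta$, the $\delta\sqrt{\eps}$ scaling of $\rho$, and the $\delta^{1.5}\eps$ scaling of $\eps_0$ in \eqref{exp:choice}.
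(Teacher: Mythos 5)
Your proposal follows essentially the same route as the paper's own proof: idempotency of $\Phi$ to set up a second-order Taylor expansion of $\ttr\circ\Phi$ around $\Phi(\x_t)$, substitution of the two bullets of \autoref{lem:Phi_diff} for the increment, application of \autoref{lem:trace_gradient} in expectation, and then verification that the parameter scaling \eqref{exp:choice} makes each error term smaller than the leading $-\tfrac12\eta\rho^2\norm{\partr}^2$ by a factor of $\delta$. The one place you elaborate beyond the paper is explicitly invoking symmetry of $\partial\Phi$ on $\Xstar$ to identify $\nabla(\ttr\circ\Phi)(\Phi(\x_t))$ with $\partr$; the paper leaves this implicit, but otherwise the arguments coincide step for step.
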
 
\begin{proof}
See \autoref{pf:lem:trace_descent_formal}.
\end{proof}

Now the rest of the proof follows the probabilistic argument in the proof sketch (\autoref{sec:pf_RS}). 
For $t=1,2,\dots, T$, let 
$A_t$ be the event where $\norm{ \partial \Phi( \Phi(\x_t)) \nabla \ttr( \Phi(\x_t)) } \geq \sqrt{\eps}$,
and let $R$ be a random variable equal to the ratio of desired flat minima visited among the iterates $\x_1,\dots,\x_T$.
Then,  
\begin{align}
R  = \frac{1}{T} \sum_{t=1}^T \mathds{1} \left( A_t^c \right)\,,
\end{align}
where $\mathds{1}$ is the indicator function.
Let $P_t$ denote the probability of event $A_t$. 
Then, the probability of returning a $(\eps,\sqrt{\eps})$-flat minimum is simply equal to $\E  R  = \frac{1}{T}\sum_{t=1}^T (1- P_t)$.
Now the key idea is that although estimating individual $P_t$'s might be difficult, one can upper bound the sum of $P_t$'s using \autoref{lem:trace_descent_formal}.
More specifically, \autoref{lem:trace_descent_formal} implies that 
\begin{align}   
\E\ttr(\Phi(\x_{t+1})) - \E\ttr(\Phi(\x_t))  &\leq - \frac{c}{\consta^3  \be{\partial \Phi }^2 \beta^2}\delta^3\eps^3\cdot (P_t -  \delta (1-P_t)) \\
& = \frac{c}{\consta^3  \be{\partial \Phi }^2 \beta^2}\delta^3\eps^3\cdot\left\{ \delta  - (1+ \delta) P_t \right\}\,, 
\end{align}
which after taking sum over $t=0 \dots, T-1$ and rearranging yields
\begin{align}  
\frac{1}{T}\sum_{t=1}^T P_t \leq   \frac{\consta^3  \be{\partial \Phi}^2\beta^2 }{cT\delta^3\eps^3} +   \delta   \,.
\end{align}
Hence choosing 
\begin{align}
T= \tth{\frac{\consta^3  \be{\partial \Phi}^2\beta^2 } {\eps^3\delta^4}}\,,
\end{align}   $\E R$ is lower bounded by $1-\oo{\delta}$, which concludes the proof of \autoref{thm:RS}.

\section{Proof of \autoref{thm:SAM}}
\label{pf:thm:SAM}

In this section, we present the proof of \autoref{thm:SAM}.
The overall structure of the proof is similar to that of \autoref{thm:RS}in \autoref{pf:thm:RS}.
We  consider the following choice of parameters for \autoref{algo:SAM}: for $\nu \coloneqq   \min\{d,\eps^{-1/3}\} $, \begin{align} \label{exp:choice_sam}
{    \eta =\tth{\frac{1}{\consta  \be{\partial \Phi }^2 \beta^2}\nu\delta\eps },  ~~\rho=\tth{\frac{1}{\consta} \nu\delta \sqrt{\eps}},~~ \eps_0= \tth{\frac{\beta^{3/2}}{\alpha \consta^{3/2} \be{\partial \Phi } } \nu^{3/2}\delta^{3/2} \eps}}\,. 
\end{align}
where this time 
we define $\consta \coloneqq \max_{i=1,\dots,n} \be{\partial^2\Phi} \be{\nabla^3 f_i}+ \be{\partial \Phi}  \be{\nabla^4 f_i}$.
Then, again note that $\partial \Phi(\x) \nabla \ttr(\x)$ is $\consta$-Lipschitz in the $\zeta$-neighborhood of $\Xstar$.

Again, similarly to the proof in \autoref{pf:thm:RS},  within $T_0= \oo{\log(1/\eps_0)}$ steps, one can reach $\x_{T_0}$ s.t. $\norm{\nabla f(\x_{T_0})}\leq \eps_0$, so we assume that $\x_0$ satisfies $\norm{\nabla f(\x_0)}\leq \eps_0$ without loss of generality.

We first show that for $\vv_t$ defined as $\proj^\perp_{\nabla f(\x_t)} \nabla f_i(\x_t + \rho \sigma_t \frac{\nabla f_i(\x_t)}{  \norm{\nabla f_i(\x_t) }})$, we have
\begin{align} \label{eq:bound on v_sam} \text{$\norm{\vv_t} \leq \norm{\nabla f_i(\x_t)} + \beta\rho$  at each step $t$.}
\end{align}
This holds since the $\beta$-Lipschitz gradient condition implies 
\begin{align}
\norm{\nabla f_i\left(\x_t + \rho \sigma_t \frac{\nabla f_i(\x_t)}{  \norm{\nabla f_i(\x_t) }}\right)} \leq \norm{\nabla f_i(\x_t)} +\beta \norm{\rho \sigma_t \frac{\nabla f_i(\x_t)}{  \norm{\nabla f_i(\x_t) }}}  =\norm{\nabla f_i(\x_t)} +\beta\rho  \,,
\end{align}
and the ``projecting-out'' operator $\proj^\perp_{\nabla \loss (\x_t) }$ only decreases the norm of the vector. Hence, it follows that  $\norm{\vv_t}  \leq \norm{\nabla f_i(\x_t)}+ \beta\rho$.

Now we show by induction that  
$f(\x_t)-f(\Phi(\x_t))\leq \frac{8\beta^3}{\alpha} \eta\rho^2$  holds during each step $t$. 
Suppose that it holds for $\x_t$ and consider $\x_{t+1}$.
Then from \autoref{lem:local}, it holds that 
$\norm{\x_t-\Phi(\x_t)}^2\leq \frac{16\beta^{3}}{\alpha^2} \eta\rho^2$, which implies that $\norm{\x_t-\Phi(\x_t)} = \oo{\eta^{1/2}\rho} = \ooo{\rho}$  as long as $\eps$ is sufficiently small.
Thus, from \eqref{eq:bound on v_sam}, it follows that  $\norm{\vv_t} \leq 2\beta \rho$, and hence, \autoref{lem:stay_near} implies that 
$f(\x_{t+1})-f(\Phi(\x_{t
+1}))\leq \frac{8\beta^3}{\alpha} \eta\rho^2$.

This conclusion together with \autoref{lem:local}  and the choice \eqref{exp:choice} imply the following conclusion:
\begin{align} \label{eq:dist to opt_sam}
\text{$\norm{\nabla f(\x_t)} \leq \eps_0~~\text{and}~~ \norm{\x_t -\Phi(\x_t)}\leq\oo{\eps_0}$ hold during each step $t$.}
\end{align} 
We now characterize the direction $\partial \Phi(\x_t)  \vv_t$.

\begin{lemma} \label{lem:trace_gradient_sam}
Let \autoref{as:local} hold  and each $f_i$ is four times coutinuously differentiable within the $\zeta$-neighborhood of $\Xstar$ and have $\beta$-Lipschitz gradients. 
Consider the parameter choice \eqref{exp:choice}. Then, for sufficiently small $\eps>0$, under the condition \eqref{eq:dist to opt_sam}, $\vv_t$ defined in \autoref{algo:SAM} (assume that it is well-defined as per \autoref{rmk:well_define}) satisfies
\begin{align} 
\E\partial \Phi(\x_t)   \vv_t  = \frac{1}{2}d\rho^2 \partial\Phi(\Phi(\x_t)) \nabla     \ttr(\Phi(\x_t)) + \oo{\constb\constc d\rho^2 \eps_0 }    +\oo{ \consta \rho^3} \,,
\end{align}
where $\constb \coloneqq \max_{i=1,\dots, n}\be{\partial(\nabla \mm_i/\norm{\nabla \mm_i})}$  and $\constc \coloneqq \be{\partial \Phi} \be{\nabla^3 f_i}$.
\end{lemma}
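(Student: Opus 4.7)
\textbf{Proof proposal for Lemma \ref{lem:trace_gradient_sam}.} The plan is to mimic the Taylor-expansion strategy used for Lemma \ref{lem:trace_gradient} but exploit two new ingredients: (i) the $\sigma_t$-symmetrization that kills the odd-order terms in the expansion, and (ii) the near rank-one structure of $\nabla^2 f_i$ on $\Xstar$ that converts the expected Hessian-vector product along $\per_{i,t}$ into the \emph{full} trace $\tr(\nabla^2 f_i)$, producing the crucial $d$-factor improvement over \autoref{lem:trace_gradient}.

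First, I would Taylor expand
\begin{align}
\nabla f_i(\x_t+\rho\sigma_t\per_{i,t}) &= \nabla f_i(\x_t)+\rho\sigma_t\nabla^2 f_i(\x_t)\per_{i,t} \\
&\quad +\tfrac{\rho^2}{2}\nabla^3 f_i(\x_t)[\per_{i,t},\per_{i,t}]+\oo{\rho^3\be{\nabla^4 f_i}},
\end{align}
and then premultiply by $\partial\Phi(\x_t)$. Using \autoref{lem:Phi} ($\partial\Phi\cdot\nabla f=\mathbf{0}$) one can replace $\proj^\perp_{\nabla f(\x_t)}$ by the identity after hitting with $\partial\Phi(\x_t)$, so only the zeroth-, first- and second-order Taylor terms survive. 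Taking expectation over $\sigma_t\in\{\pm1\}$ annihilates the linear term, while $\partial\Phi(\x_t)\,\E_i\nabla f_i(\x_t)=\partial\Phi(\x_t)\nabla f(\x_t)=\mathbf{0}$. Therefore
\begin{align}
\E\,\partial\Phi(\x_t)\vv_t = \tfrac{\rho^2}{2}\,\partial\Phi(\x_t)\,\E_i\nabla^3 f_i(\x_t)[\per_{i,t},\per_{i,t}]+\oo{\consta\rho^3}.
\end{align}

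The heart of the proof, and the main obstacle, is to identify the inner expectation with $d\cdot \partial\Phi(\Phi(\x_t))\nabla\ttr(\Phi(\x_t))$ up to an $\oo{\constb\constc d\eps_0}$ error. The idea is to transfer the base point from $\x_t$ to $\x^\star\coloneqq\Phi(\x_t)$. Note $\per_{i,t}=\sign(\ell'(\mm_i(\x_t),y_i))\,\pp_i(\x_t)$, so $\per_{i,t}\per_{i,t}^\top=\pp_i(\x_t)\pp_i(\x_t)^\top$. Under \autoref{def:ER}, $\nabla^2 f_i(\x^\star)=\ell''(\mm_i(\x^\star),y_i)\,\nabla\mm_i(\x^\star)\nabla\mm_i(\x^\star)^\top$ since $\ell'(\mm_i(\x^\star),y_i)=0$ at the minimum, making $\nabla^2 f_i(\x^\star)$ a rank-one matrix with $\pp_i(\x^\star)$ as its eigenvector and eigenvalue $\lambda_i\coloneqq\tr(\nabla^2 f_i(\x^\star))=n\cdot(d\,\ttr(\x^\star))/n$. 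Consequently $\pp_i(\x^\star)^\top\nabla^2 f_i(\x^\star)\pp_i(\x^\star)=\tr(\nabla^2 f_i(\x^\star))$, and averaging over $i$ yields $\E_i\tr(\nabla^2 f_i(\x^\star)\pp_i(\x^\star)\pp_i(\x^\star)^\top)=d\,\ttr(\x^\star)$. Differentiating this identity in $\x$ along the tangent directions of $\Xstar$ (which is precisely the range of $\partial\Phi$) and invoking the Lipschitz bounds on $\pp_i$ and $\nabla^3 f_i$ gives
\begin{align}
\partial\Phi(\Phi(\x_t))\,\E_i\nabla^3 f_i(\x^\star)[\pp_i(\x^\star),\pp_i(\x^\star)]=d\cdot \partial\Phi(\Phi(\x_t))\nabla\ttr(\Phi(\x_t)).
\end{align}

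Finally, I would replace $\x^\star$ by $\x_t$ in the preceding identity. Under \eqref{eq:dist to opt_sam} we have $\norm{\x_t-\Phi(\x_t)}\le\oo{\eps_0}$, so using (a) Lipschitzness of $\partial\Phi$ (constant $\be{\partial^2\Phi}$), (b) Lipschitzness of $\pp_i(\cdot)$ (constant $\constb$), and (c) Lipschitzness of $\nabla^3 f_i$ bounded via $\constc=\be{\partial\Phi}\be{\nabla^3 f_i}$, each swap introduces an additive error controlled by the product of a Lipschitz constant and $\eps_0$; these propagate through the rank-one identity to a total error $\oo{\constb\constc d\eps_0}$ on the $d\cdot\partial\Phi\nabla\ttr$ term (the $d$ factor multiplies the error because the leading term has a $d$). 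Combining these pieces and multiplying by $\rho^2/2$ gives the claimed expression. The main technical hurdle will be cleanly book-keeping these perturbation-of-rank-one-Hessian errors without the simplifying orthogonality assumption used in the proof sketch; this is handled by writing $\nabla^2 f_i(\x)=\ell''_i\nabla\mm_i\nabla\mm_i^\top+\ell'_i\nabla^2\mm_i$ and noting that $\ell'_i=\oo{\norm{\x-\x^\star}}=\oo{\eps_0}$ throughout the relevant region.
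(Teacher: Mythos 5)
Your overall strategy matches the paper's: third-order Taylor expansion, $\sigma_t$-symmetrization to kill the first-order term, $\E_i \partial\Phi\,\nabla f_i = \partial\Phi\,\nabla f = \mathbf{0}$ to kill the zeroth-order term, the Gauss--Newton rank-one structure of $\nabla^2 f_i$ on $\Xstar$ to turn the $\pp_i$-quadratic form into the full trace $\tr(\nabla^2 f_i)$ (which, after averaging over $i$, produces the $d$-factor via $\E_i\tr(\nabla^2 f_i) = d\,\ttr$), and a Lipschitz-controlled base-point transfer between $\x_t$ and $\Phi(\x_t)$. This is exactly what the paper does, up to the order in which the differentiation and the base-point swap are performed.

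One piece of your reasoning would not survive a careful write-up, though the conclusion you reach is correct. You propose to control the non-Gauss--Newton part via $\nabla^2 f_i(\x)=\ell''_i\nabla\mm_i\nabla\mm_i^\top+\ell'_i\nabla^2\mm_i$ and the fact that $\ell'_i = \oo{\eps_0}$. That bounds the Hessian correction, but the quantity you actually need is a \emph{third} derivative, and differentiating $\ell'_i\nabla^2\mm_i$ produces the $\oo{1}$ term $\ell''_i\,\nabla\mm_i\otimes\nabla^2\mm_i$, which does \emph{not} vanish as $\eps_0\to 0$. What actually kills this contribution (together with the discrepancy between $\pp_i^\top\nabla^2\mm_i\pp_i$ and $\tr(\nabla^2\mm_i)$) is that it is proportional to $\nabla\mm_i$, which is normal to $\Xstar$ and therefore annihilated by $\partial\Phi(\Phi(\x_t))$; equivalently, $\partial\Phi\,\nabla$ extracts the tangential derivative, which depends only on the restriction of the function to $\Xstar$, where the rank-one identity holds exactly. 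Two smaller slips: the parenthetical $\lambda_i = n\cdot(d\,\ttr)/n$ is false for individual $i$ (only the average over $i$ equals $d\,\ttr$, as you correctly state immediately after), and $\constc=\be{\partial\Phi}\be{\nabla^3 f_i}$ is not the Lipschitz constant of $\nabla^3 f_i$ (that is $\be{\nabla^4 f_i}$); $\constc$ enters the error as a size bound on $\partial\Phi\,\nabla\ttr$ multiplying the $\oo{\constb\eps_0}$ perturbation of $(\pp_i^\top\per_{i,t})^2$.
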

\begin{proof}
See \autoref{pf:lem:trace_gradient_sam}.
\end{proof} 

Notice an multiplicative factor of  $d$ appearing in the equation above, which shows an improvement over \autoref{lem:trace_gradient}. 
Using \autoref{lem:trace_gradient_sam}, we can prove the following formal statement of \autoref{lem:trace_descent_sam}.

\begin{lemma} \label{lem:trace_descent_formal_sam}
Let \autoref{as:local} hold and choose the parameters as per \eqref{exp:choice}.
Let $\eps>0$ be chosen sufficiently small and $\delta\in(0,1)$.
Then,  under the condition \eqref{eq:dist to opt_sam}, there exists an absolute constant $c>0$ s.t. the following holds:
if $\norm{\partial \Phi(\Phi(\x_t)) \nabla \ttr(\Phi(\x_t))}\geq \sqrt{\eps}$,  then
\begin{align}  
\E\ttr(\Phi(\x_{t+1})) - \ttr(\Phi(\x_t)) &\leq -  \frac{c}{\consta^3  \be{\partial \Phi }^2 \beta^2}d\nu^3\delta^3 \eps^3   \,.
\intertext{
On the other hand, if $\norm{\partial \Phi(\Phi(\x_t)) \nabla \ttr(\Phi(\x_t))}\leq \sqrt{\eps}$, then}   \E\ttr(\Phi(\x_{t+1})) - \ttr(\Phi(\x_t)) &\leq   \frac{c}{\consta^3  \be{\partial \Phi }^2 \beta^2}d\nu^3\delta^4 \eps^3   \,.
\end{align} 
\end{lemma}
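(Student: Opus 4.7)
My plan is to imitate the proof of Lemma \ref{lem:trace_descent_formal} (whose sketch appears in \autoref{sec:pf_RS}), substituting Lemma \ref{lem:trace_gradient_sam} for Lemma \ref{lem:trace_gradient} at the one place where the expected perturbation direction is evaluated. The extra factor of $d$ in the leading coefficient of $\E\partial\Phi(\x_t)\vv_t$ is what turns the $-\Omega(\delta^3\eps^3)$ decrease of \autoref{lem:trace_descent} into the claimed $-\Omega(d\nu^3\delta^3\eps^3)$ decrease, provided the parameter budget \eqref{exp:choice_sam} is calibrated so that every error term stays below the new leading term.

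\textbf{Main steps.} First, write $\ttr\circ\Phi = \ttr\circ\Phi\circ\Phi$ and Taylor-expand the outer $\ttr\circ\Phi$ around the base point $\Phi(\x_t)\in\Xstar$, using that $\x\mapsto\partial\Phi(\x)\nabla\ttr(\x)$ is $\consta$-Lipschitz in a $\zeta$-neighborhood of $\Xstar$. Setting $\partr\coloneqq \partial\Phi(\Phi(\x_t))\nabla\ttr(\Phi(\x_t))$, this yields
\begin{align}
\ttr(\Phi(\x_{t+1}))-\ttr(\Phi(\x_t)) \leq \inp{\partr}{\Phi(\x_{t+1})-\Phi(\x_t)} + \oo{\consta\,\norm{\Phi(\x_{t+1})-\Phi(\x_t)}^2}.
\end{align}
Second, apply Lemma \ref{lem:Phi_diff} to the update $\x_{t+1}-\x_t = -\eta(\nabla f(\x_t)+\vv_t)$ to rewrite $\Phi(\x_{t+1})-\Phi(\x_t)$ as $-\eta\,\partial\Phi(\x_t)\vv_t$ plus a quadratic remainder. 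Crucially, I claim $\norm{\vv_t}=\oo{\rho}$: by \eqref{eq:bound on v_sam} it suffices to check $\norm{\nabla f_i(\x_t)}\ll\rho$, and under \autoref{def:ER} each $f_i$ vanishes on $\Xstar$ so $\norm{\nabla f_i(\x_t)}\leq \beta\norm{\x_t-\Phi(\x_t)} = \oo{\beta\eps_0} \ll \rho$. Third, take expectations over $i\sim\mathrm{Unif}([n])$ and $\sigma_t\sim\mathrm{Unif}(\{\pm1\})$ and substitute Lemma \ref{lem:trace_gradient_sam} to obtain the master inequality
\begin{align}
\E[\ttr(\Phi(\x_{t+1}))-\ttr(\Phi(\x_t))] \leq -\tfrac{1}{2}\eta d\rho^2\norm{\partr}^2 + \oo{\eta d\rho^2\eps_0\norm{\partr} + \eta\rho^3\norm{\partr} + \consta\eta^2\rho^2}.
\end{align}

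\textbf{Case analysis and main obstacle.} In the large-gradient regime $\norm{\partr}\ge\sqrt{\eps}$, the three error terms are dominated by $\eta d\rho^2\norm{\partr}^2$ once $\eps_0\lesssim\norm{\partr}$, $\rho/d\lesssim\norm{\partr}$, and $\sqrt{\consta\eta/d}\lesssim\norm{\partr}$; under \eqref{exp:choice_sam} these all reduce to $\nu\leq d$ together with $\nu^{3/2}\delta^{3/2}\sqrt{\eps}\lesssim 1$, which hold by definition of $\nu=\min\{d,\eps^{-1/3}\}$ and smallness of $\eps$. Plugging in the parameters then produces the $-\Omega(d\nu^3\delta^3\eps^3/(\consta^3\be{\partial\Phi}^2\beta^2))$ bound. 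In the small-gradient regime $\norm{\partr}\le\sqrt{\eps}$ the leading negative term is discarded, and a direct substitution shows each of $\eta d\rho^2\eps_0\sqrt{\eps}$, $\eta\rho^3\sqrt{\eps}$, and $\consta\eta^2\rho^2$ is $\oo{d\nu^3\delta^4\eps^3/(\consta^3\be{\partial\Phi}^2\beta^2)}$; the extra $\delta$ compared to the large-gradient regime comes from the extra factor of $\rho$ or $\sqrt{\eta}$. The main obstacle is the mismatch between $\partial\Phi(\x_t)$ and $\partial\Phi(\Phi(\x_t))$ (and between $\nabla\ttr$ evaluated at these two points) that arises when matching Lemma \ref{lem:trace_gradient_sam} to the Taylor coefficient $\partr$: both discrepancies are $\oo{\eps_0}$ by smoothness, and their contribution lands inside the $\eta d\rho^2\eps_0\norm{\partr}$ term already budgeted. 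A secondary subtlety is that the $\oo{\consta\rho^3}$ remainder in Lemma \ref{lem:trace_gradient_sam} produces an $\oo{\eta\rho^3\norm{\partr}}$ error whose dominance by $\eta d\rho^2\norm{\partr}^2$ is precisely the constraint $\nu\leq d$, explaining why the threshold $\nu=\min\{d,\eps^{-1/3}\}$ is exactly the right choice.
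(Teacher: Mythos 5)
Your proposal is correct and follows essentially the same route as the paper's proof: Taylor-expand $\ttr\circ\Phi$ around $\Phi(\x_t)$ using $\consta$-Lipschitzness of $\partial\Phi\nabla\ttr$, rewrite $\Phi(\x_{t+1})-\Phi(\x_t)$ via \autoref{lem:Phi_diff}, substitute \autoref{lem:trace_gradient_sam} to get the $-\tfrac{1}{2}d\eta\rho^2\norm{\partr}^2$ leading term with $\oo{d\eta\rho^2\eps_0\norm{\partr}+\eta\rho^3\norm{\partr}+\eta^2\rho^2}$ errors, then plug in \eqref{exp:choice_sam} and case-split on $\norm{\partr}\gtrless\sqrt{\eps}$ using $\nu\le d$ and $\nu^{3/2}\le\eps^{-1/2}$. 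Your identification of the governing constraints and the role of $\nu=\min\{d,\eps^{-1/3}\}$ matches the paper's calculation exactly, and your side remark that $\norm{\vv_t}=\oo{\rho}$ because $\nabla f_i(\Phi(\x_t))=\mathbf{0}$ is a valid (slightly more direct) justification of what the paper establishes by induction in the surrounding theorem proof.
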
 
\begin{proof}
See \autoref{pf:lem:trace_descent_formal_sam}.
\end{proof}

Now the rest of the proof follows the probabilistic argument in \autoref{pf:thm:RS}.  
For $t=1,2,\dots, T$, let 
$A_t$ be the event where $\norm{ \partial \Phi( \Phi(\x_t)) \nabla \ttr( \Phi(\x_t)) } \geq \sqrt{\eps}$,
and let $R$ be a random variable equal to the ratio of desired flat minima visited among the iterates $\x_1,\dots,\x_T$. 
Let $P_t$ denote the probability of event $A_t$. 
Then, the probability of returning a $(\eps,\sqrt{\eps})$-flat minimum is simply equal to $\E  R  = \frac{1}{T}\sum_{t=1}^T (1- P_t)$.
Similarly to \autoref{pf:thm:RS}, using \autoref{lem:trace_descent_formal_sam}, we have  
\begin{align}   \E\ttr(\Phi(\x_{t+1})) - \E\ttr(\Phi(\x_t))  &\leq - \frac{c}{\consta^3  \be{\partial \Phi }^2 \beta^2}d\nu^3\delta^3\eps^3\cdot (P_t -  \delta (1-P_t)) \\
& = \frac{c}{\consta^3  \be{\partial \Phi }^2 \beta^2}d\nu^3\delta^3\eps^3\cdot\left\{ \delta  - (1+ \delta) P_t \right\}\,, 
\end{align}
which after taking sum over $t=0 \dots, T-1$ and rearranging yields
\begin{align}  
\frac{1}{T}\sum_{t=1}^T P_t \leq   \frac{\consta^3  \be{\partial \Phi}^2\beta^2 }{cT d\nu^3\delta^3\eps^3} +   \delta   \leq 2\delta\,.
\end{align}
Hence choosing 
\begin{align}
T= \tth{\frac{\consta^3  \be{\partial \Phi}^2\beta^2 }{d\nu^3\eps^3\delta^4}}\,,
\end{align}   $\E R$ is lower bounded by $1-\oo{\delta}$, which shows that
\autoref{thm:SAM}. 
This shows that $\xh$ is an $(\oo{\eps_0},\sqrt{\eps})$-flat minimum with probability at least $1-\oo{\delta}$.

Now we prove the refinement part. Let $\xh_0 \coloneqq \xh$.
Since $\nu \coloneqq   \min\{d,\eps^{-1/3}\} \leq \eps^{-1/3}$,
\begin{align}
\eps_0= \tth{\frac{\beta^{3/2}}{\alpha \consta^{3/2} \be{\partial \Phi } } \nu^{3/2}\delta^{3/2} \eps} \leq \oo{\sqrt{\eps}}
\end{align}
Hence, from \autoref{lem:local}, it then follows that $\norm{\nabla f(\xh_0)}\leq \oo{\sqrt{\eps}}$ and $f(\x_t) - f(\Phi(\x_t))\leq \oo{\eps}$.
Then, the linear convergence of GD under the PL inequality shows that GD with step size $\oo{\eps}$ finds an point $\xh_{T_0}$ s.t. $\norm{\xh_{T_0} - \Phi(\xh_{T_0})}\leq \eps/2$ in $T_0= \oo{\eps^{-1}\log (1/\eps)}$ steps.
On the other hand,  applying \autoref{lem:Phi_diff} with $\vv=\mathbf{0}$, it holds that 
\begin{align}
\norm{\Phi(\xh_{t+1})-\Phi(\xh_t)} = \oo{\eta^2 \norm{\nabla f(\xh_t)}^2}  =\oo{\eps^3}\,.
\end{align}
Therefore, it follows that
\begin{align}
\norm{\Phi(\xh_{T_0})-\Phi(\xh_0})  =\oo{\eps^3\cdot \eps^{-1} \log (1/\eps)} = \oo{\eps^{-2}\log (1/\eps)}\,.
\end{align}
Thus, we conclude that $\xh_{T_0}$ is  a  $(\eps,\sqrt{\eps})$-flat minimum.
This concludes the proof of \autoref{thm:SAM}.

\section{Proof of Auxiliary Lemmas}

\subsection{Proof of \autoref{lem:descent_gd}}
\label{pf:lem:descent}

Due to the $\beta$-gradient Lipschitz assumption, we have:
\begin{align*}
\loss(\x^+) & \le \loss(\x) + \inp{\nabla \loss(\x)}{  \x^+ -\x } + \frac{\beta}{2} \norm{ \x^+ - \x}^2 \\
&= \loss(\x)  - \eta\norm{\nabla \loss(\x)}^2 + \frac{\eta^2\beta}{2} \norm{\nabla \loss(\x) + \vv}^2\\
&\le \loss(\x) -\frac{1}{2}\eta (2- \eta\beta)\frac{1}{2\beta }\norm{\nabla \loss(\x)}^2 +\frac{\beta\eta^2}{2} \norm{\vv}^2\,.
\end{align*}
Hence, using the fact that $\eta\beta \leq 1$, which implies $-(2-\eta\beta) \leq -1$.

\subsection{Proof of \autoref{lem:local}}
\label{pf:lem:local}
To prove \autoref{lem:local}, it suffices to show the following:
\begin{align} \label{ineq:local}
\norm{\x-\Phi(\x)} 
\leq \sqrt{\frac{2(f(\x)-f(\Phi(\x)))}{\alpha}}\leq   \frac{1}{\alpha } \norm{\nabla f(\x)}  \leq \frac{\beta}{\alpha} \norm{\x-\Phi(\x)}\,.
\end{align} 
The proof essentially follows that of \citep[Lemma B.6]{arora2022understanding}.
We provide a proof below nevertheless to be self-contained.
Since  $\x$ is within $\zeta$-neighborhood of $\Xstar$, \autoref{as:local} implies that $\Phi$ is well-defined, and hence
letting $\x(t)$ be the iterate at time $t$ of a gradient flow starting at $\x$, we have
\begin{align}
\norm{\x-\Phi(\x)} = \norm{\int_{t=0}^\infty  \nabla f(\x(t)){\D t} }  \leq \int_{t=0}^\infty  \norm{\nabla f(\x(t))  } {\D t}\,.
\end{align}
Now due to the Polyak--\L{}ojasiewicz inequality, it holds that $\norm{\nabla f(\x_t)}^2 \geq 2\alpha  (f(\x)-f(\Phi(\x)))$.
Thus, we have 
\begin{align}
\int_{t=0}^\infty  \norm{\nabla f(\x(t))  } {\D t} &\leq \int_{t=0}^\infty  \frac{\norm{\nabla f(\x(t))}^2}{\sqrt{2\alpha  (f(\x)-f(\Phi(\x)))}} {\D t} \overset{(a)}{=} - \int_{t=0}^\infty  \frac{\frac{\D}{\D t}[f(\x(t)) -f(\Phi(\x))]}{\sqrt{2\alpha  (f(\x)-f(\Phi(\x)))}} {\D t}\\
&= -\sqrt{\frac{2}{\alpha}}\int_{t=0}^\infty  \frac{\D}{\D t}\sqrt{f(\x(t)) -f(\Phi(\x))}  {\D t} = \sqrt{\frac{2}{\alpha} (f(\x) -f(\Phi(\x)))}\,,
\end{align}
where $(a)$ follows from the fact
\begin{align}
\frac{\D}{\D t}[f(\x(t)) -f(\Phi(\x))] = \inp{\nabla f(\x(t))}{\dot \x (t)} = -\norm{\nabla f(\x(t))}^2\,.
\end{align}
Hence, we obtain
\begin{align}
\norm{\x-\Phi(\x)} \leq \sqrt{\frac{2}{\alpha} (f(\x) -f(\Phi(\x)))} \leq \frac{\norm{\nabla f(\x)}}{\alpha}  \,, 
\end{align}
where the last inequality is due to the PL condition.
Lastly, we have 
\begin{align}
\frac{1}{\alpha}\norm{\nabla f(\x)} = \frac{1}{\alpha}\norm{\nabla f(\x) - \nabla f(\Phi(\x))} \leq \frac{\beta}{\alpha}\norm{\x-\Phi(\x)}\,,
\end{align}
where the last inequality is due to $\beta$-Lipschitz gradients of $f$. This completes the proof.

\subsection{Proof of \autoref{lem:Phi_diff}}
\label{pf:lem:Phi_diff}

We first prove the first bullet point.
From the smoothness of $\Phi$, we obtain
\begin{align}
\Phi(\x^+) - \Phi(\x) &=\partial \Phi(\x) (-\eta (\nabla f(\x) +\vv )) + \oo{ \be{\partial\Phi} \norm{\x^+ - \x}^2}\\
&\overset{(a)}{=} - \eta \partial \Phi(\x)  \vv +\oo{ \be{\partial\Phi}\eta^2 (\norm{\nabla f(\x)}+ \norm{\vv}^2) } \,,
\end{align}
where in ($a$), we used the fact $\partial \Phi(\x) \nabla f(\x) =0$ from \autoref{lem:Phi}.
This, in particular, implies that 
\begin{align}
\norm{{\Phi(\x^+) - \Phi(\x)}}^2 &\leq  3 \be{\partial \Phi }^2 \eta^2\norm{\vv}^2 + 3\be{\partial\Phi}^2 \eta^4\norm{\nabla f(\x)}^4+ 3\be{\partial\Phi}^2\eta^4\norm{\vv}^4\\
&\leq  4 \be{\partial \Phi }^2 \eta^2\norm{\vv}^2 + 3\be{\partial\Phi}^2\eta^4\norm{\nabla f(\x)}^4\,,
\end{align}
as long as $\eta$ is sufficiently small since $ \eta^4 \norm{\vv}^4$ is a lower order term than $ \eta^2 \norm{\vv}^2$.

Next, we prove the second bullet point. From the smoothness of $f(\Phi)$, we have 
\begin{align}
&f(\Phi(\x^+)) - f(\Phi(\x)) = f(\Phi(\Phi(\x^+))) - f(\Phi(\Phi(\x)))\\
&\leq \inp{ \partial \Phi(\Phi(\x)) \nabla f(\Phi(\x))}{\Phi(\x^+) - \Phi(\x)} + \oo{(\be{\partial^2 \Phi} \be{\nabla f}+ \be{\partial \Phi} \be{\nabla^2 f})\norm{\Phi(\x^+) - \Phi(\x)}^2}\\  
& = \oo{\eta^2 (\be{\partial^2 \Phi} \be{\nabla f}+ \be{\partial \Phi} \be{\nabla^2 f})\be{\partial \Phi }^2 (\norm{\vv}^2 +\eta^2 \norm{\nabla f(\x)}^4)  }
\end{align}
where   ($a$)  used the fact $\partial \Phi(\Phi(\x)) \nabla f(\Phi(\x)) =0$ from \autoref{lem:Phi}.
And the same argument applies for $f(\Phi(\x)) - f(\Phi(\x^+))$, so we get the conclusion.

\subsection{Proof of \autoref{lem:stay_near}}
\label{pf:lem:stay_near}

By \autoref{lem:descent_gd}, we have 
\begin{align}
\loss(\x^+) \le \loss(\x) -\frac{1}{2}\eta   \norm{\nabla \loss(\x)}^2 +\frac{\beta\eta^2}{2}\norm{\vv}^2\,.    
\end{align}
Now we consider two different cases:
\begin{enumerate}
\item First, if $\norm{\nabla \loss(\x)}^2 \leq  2\beta\eta  \norm{\vv}^2$, then \autoref{lem:local} implies that 
\begin{align}
f(\x) -f(\Phi(\x)) \leq \frac{1}{2\alpha} \norm{\nabla f(\x)}^2 \leq \frac{\beta}{\alpha} \eta \norm{\vv}^2\,.
\end{align}
Hence, it follows that 
\begin{align}
f(\x^+) - f(\Phi(\x^+)) &\leq  f(\x) - f(\Phi(\x^+)) + \frac{\beta\eta^2}{2}\norm{\vv}^2\\
&\leq f(\x) - f(\Phi(\x)) + \frac{\beta\eta^2}{2}\norm{\vv}^2 + \oo{\eta^2\norm{\vv}^2}\\
&\leq \frac{\beta}{\alpha} \eta \norm{\vv}^2 + \oo{\eta^2 \norm{\vv}^2} \leq  \frac{2\beta}{\alpha} \eta \norm{\vv}^2\,,
\end{align}
as long as $\eta$ is sufficiently small.

\item  On the other hand if $ \norm{\nabla \loss(\x)}^2 \geq  2\beta\eta  \norm{\vv}^2$, then we have $f(\x^+) -f(\x) \leq - \frac{1}{2}\beta \eta \norm{\vv}^2$.
Next, from \autoref{lem:Phi_diff}, it holds that 
\begin{align}
|f(\Phi(\x^+)) - f(\Phi(\x))| =  \oo{\eta^2 \norm{\vv}^2}\,,
\end{align}
as $ \eta^4 \norm{\nabla f(\x)}^2 =\oo{\eta^5 \norm{\vv}^2}$ and $\eta^4\norm{\vv}^4$ are both lower order terms.
Thus, it follows that 
\begin{align}
f(\x^+) - f(\Phi(\x^+)) &\leq 
f(\x) - f(\Phi(\x)) - \frac{1}{2}\beta\eta\norm{\vv}^2 + \oo{\eta^2\norm{\vv}^2} \\
&\leq  f(\x) - f(\Phi(\x)) -\frac{1}{4} \beta\eta\norm{\vv}^2\,,
\end{align}
as long as $\eta$ is sufficiently small.
\end{enumerate}
Combining these two cases, we get the desired conclusion.

% \subsection{Proof of \autoref{lem:third}}
% \label{pf:lem:third}
% By triangle inequality, we have
% \begin{align}
%      &\norm{ \partial\Phi(\x)\nabla^3 f(\x) \left[\per,\per\right] - \partial\Phi(\Phi(\x))\nabla^3 f(\Phi(\x)) \left[\per,\per\right] } \\
%      &\quad \leq \norm{ \partial\Phi(\x)\nabla^3 f(\x) \left[\per,\per\right] - \partial\Phi(\Phi(\x))\nabla^3 f(\x) \left[\per,\per\right]}\\
%      &\quad\quad+ \norm{ \partial\Phi(\Phi(\x))\nabla^3 f(\x) \left[\per,\per\right]  -    \partial\Phi(\Phi(\x))\nabla^3 f(\Phi(\x)) \left[\per,\per\right] }\\
%      &\quad \leq \be{\partial^2\Phi} \be{\nabla^3 f}\norm{\x -\Phi(\x)} +\be{\partial \Phi}  \be{\nabla^4 f} \norm{\x -\Phi(\x)}\,,
% \end{align}
% as desired.    

\subsection{Proof of \autoref{lem:trace_gradient}}
\label{pf:lem:trace_gradient}

Note that  by Taylor expansion,  we have 
\begin{align}
\nabla f(\x_t + \rho \per_t) =  \nabla f(\x_t)  + \rho \nabla^2 f(\x_t) \per_t + \frac{1}{2}\rho^2 \nabla^3 f(\x_t) \left[\per_t,\per_t\right] + \oo{\frac{1}{6}\be{\nabla^4 f}\rho^3}
\end{align}
This implies that  
\begin{align}
\vv_t &= \proj^\perp_{\nabla \loss (\x_t)} \nabla \loss \left( \x_t +\rho \per_t \right)  =     \proj^\perp_{\nabla \loss (\x_t)} \left[ \nabla \loss \left( \x_t +\rho  \per_t \right) -\nabla f(\x_t) \right] \\
&=  \proj^\perp_{\nabla \loss (\x_t)} \left[\rho \nabla^2 \loss (\x_t) \per_t + \frac{1}{2}\rho^2 \nabla^3 f(\x_t) \left[\per_t,\per_t\right] +\oo{\frac{1}{6}\be{\nabla^4f} \rho^3} \right]  \,.
\end{align}
Now from \autoref{lem:Phi}, $\partial \Phi (\x) \nabla f(\x) =\mathbf{0}$ for any $\x$ in the $\zeta$-neighborhood of $\Xstar$, it follows
\begin{align}
\partial\Phi(\x_t) \E \vv_t &=   \frac{1}{2}\rho^2 \partial\Phi(\x_t) \E\nabla^3 f(\x_t) \left[\per_t,\per_t\right] + \oo{\frac{1}{6}\be{\partial \Phi}\be{\nabla^4f} \rho^3} \\
& \overset{(a)}{=}  \frac{1}{2}\rho^2 \partial\Phi(\x_t) \nabla  \E\tr\left(\nabla^2 f(\x_t) \per_t \per_t^\top\right) + \oo{\frac{1}{6}\consta\rho^3} \\
& \overset{(b)}{=}   \frac{1}{2}\rho^2 \partial\Phi(\x_t) \nabla  \tr\left(\nabla^2 f(\x_t)  \frac{1}{d}\mathbf{I}_d \right) + \oo{\frac{1}{6}\consta \rho^3} \\
&=\frac{1}{2}\rho^2 \partial\Phi(\x_t) \nabla  \ttr\left(\x_t  \right) + \oo{\frac{1}{6}\consta \rho^3}
\end{align}
where $(a)$ is due to the fact that  $\nabla^3 f(\x) \left[\per_t,\per_t \right] = \nabla (\nabla^2 f(\x) \left[\per_t,\per_t \right]) = \nabla  \tr\left(\nabla^2 f(\x) \per_t \per_t^\top\right)$ for any $\x$, and $(b)$ uses the fact that $\E[\per_t \per_t^\top] = \frac{1}{d} \mathbf{I}_d$. 
Now due to $\consta$-Lipschitzness of $\partial\Phi(\cdot) \nabla  \ttr\left(\cdot \right)$, we have
\begin{align}
\partial\Phi(\x_t) \E \vv_t &=   \frac{1}{2}\rho^2 \partial\Phi(\Phi(\x_t)) \nabla  \ttr\left(\Phi(\x_t) \right) + \oo{\frac{1}{2}\rho^2 \consta \norm{\x_t -\Phi(\x_t)}} + \oo{\frac{1}{6}\consta \rho^3}\\
&=   \frac{1}{2}\rho^2 \partial\Phi(\Phi(\x_t)) \nabla  \ttr\left(\Phi(\x_t) \right) + \oo{\frac{1}{3} \consta \rho^3 }\,,
\end{align}
where the last line is due to \eqref{eq:dist to opt}, which implies $\consta \rho^2  \norm{\x_t- \Phi(\x_t)}  = \ooo{\consta \rho^3}$ as $\norm{\x_t- \Phi(\x_t)} =\oo{\eps_0} =\ooo{\rho}$.
This completes the proof.

\subsection{Proof of \autoref{lem:trace_descent_formal}}
\label{pf:lem:trace_descent_formal}

First, note from \autoref{lem:Phi_diff} and \eqref{eq:dist to opt} that 
\begin{align}
\Phi(\x_{t+1}) - \Phi(\x_t) &=  - \eta \partial \Phi(\x_t)  \vv_t + \oo{\be{\partial\Phi}\beta^2\eta^2\rho^2 } \,,\\
\norm{{\Phi(\x_{t+1}) - \Phi(\x_t)}}^2 &\leq  6\be{\partial \Phi }^2  \beta^2
\eta^2\rho^2 \,.
\end{align} 

Throughout the proof, we will use the notation $\partr \coloneqq \partial \Phi(\Phi(\x_t)) \nabla \ttr(\Phi(\x_t))$.
Then from the $\consta$-smoothness of $\ttr(\Phi)$,  and the fact that $\Phi(\Phi(\x_t))=\Phi(\x_t)$ it follows that
\begin{align}  
\ttr(\Phi(\x_{t+1})) - \ttr(\Phi(\x_t)) &= \ttr(\Phi(\Phi(\x_{t+1}))) - \ttr(\Phi(\Phi(\x_t))) \\
&\leq \inp{ \partr}{\Phi(\x_{t+1}) -\Phi(\x_t)} + \frac{1}{2}\consta \norm{\Phi(\x_{t+1})-\Phi(\x_t)}^2\\
&\leq \inp{ \partr }{- \eta \partial \Phi(\x_t)  \vv_t } +3 \consta  \be{\partial \Phi }^2 \beta^2
\eta^2\rho^2   \,.
\end{align} 
Applying \autoref{lem:trace_gradient}, we obtain 
\begin{align}  
&\E\ttr(\Phi(\x_{t+1})) - \ttr(\Phi(\x_t)) \leq - \frac{\eta\rho^2}{2} \norm{\partr}^2  +    \frac{1}{3}\consta \norm{\partr}\eta \rho^3 +3\consta  \be{\partial \Phi }^2 \beta^2
\eta^2\rho^2    \,.
\end{align} 
Now for a constant $c>0$, consider the following  the parameter choice \eqref{exp:choice}:
\begin{align}  
{   \eta ={\frac{1}{c\consta  \be{\partial \Phi }^2 \beta^2}\delta\eps }, \quad\quad \rho={\frac{1}{c\consta} \delta \sqrt{\eps}}}\,.
\end{align}
From this choice, it follows that
\begin{align}
- \frac{\eta\rho^2}{2} \norm{\partr}^2 &= - {    \frac{1}{2c^3\consta^3 \be{\partial \Phi }^2 \beta^2 }\delta^3 \eps^{2} \norm{\partr}^2}  \\ 
\consta \norm{\partr}\eta \rho^3  &= {    \frac{1}{c^4\consta^3 \be{\partial \Phi }^2 \beta^2 }\delta^4 \eps^{2.5} \norm{\partr}}\,,\\
\consta  \be{\partial \Phi }^2 \beta^2
\eta^2\rho^2  &= {    \frac{1}{c^4\consta^3 \be{\partial \Phi }^2 \beta^2 }\delta^4 \eps^{3}}
\end{align}
Hence, by choosing the constant $c$ appropriately large, one can thus ensure that
\begin{align}  
&\E\ttr(\Phi(\x_{t+1})) - \ttr(\Phi(\x_t)) \leq    \frac{1}{2c^3\consta^3 \be{\partial \Phi }^2 \beta^2 }\delta^3 \eps^{2} \left( -\norm{\partr}^2 + \frac{1}{4}\delta\eps^{1/2} \norm{\partr} +\frac{1}{4}\delta\eps  \right) \,.
\end{align} 
This completes the proof of \autoref{lem:trace_descent_formal}.

\subsection{Proof of \autoref{lem:trace_gradient_sam}}
\label{pf:lem:trace_gradient_sam}
For simplicity, let $\per_{i,t} \coloneqq  \frac{\nabla f_i(\x_t)}{  \norm{\nabla f_i(\x_t) }}$.
Note that  by Taylor expansion,  we have 
\begin{align}
\nabla f_i(\x_t + \rho \per_{i,t}) =  \nabla f_i (\x_t)  + \rho \nabla^2 f_i(\x_t) \sigma_t\per_{i,t} + \frac{1}{2}\rho^2 \nabla^3 f_i(\x_t) \left[\per_{i,t},\per_{i,t}\right] + \oo{\be{\nabla^4 f_i}\rho^3}
\end{align}
Using the facts that  $\partial \Phi (\x_t) \nabla f(\x_t) =\mathbf{0}$ (\autoref{lem:Phi}), we have $\partial\Phi(\x_t) \proj^\perp_{\nabla \loss (\x_t)} =\partial\Phi(\x_t)$, so the above equation implies that  
\begin{align}
\partial\Phi(\x_t) \vv_t =  \partial\Phi(\x_t) \left[\nabla f_i (\x_t)  + \rho \nabla^2 f_i(\x_t) \sigma_t\per_{i,t} + \frac{1}{2}\rho^2 \nabla^3 f_i(\x_t) \left[\per_{i,t},\per_{i,t}\right] + \oo{\be{\nabla^4 f_i}\rho^3}\right]  \,.
\end{align}
Taking expectation on both sides, we have the first two terms above vanish because $\E\partial\Phi(\x_t)  \nabla f_i (\x_t)= \partial\Phi(\x_t)  \nabla f (\x_t) =\mathbf{0}$ and $\E[\sigma_t]=0$. 
Thus, using the $\consta$-Lipschitzness of $\partial\Phi(\cdot) \nabla \tr(\nabla^2 f_i(\cdot) \per\per^\top )$ for a unit vector $\per$, we obtain
\begin{align}
\partial\Phi(\x_t) \E \vv_t &=   \frac{1}{2}\rho^2 \partial\Phi(\x_t) \E\nabla^3 f_i(\x_t) \left[\per_{i,t},\per_{i,t}\right] + \oo{\be{\partial \Phi}\be{\nabla^4f_i} \rho^3} \\
&{=}   \frac{1}{2}\rho^2 \partial\Phi(\x_t) \nabla   \E\tr \left(\nabla^2 f_i(\x_t) \per_{i,t}\per_{i,t}^\top \right)   +\oo{\consta \rho^3}\\
&{=}   \frac{1}{2}\rho^2 \partial\Phi(\Phi(\x_t)) \nabla   \E\tr \left(\nabla^2 f_i(\Phi(\x_t)) \per_{i,t}\per_{i,t}^\top \right)   +\oo{\frac{1}{2}\consta \rho^2\norm{\x_t-\Phi(\x_t)
}}+\oo{\consta \rho^3}\\
&{=}   \frac{1}{2}\rho^2 \partial\Phi(\Phi(\x_t)) \nabla   \E\tr \left(\nabla^2 f_i(\Phi(\x_t)) \per_{i,t}\per_{i,t}^\top \right)   +\oo{ \consta \rho^3}\,,  \label{eq:last for sam}
\end{align} 
where the last line is due to \eqref{eq:dist to opt}, which implies $\consta \rho^2  \norm{\x_t- \Phi(\x_t)}  = \ooo{\consta \rho^3}$ as $\norm{\x_t- \Phi(\x_t)} =\oo{\eps_0} =\ooo{\rho}$.
As we discussed in \autoref{sec:pfsketch_sam}, now the punchline of the proof is that at a minimum $\x^\star \in \Xstar$, the Hessian is given as 
\begin{align}
\nabla^2 f(\x^\star) =   \frac{1}{n}  \sum_{i=1}^n  \left[\frac{\partial^2\ell(z,y_i)}{\partial ^2 z}\Big\vert_{z=\mm_i(\x^\star) } \nabla \mm_i(\x^\star) \nabla \mm_i(\x^\star)^\top  \right] \,.
\end{align}
Hence, using the notations   
\begin{align}
\pp_i(\x)\coloneqq \frac{\nabla \mm_i(\x)}{\norm{\nabla \mm_i(\x)}}\quad \text{and}\quad
\lambda_i(\x) = \frac{1}{n} \cdot \frac{\partial^2\ell(z,y_i)}{\partial ^2 z}\Big\vert_{z=\mm_i(\x) } \cdot \norm{\nabla \mm_i(\x)}^2\,,
\end{align}
one can write the Hessians at a minimum $\x^\star \in \Xstar$  as 
\begin{align}\label{exp:hessians}
\nabla^2 f(\x^\star) =     \sum_{i=1}^n \lambda_i(\x^\star) \pp_i(\x^\star)\pp_i(\x^\star)^\top \quad \text{and} \quad    \nabla^2 f_i(\x^\star) =     \textcolor{red}{n}\lambda_i(\x^\star)\pp_i(\x^\star)\pp_i(\x^\star)^\top, \forall i \,.
\end{align} 
In particular, it follows that 
\begin{align} \label{exp:tr_hessian_sam}
\tr(\nabla^2 f(\x^\star)) = \sum_{i=1}^n \tr\left(\lambda_i(\x^\star) \pp_i(\x^\star)\pp_i(\x^\star)^\top\right) = \sum_{i=1}^n \lambda_i(\x^\star)\,.
\end{align}
Note that since  $\nabla f_i(\x_t) = \frac{\partial\ell(z,y_i)}{\partial z}\vert_{z=\mm_i(\x_t) } \nabla \mm_i (\x_t)$, we have $ \per_{i,t} = \frac{\nabla f_i(\x_t)}{\norm{\nabla f_i(\x_t)}} = \frac{\nabla \mm_i(\x_t)}{\norm{\nabla \mm_i(\x_t)}} = \pp_i(\x_t)$.
Using this fact together with the above expressions for the Hessians \eqref{exp:hessians}, one can further manipulate the expression for $\partial\Phi(\x_t) \E \vv_t$ in \eqref{eq:last for sam} as follows:
\begin{align}
\partial\Phi(\x_t) \E \vv_t &=  \frac{1}{2}\rho^2 \partial\Phi(\Phi(\x_t)) \nabla   \E\tr \left(n\lambda_i(\Phi(\x_t))\pp_i(\Phi(\x_t))\pp_i(\Phi(\x_t))^\top \pp_i(\x_t)\pp_i(\x_t)^\top \right)   +\oo{ \consta \rho^3}\\
&\overset{(a)}{=}  \frac{1}{2}\rho^2 \partial\Phi(\Phi(\x_t)) \nabla   \E\left[ n\lambda_i(\Phi(\x_t))(1+ \constb\norm{\x_t-\Phi(\x_t)})^2  \right]   +\oo{ \consta \rho^3}\\ 
&=  \frac{1}{2}\rho^2 \partial\Phi(\Phi(\x_t)) \nabla    \left[ \sum_{i=1}^n\lambda_i(\Phi(\x_t))(1+ \constb\norm{\x_t-\Phi(\x_t)})^2 \right]   +\oo{ \consta \rho^3}\\
&=  \frac{1}{2}d\rho^2 \partial\Phi(\Phi(\x_t)) \nabla    \left[ \frac{1}{d}\sum_{i=1}^n\lambda_i(\Phi(\x_t))(1+ \constb\norm{\x_t-\Phi(\x_t)})^2 \right]   +\oo{ \consta \rho^3}\\
&\overset{(b)}{=}  \frac{1}{2}d\rho^2 \partial\Phi(\Phi(\x_t)) \nabla     \ttr(\Phi(\x_t)) + \oo{\constb\constc d\rho^2 \norm{\x_t-\Phi(\x_t)}}    +\oo{ \consta \rho^3} \,,
\end{align} 
where in $(a)$, we use the fact   $\pp_i(\x_t) = \pp_i(\Phi(\x_t)) + \oo{\constb \norm{\x_t - \Phi(\x_t)}}$, and $\pp_i(\Phi(\x_t))$ is well-defined since we assumed that $\nabla \mm_i(\x)\neq \mathbf{0}$ for $\x\in \Xstar$, $\forall i=1,\dots,n$, and $(b)$ is due to \eqref{exp:tr_hessian_sam}.  This completes the proof since $\norm{\x_t-\Phi(\x_t)} = \oo{\eps_0}$ from the condition \eqref{eq:dist to opt_sam}.

\subsection{Proof of \autoref{lem:trace_descent_formal_sam}} 
\label{pf:lem:trace_descent_formal_sam}

Throughout the proof, we will use the notation $\partr \coloneqq \partial \Phi(\Phi(\x_t)) \nabla \ttr(\Phi(\x_t))$.
Similarly to \autoref{pf:lem:trace_descent_formal}, we have 
\begin{align}  
\ttr(\Phi(\x_{t+1})) - \ttr(\Phi(\x_t))  \leq \inp{ \partr }{- \eta \partial \Phi(\x_t)  \vv_t } +\oo{ \consta  \be{\partial \Phi }^2 \beta^2
\eta^2\rho^2 }  \,.
\end{align}
Applying \autoref{lem:trace_gradient_sam}, we then obtain 
\begin{align}  
&\E\ttr(\Phi(\x_{t+1})) - \ttr(\Phi(\x_t)) \leq - \frac{d\eta\rho^2}{2} \norm{\partr}^2  +  \oo{ \constb\constc d\eta\rho^2 \eps_0 \norm{\partr} +\consta \norm{\partr}\eta \rho^3 +\consta  \be{\partial \Phi }^2 \beta^2
\eta^2\rho^2  } \,.
\end{align} 
Now for a constant $c>0$, consider the following  the parameter choice \eqref{exp:choice_sam}:
\begin{align}  
\eta ={\frac{1}{c\consta  \be{\partial \Phi }^2 \beta^2}\nu\delta\eps  , \quad\quad \rho={\frac{1}{c\consta} \nu\delta \sqrt{\eps}}}\,,\quad\quad \eps_0=  \frac{\beta^{1.5}}{c^3\alpha \consta^{1.5} \be{\partial \Phi } } \nu^{1.5}\delta^{1.5} \eps 
\end{align}
From this choice, together with the fact $\nu^{1.5} = \min\{d,\eps^{-1/3}\}^{1.5} \leq \eps^{-1/2}$, it follows that 
\begin{align}
- \frac{\eta\rho^2}{2} \norm{\partr}^2 &= - {    \frac{1}{2c^3\consta^3 \be{\partial \Phi }^2 \beta^2 }d \nu^3\delta^3  \eps^{2} \norm{\partr}^2}  \\
\constb\constc d\eta\rho^2 \eps_0 \norm{\partr} &  =\oo{\frac{1}{c^6}d\nu^{4.5} \delta^{4.5} \eps^3 \norm{\partr} } =\oo{\frac{1}{c^6}d\nu^{3} \delta^{4.5} \eps^{2.5} \norm{\partr} }\\
\consta \norm{\partr}\eta \rho^3  &= {    \frac{1}{c^4\consta^3 \be{\partial \Phi }^2 \beta^2 }\nu^4\delta^4 \eps^{2.5} \norm{\partr}}\,,\\
\consta  \be{\partial \Phi }^2 \beta^2
\eta^2\rho^2  &= {    \frac{1}{c^4\consta^3 \be{\partial \Phi }^2 \beta^2 } \nu^4\delta^4 \eps^{3}}
\end{align}
Hence,  using the fact that $\nu = \min\{d,\eps^{-1/3}\}\leq d$ and by choosing the constant $c$ appropriately large, one can thus ensure that
\begin{align}  
&\E\ttr(\Phi(\x_{t+1})) - \ttr(\Phi(\x_t)) \leq    \frac{1}{2c^3\consta^3 \be{\partial \Phi }^2 \beta^2 }d\nu^3\delta^3 \eps^{2} \left( - \norm{\partr}^2 + \frac{1}{4} \delta\eps^{0.5} \norm{\partr} +\frac{1}{4} \delta\eps  \right) \,.
\end{align} 
This completes the proof of \autoref{lem:trace_descent_formal_sam}.

\end{document}